\documentclass[10pt,a4paper]{article}
\usepackage[utf8]{inputenc}

\usepackage{amsmath}
\usepackage{amssymb}
\usepackage{amsthm}
\usepackage{amsfonts}
\usepackage{bm}
\usepackage{stmaryrd,mathtools}
\usepackage[langfont=typewriter,funcfont=slant]{complexity}
\usepackage[left=1in,bottom=1in,top=1in,right=1in]{geometry}
\usepackage[colorlinks=true]{hyperref}
\usepackage{framed}
\usepackage{algorithm}
\usepackage[noend]{algpseudocode}
\usepackage{soul}
\usepackage{thmtools,thm-restate}
\usepackage{csquotes}
\usepackage[nameinlink, noabbrev, capitalize]{cleveref}
\usepackage{booktabs}       
\usepackage{nicefrac}       
\usepackage{microtype}
\usepackage[dvipsnames, svgnames, x11names]{xcolor} 
\usepackage{enumerate,outlines}

\let\oldappendix\appendix
\renewcommand{\appendix}{%
  \oldappendix%
  \crefalias{section}{appendix}%
  \crefalias{subsection}{appendix}%
}

\hypersetup{
    colorlinks=true,
    linkcolor=Mulberry,       
    citecolor=DarkBlue,       
    urlcolor=DarkBlue         
}

\usepackage[disable]{todonotes} 
\presetkeys{todonotes}{inline}{}

\makeatletter
\if@todonotes@disabled

\else

\fi
\makeatother

\usepackage{commath} 

\usepackage{framed}

\theoremstyle{definition}

\theoremstyle{remark}
\newtheorem*{remark}{Remark}

\theoremstyle{plain}
\newtheorem{theorem}{Theorem}[section]

\theoremstyle{plain}

\theoremstyle{plain}

\theoremstyle{plain}
\newtheorem{lemma}[theorem]{Lemma}

\newclass{\NPH}{NPH}

\newcommand{\Secref}[1]{\hyperref[#1]{Section \ref*{#1}}}
\newcommand{\Appref}[1]{\hyperref[#1]{Appendix \ref*{#1}}}

\crefname{equation}{}{}
\crefrangeformat{equation}{(#3#1#4) --~(#5#2#6)}
\crefmultiformat{equation}{(#2#1#3)}{, (#2#1#3)}{, (#2#1#3)}{, (#2#1#3)}
\crefname{lemma}{Lemma}{Lemmas}
\crefname{section}{Section}{Sections}
\crefname{subsubsubsection}{Section}{Sections}
\crefname{remark}{Remark}{Remarks}
\crefname{figure}{Figure}{Figures}
\crefname{table}{Table}{Tables}
\Crefname{lemma}{Lemma}{Lemmas}
\crefname{theorem}{Theorem}{Theorems}
\Crefname{theorem}{Theorem}{Theorems}

\DeclareMathOperator*{\argmin}{\arg\!\min}

\newcommand{\Eb}{\mathbb{E}}
\newcommand{\Ib}{\mathbb{I}}
\newcommand{\Nb}{\mathbb{N}}
\newcommand{\Pb}{\mathbb{P}}

\newcommand{\Ac}{\mathcal{A}}
\newcommand{\Fc}{\mathcal{F}}

\newcommand{\Uc}{\mathcal{U}}
\newcommand{\Xc}{\mathcal{X}}

\newcommand{\Dsf}{\mathsf{D}}
\newcommand{\Ysf}{\mathsf{YES}}
\newcommand{\Nsf}{\mathsf{NO}}
\newcommand{\Tsf}{\mathsf{T}}
\newcommand{\OPT}{\mathsf{OPT}}
\newcommand{\RCN}{\mathsf{RCN}}
\newcommand{\MM}{\mathsf{MM}}

\newcommand{\SHT}{\mathsf{SHT}}
\newcommand{\MSHT}{\mathsf{MSHT}}
\newcommand{\Csf}{\mathsf{C}}
\newcommand{\Lsf}{\mathsf{L}}

\newcommand{\qbf}{\mathbf{q}}

\DeclareMathOperator{\Ber}{Ber}                     		
\DeclareMathOperator{\HG}{HG}                        		
\DeclareMathOperator{\Bin}{Bin}                        		
\DeclareMathOperator{\Pois}{Pois}                           

\DeclareMathOperator{\err}{err}
\DeclareMathOperator{\emperr}{\widehat{err}}

\DeclareMathOperator{\TV}{TV}
\DeclareMathOperator{\Var}{Var}
\DeclareMathOperator{\VC}{VCdim}

\newcommand{\iid}{\overset{iid}{\sim}}
\newcommand{\tm}[1]{^{(#1)}}
\DeclarePairedDelimiter{\llbr}{\llbracket}{\rrbracket}
\DeclarePairedDelimiter\ceil{\lceil}{\rceil}

\DeclareMathOperator{\ERM}{ERM}
\DeclareMathOperator{\Test}{Test}
\DeclareMathOperator{\RCNTest}{RCN-Test}
\DeclareMathOperator{\MMTest}{MM-Test}

\DeclareMathOperator{\Cond}{Cond}

\newif\ifcolt 
\coltfalse

\title{Is Multi-Distribution Learning as Easy as PAC Learning: \\ 
Sharp Rates with Bounded Label Noise}
\author{
    Rafael Hanashiro \\ MIT \\ \texttt{rafah@mit.edu}
    \and
    Abhishek Shetty \\ MIT \\ \texttt{shetty@mit.edu}
    \and
    Patrick Jaillet \\ MIT \\ \texttt{jaillet@mit.edu} 
}

\begin{document}
\maketitle

\begin{abstract}

Towards understanding the statistical complexity of learning from heterogeneous sources, we study the problem of multi-distribution learning. Given $k$ data sources, the goal is to output a classifier for each source by exploiting shared structure to reduce sample complexity. We focus on the bounded label noise setting to determine whether the fast $1/\epsilon$ rates achievable in single-task learning extend to this regime with minimal dependence on $k$.
Surprisingly, we show that this is not the case. We demonstrate that learning across $k$ distributions inherently incurs slow rates scaling with $k/\epsilon^2$, even under constant noise levels, unless each distribution is learned separately. A key technical contribution is a structured hypothesis-testing framework that captures the statistical cost of certifying near-optimality under bounded noise—a cost we show is unavoidable in the multi-distribution setting.

Finally, we prove that when competing with the stronger benchmark of each distribution's optimal Bayes error, the sample complexity incurs a \textit{multiplicative} penalty in $k$. This establishes a \textit{statistical} separation between random classification noise and Massart noise, highlighting a fundamental barrier unique to learning from multiple sources.




\end{abstract}
\section{Introduction}

\ifcolt \input{colt_sections/colt_intro_start} \else Data heterogeneity inherent in learning from multiple sources is a fundamental challenge in modern machine learning. In federated learning, data resides across numerous devices or institutions~\cite{kairouz2021advances}; in algorithmic fairness, models must perform well across diverse demographic groups~\cite{Sagawa*2020-group-dro}; in domain adaptation, one must reason about distinct but related distributions~\cite{ben2010theory}; and in language modeling, the composition of pretraining data sources is critical. These applications have sparked significant interest in understanding how shared structure can be leveraged to minimize data requirements, with the ideal goal being that learning across heterogeneous sources incurs only a mild statistical overhead.

This motivates the study of \textit{Multi-Distribution Learning (MDL)}, where a learner must perform well across $k$ distributions simultaneously. Remarkably, in both the realizable and agnostic settings, MDL admits sample complexities of $\tilde\Theta\del{\frac{d+k}{\epsilon}}$ and $\tilde\Theta\del{\frac{d+k}{\epsilon^2}}$, respectively, for classes of VC dimension $d$~\cite{blum-collab-pac,nika-on-demand,zhang-optim-mdl,peng-mdl}. This reflects only an \textit{additive} overhead in $k$ relative to single-distribution learning, suggesting that in these classical settings, learning from heterogeneous sources is essentially ``as easy as PAC learning.''

However, the realizable and agnostic settings represent extremes of label quality. A more realistic scenario involves labels that are corrupted, but not entirely adversarially. This is captured by the \textit{bounded noise} model~\cite{Massart2006-rcn}, which assumes there exists a function $f^*\in\Fc$ such that\footnote{We will often write $P\del{\del{X,Y}\in A}$ as shorthand for $\Pb_{\del{X,Y}\sim P}\del{\del{X,Y}\in A}$.}
\begin{align*}
    P\del{f^*\del{X}\neq Y\vert X=x} \leq \eta < \frac{1}{2}
\end{align*}
for all $x\in\Xc$. Here, $f^*$ is the \textit{Bayes classifier} and attains the minimum possible error. In the standard single-distribution PAC setting~\cite{valiant-pac}, empirical risk minimization (ERM) achieves a sample complexity of $\tilde O\del{\frac{d}{\epsilon\del{1-2\eta}}}$. Notably, if we treat $\eta$ as constant, this recovers the fast $1/\epsilon$ realizable rate. Given the precedent that MDL incurs only a small overhead in $k$, one might hope that these fast rates persist under label noise as well.

In this work, we answer this question in the negative: we show that under bounded label noise, the cost of handling multiple distributions can be significantly higher than in the single-distribution setting.

 \fi

\paragraph{Notation.}
We define $\sbr{n} \coloneqq \cbr{1,2,\dots,n}$ and $\llbr{n} \coloneqq \cbr{0,1,\dots,n}$. We use $\lesssim$ and $\gtrsim$ to denote inequalities up to universal constants, and $\tilde O,\tilde \Omega, \tilde\Theta$ to hide logarithmic factors. For a distribution $P$ over $\Xc\times\cbr{0,1}$ and a function $f:\Xc\to\cbr{0,1}$, we define the classification error $\err\del{f;P} \coloneqq P\del{f\del{X}\neq Y}$.

\ifcolt \input{colt_sections/colt_prob_setup} \else \subsection{Problem Setup}
We assume sample access to $k$ distributions $P_1,\dots,P_k$ over $\Xc\times\cbr{0,1}$, and assume there exists a \textit{shared} Bayes classifier $f^*\in\Fc$ such that, for each $i\in\sbr{k}$ and $x\in\Xc$,
\begin{align*}
    P_i\del{f^*\del{X}\neq Y\middle\vert X=x} \leq \eta_i < \frac{1}{2}
\end{align*}
We assume that the noise upper bounds $\eta_1,\dots,\eta_k$ are known, but the optimal errors $\eta_i^* \coloneqq \err\del{f^*;P_i} \leq \eta_i$ are unknown. Note that setting all $\eta_i$ to $0$ recovers the standard realizable setting, whereas removing the existence assumption on $f^*$ yields the agnostic regime.

Our goal is to adaptively sample from the $P_i$ and output decisions $\hat f_1,\dots,\hat f_k:\Xc\to\cbr{0,1}$ that compete with prescribed benchmarks $\OPT_1,\dots,\OPT_k$, in the sense that
\begin{align}
    \Pb\del{ \max_{i\in\sbr{k}}\cbr{ \err\del{\hat f_i;P_i} - \OPT_i } \leq \epsilon } \geq 1-\delta \label{eq:gen-MDL-obj}
\end{align}
This is known as the \textit{personalized} setting, as opposed to \textit{centralized}, since we are allowed to output a different decision per distribution. We focus on three separate regimes.

\paragraph{RCN.}
Recall that under RCN, the pointwise error is constant, $P_i\del{f^*\del{X}\neq Y\middle\vert X=x} = \eta_i$, implying the optimal error is exactly $\eta_i$. Motivated by this, we introduce the benchmark:
\begin{align}
    &\OPT_i = \eta_i \tag{MDL-RCN}\label{eq:MDL-RCN}
\end{align}
Here, the learner aims to compete with the known upper bounds $\eta_i$, even though the true optimal errors $\eta_i^*$ may be strictly smaller. We retain distinct indices $\eta_i$ for completeness, but note that their heterogeneity does not fundamentally alter the hardness of the problem; one may effectively treat them as equal.

\paragraph{Minimax.}
The standard benchmark in MDL is the minimax risk $\eta^* \coloneqq \max_{i\in\sbr{k}}\eta_i^*$, corresponding to the best worst-case error a single hypothesis can achieve across all distributions. We adopt this as our second benchmark:
\begin{align}
    &\OPT_i = \eta^* \tag{MDL-MM}\label{eq:MDL-MM}
\end{align}

\paragraph{Massart.}
Finally, we consider the most fine-grained objective, where the learner must compete with the true optimal error of each distribution individually:
\begin{align}
    &\OPT_i = \eta_i^* \tag{MDL-Mass}\label{eq:MDL-Mass}
\end{align}
Unlike the minimax setting, this requires the learner to adapt to the specific noise level of every distribution. Evidently, this variant is at least as hard as the other two, as the learner targets the strictest possible benchmark for each instance.

For any regime, we say that an algorithm $\Ac$ has sample complexity $T_\Ac:\del{0,1}^{2k+2}\to\Nb$ if, given any instance $\del{\Fc, P_{1:k}, \eta^*_{1:k}, \eta_{1:k},\epsilon,\delta}$, it satisfies~\eqref{eq:gen-MDL-obj} using at most $T_\Ac\del{\eta_{1:k}^*, \eta_{1:k}, \epsilon, \delta}$ samples in total across all distributions. We omit parameters from the input when there is no dependence on them. For convenience, we additionally define $\eta\coloneqq\max_{i\in\sbr{k}}\eta_i$. \fi

\ifcolt \input{colt_sections/colt_short_related_work} \fi

\ifcolt \input{colt_sections/colt_overview} \else \subsection{Overview and Contributions}
We provide a near-complete characterization of the sample complexity for MDL under bounded label noise. Along the way, we introduce a structured hypothesis testing problem that may be of independent interest.

\paragraph{Upper Bounds (\Cref{sec:ub}).}
We develop a meta-algorithm (\Cref{alg:noisy-pers}) for the first two variants, following the iterative approach of~\cite{blum-collab-pac}. In each round, the algorithm learns at least half of the active distributions, subsequently identifying them via a testing subroutine. Under~\eqref{eq:MDL-RCN}, the target $\eta_i$ are known, allowing for direct certification of optimality. This yields an upper bound
\begin{align*}
    \tilde O\del{ \frac{d}{\epsilon\del{1-2\eta}}+\sum_{i=1}^k \frac{\epsilon+\eta_i}{\epsilon^2} } \tag*{(\Cref{thm:MDL-RCN-ub})}
\end{align*}
The minimax variant is more subtle, as the benchmark $\eta^*$ is unknown. To address this, we devise a procedure that estimates $\eta^*$ in increments and incurs only a logarithmic overhead. This results in an~\eqref{eq:MDL-MM} upper bound
\begin{align*}
    \tilde O\del{ \frac{d}{\epsilon\del{1-2\eta}}+\frac{k\del{\epsilon+\eta^*}}{\epsilon^2} } \tag*{(\Cref{thm:MDL-MM-ub})}
\end{align*}
In both settings, the first term reflects the statistical cost of learning, while the second captures the testing complexity. Since the latter scales quadratically with $\epsilon$, it can be preferable to learn each distribution independently, which would require $\tilde O\del{\sum_{i=1}^k\frac{d}{\epsilon\del{1-2\eta_i}}}$ samples. \Cref{alg:noisy-pers} explicitly compares these regimes (via an initial check) and switches to per-distribution learning whenever it is more sample-efficient. Note that in the realizable case ($\eta_i=0$ for all $i$), we recover the rate $\tilde O\del{\frac{d+k}{\epsilon}}$.

The testing component arises naturally from the learning objective, as the ability to learn an optimal classifier intuitively implies the ability to verify its performance. However, this certification incurs a statistical cost of $\tilde O\del{\frac{\epsilon+\eta}{\epsilon^2}}$ to reliably distinguish between noise rates $\eta$ and $\eta+\epsilon$. Establishing the necessity of this cost constitutes the more challenging direction. Our lower bounds confirm that it is not an artifact of analysis, but a fundamental barrier in learning under bounded noise.

\paragraph{Structured Hypothesis Testing (\Cref{sec:SHT}).}
As discussed, the ability to test whether a classifier is optimal is intrinsic to the learning problem. To formalize this task, we introduce \textit{Structured Hypothesis Testing}~\eqref{eq:SHT}: given a fixed function, decide whether it is $\epsilon$-optimal under RCN with known noise $\eta=1/4$.

This problem admits two natural strategies: (i) directly thresholding the empirical errors, or (ii) first learning $f^*$ to compare against the candidate. The former succeeds with $\tilde O\del{1/\epsilon^2}$ samples, while the latter requires $\tilde O\del{d/\epsilon}$. We demonstrate that this trade-off is fundamental by establishing a matching lower bound, yielding the optimal sample complexity:
\begin{align*}
    \tilde\Theta\del{\min\cbr{\frac{1}{\epsilon^2}, \frac{d}{\epsilon}}} \tag*{(\Cref{thm:SHT-ub,thm:SHT-lb})}
\end{align*}
We then extend this paradigm to \textit{Multi-Distribution Structured Hypothesis Testing}~\eqref{eq:MSHT}, where the goal is to solve $k$ simultaneous instances of~\eqref{eq:SHT} across distributions $P_1,\dots,P_k$ that share a Bayes classifier with fixed noise rate $\eta=1/4$. We prove that the naive strategy of testing each distribution separately is, in fact, optimal.

To establish the lower bound, we reduce~\eqref{eq:SHT} to~\eqref{eq:MSHT} by embedding a hard~\eqref{eq:SHT} instance into a specific $P_i$, while setting the remaining distributions to the null hypothesis. Since the index $i$ is unknown, the~\eqref{eq:MSHT} algorithm is forced to sample sufficiently from all distributions to locate and solve the embedded instance. Crucially, we construct the~\eqref{eq:SHT} hard instance such that extending it to multiple distributions preserves the VC dimension of the underlying hypothesis class. This yields the final sample complexity:
\begin{align*}
    \tilde\Theta\del{k\cdot\min\cbr{\frac{1}{\epsilon^2}, \frac{d}{\epsilon}}} \tag*{(\Cref{thm:MSHT-ub,thm:SHT-lb})}
\end{align*}

\paragraph{MDL Lower Bounds (\Cref{sec:mdl-lb}).}
As discussed, testing optimality is a necessary condition for learning. In \Cref{lem:MDL-MSHT-ub}, we formalize this by showing that any MDL algorithm can be used to solve~\eqref{eq:MSHT} with an additional cost of only $\tilde O\del{k/\epsilon}$ samples. Consequently, in the fixed-noise regime where $\eta^*_i=\eta_i=1/4$ for all $i\in\sbr{k}$, every MDL variant requires
\begin{align*}
    \Omega\del{\frac{d}{\epsilon} + k\cdot\min\cbr{\frac{1}{\epsilon^2},\frac{d}{\epsilon}}} \tag*{(\Cref{thm:MDL-lb})}
\end{align*}
samples. When $d\lesssim 1/\epsilon$, this becomes $\Omega\del{dk/\epsilon}$---no better than learning each distribution separately. This establishes optimality of our algorithms for~\eqref{eq:MDL-RCN} and~\eqref{eq:MDL-MM}, and answers our motivating question in the negative.

\paragraph{Separation Between RCN and Massart (\Cref{sec:mdl-mass}).}
For the benchmark in~\eqref{eq:MDL-Mass}, where each output $\hat f_i$ must compete with the \textit{unknown} Bayes error $\eta_i^*$, we prove the stronger lower bound
\begin{align*}
    \Omega\del{\frac{d}{\epsilon} + k\cdot \min\cbr{\frac{1}{\epsilon^2},\frac{d}{\epsilon}} + \frac{k\sqrt{d}}{\epsilon}} \tag*{(\Cref{thm:MDL-Mass-lb})}
\end{align*}
under the assumption that $\eta_i\leq0.49$ for all $i\in\sbr{k}$. Beyond the learning term $d/\epsilon$ and the intrinsic testing cost $k\cdot \min\cbr{1/\epsilon^2,\, d/\epsilon}$ already necessary in the fixed-noise regime, we show an additional penalty of $k\sqrt d/\epsilon$ that is unique to this Massart objective. Informally, when the target $\eta_i^*$ is not given, an MDL learner must certify near-optimality relative to an \textit{unknown} baseline on each distribution, and this extra uncertainty forces a $\sqrt d$-scaled sample overhead for each of the $k$ tasks. Consequently, while RCN and Massart noise are statistically comparable in the single-distribution setting, they become separable in MDL: competing with $\eta_i^*$ is strictly harder and necessarily amplifies the dependence on $k$. This separation has long been known computationally and is closely tied to distribution shift~\cite{sitan-mass}. We interpret the present statistical separation in MDL as shedding new light on this phenomenon. \fi
\section{Related Work}

\paragraph{Multi-Distribution Learning.}
Multi-distribution learning was introduced by \cite{blum-collab-pac} under the name \textit{collaborative PAC learning}. In the realizable case, they proved a tight sample complexity of $\tilde \Theta\del{\frac{d+k}{\epsilon}}$ under both personalized and centralized settings.

In the centralized realizable setting, \cite{Nguyen_Zakynthinou_2018-improved-pac,Chen_Zhang_Zhou_2018-tight-pac} improved the logarithmic dependence, and \cite{Nguyen_Zakynthinou_2018-improved-pac} also gave strategies that compete with a constant multiple of the minimax error.

For centralized agnostic MDL, \cite{nika-on-demand} proposed a general algorithmic framework based on playing online learning strategies against each other to reach an approximate Nash equilibrium. They obtained a tight rate of $\tilde \Theta\del{\frac{\log\abs{\Fc}+k}{\epsilon^2}}$ for finite classes and, for hypothesis classes of VC dimension $d$, an upper bound of $\tilde O\del{\frac{dk}{\epsilon} + \frac{d+k}{\epsilon^2}}$; for the VC case they also proved a lower bound of $\tilde \Omega\del{\frac{d+k}{\epsilon^2}}$.

Subsequently, \cite{mdl-open-prob} asked whether this VC lower bound is tight for centralized agnostic MDL. They also provided an algorithm for personalized agnostic MDL with complexity $\tilde O\del{\frac{d+k}{\epsilon^2}}$ and an algorithm for centralized agnostic MDL with complexity $\tilde O\del{\frac{d}{\epsilon^4} + \frac{k}{\epsilon^2}}$. This question was later answered affirmatively by \cite{zhang-optim-mdl,peng-mdl}, who gave algorithms for centralized agnostic MDL with sample complexity $\tilde O\del{\frac{d+k}{\epsilon^2}}$, matching the lower bound up to logarithmic factors.

Closest to our setting, \cite{deng-diff-lab} studied a weaker form of realizability where a small subset of classifiers achieves small error: they assume there exist $f_1^*,\dots,f_m^*\in\Fc$ such that $\max_{i\in\sbr{k}} \min_{j\in\sbr{m}} \err\del{f_j^*;P_i} \leq \epsilon$. They gave an algorithm with sample complexity $\tilde O\del{\frac{md}{\epsilon} + \frac{k}{\epsilon}}$ and showed it is tight up to log factors. In particular, when different distributions are realized by different classifiers, a learner must in the worst case learn each distribution separately; in contrast, we assume a \textit{shared} Bayes classifier, a structural distinction that enables information sharing not possible in the setting of \cite{deng-diff-lab}.

\paragraph{Related Learning Settings.}
The MDL framework has also been studied under a variety of structural assumptions---including convexity~\cite{Sagawa*2020-group-dro,abernethy22a-minmax-fairness,nika-on-demand,soma-dro,zhang-stoch-approx-gdro,yu-eff-emp-gdro,carmon-ball-oracle,zang-adapt-data-coll}, sparsity~\cite{nguyen-sparse-mdl}, and bounded suboptimality~\cite{hanashiro2025distributiondependent}---as well as through alternative lenses such as label complexity~\cite{zhang-active-mdl,rittler-agn-mg-al} and the role of adaptivity in sample complexity~\cite{haghtalab-sample-adapt}. Beyond these theoretical directions, distributionally robust objectives have also been adopted in language models to improve performance across diverse data sources~\cite{oren-etal-2019-dr-lm,xie2023doremi}.

On the computational side, a complementary line of work studies the cost of making statistically optimal procedures efficient: since the optimal strategies are inherently randomized, \cite{larsen-derand-mdl} showed that derandomizing them is computationally hard. Related notions of learning across heterogeneous groups have also been investigated in other learning frameworks~\cite{mohri19a-agnostic-fl,dwork2025how,rothblum-multi-group-agn-pac}.

\paragraph{PAC Learning.}
Statistical learning theory has long been developed through the lens of PAC learning~\cite{valiant-pac,haussler-pac}. Classical formulations focus either on the realizable case (zero noise) or on the fully agnostic case (no assumptions on the data-generating process). To model intermediate regimes, subsequent works~\cite{Angluin1988-oa,kearns-rcn,Mammen1999-uz} introduced explicit assumptions on the label noise, which can often lead to faster learning rates; see~\cite{Boucheron2005-il} for background.

In this work, we focus on the RCN and Massart noise models. From a statistical perspective, the distinction between the two is minor; both admit fast rates. Computationally, however, the Massart generalization proves substantially more challenging. A polynomial-time algorithm for RCN was given by~\cite{bylander-rcn}, whereas a distribution-independent equivalent for Massart noise remained elusive until much later~\cite{diak-mass}. Furthermore, current polynomial-time approaches generally compete only with the noise upper bound, and~\cite{sitan-mass} established a super-polynomial statistical query SQ lower bound for competing with the Bayes error.
\section{MDL Upper Bounds}
\label{sec:ub}

\ifcolt \input{colt_sections/colt_mdl_ub_intro} \else In this section, we develop strategies for~\eqref{eq:MDL-RCN} and~\eqref{eq:MDL-MM}. Let $d$ denote the VC dimension of the hypothesis class $\Fc$. \cref{alg:noisy-pers} outlines the general template for our approach. At a high level, the learner proceeds in round $t$ as follows:

\begin{itemize}
\item \textbf{Active Set Maintenance:} The learner maintains a set of distributions $\Uc\tm{t-1}$ that still need to be learned, initialized with the full set $\Uc\tm{0} \coloneqq \cbr{P_1,\dots,P_k}$.

\item \textbf{Statistical Learning:} It learns a hypothesis $f\tm{t}$ with respect to the uniform mixture over the current active set, denoted by $\bar P_{\Uc\tm{t-1}}$, using an ERM oracle (Lines~\ref{code:MDL-SL-start}--\ref{code:MDL-SL-end}):
\begin{align*}
    \ERM_\Fc\del{S} \in \argmin_{f\in\Fc} \cbr{ \emperr\del{f;S} \coloneqq \frac{1}{\abs{S}} \sum_{\del{x,y}\in S} \Ib\cbr{f\del{x}\neq y} }
\end{align*}

\item \textbf{Testing:} It invokes a $\Test$ subroutine (specific to each task) to identify the distributions on which $f\tm{t}$ performs well. These distributions are removed from $\Uc\tm{t-1}$ to form the next active set $\Uc\tm{t}$ (Lines~\ref{code:MDL-test-start}--\ref{code:MDL-test-end}).
\end{itemize}
We will show that the testing component can require $\Omega\del{k/\epsilon^2}$ samples. Due to this high testing overhead, it is more efficient to learn each distribution separately whenever $d \ll 1/\epsilon$. To address this, \Cref{alg:noisy-pers} incorporates a preliminary check to determine if separate learning yields better complexity. \fi

\ifcolt

\begin{algorithm}
\DontPrintSemicolon
\setcounter{AlgoLine}{0}
\caption{MDL Meta-Algorithm}
\label{alg:noisy-pers}
\Input{Class $\Fc$, distributions $P_1,\dots,P_k$, iteration count $T$, parameters $\del{\epsilon, \epsilon', \delta, \delta'}\in\del{0,1}^4$, condition $\Cond$, tester function $\Test$.}
\Output{$\hat f_1,\dots,\hat f_k$.}
\eIf(\Comment{Learn each distribution separately.}){$\Cond$}{
    \lFor{$i=1,\dots,k$}{
        $\hat f_i \gets \ERM_\Fc\del{S_i}$ where $S_i\iid P_i$ is of size $\abs{S_i} = T_\SL\del{\eta_i,\epsilon,\delta/k}$
    }
}{
    $\Uc\tm{0} \gets \cbr{P_1,\dots,P_k}$

    \For{$t=1,\dots,T$}{
        $f\tm{t} \gets \ERM_\Fc\del{S\tm{t}}$ where $S\tm{t}\iid \bar P_{\Uc\tm{t-1}}$ is of size $\abs{S\tm{t}} = T_\SL\del{\eta,\epsilon',\delta'}$ \label{code:MDL-SL}

        $\Uc\tm{t} \gets \Test\del{f\tm{t},\Uc\tm{t-1}}$ \label{code:MDL-test-start}
        
        \lFor{$i\in\Uc\tm{t-1}\backslash\Uc\tm{t}$}{$\hat f_i \gets f\tm{t}$} \label{code:MDL-test-end}
    }
}
\end{algorithm}

\else

\begin{algorithm}
\caption{MDL Meta-Algorithm}
\label{alg:noisy-pers}
\begin{algorithmic}[1]
    \Require Function class $\Fc$, distributions $P_1,\dots,P_k$, total iteration count $T$, parameters $\del{\epsilon, \epsilon', \delta, \delta'}\in\del{0,1}^4$, condition $\Cond$, tester function $\Test$.
    \If{$\Cond$} \Comment{Learn each distribution separately.}
        \For{$i=1,\dots,k$}
            \State Sample $S_i\iid P_i$ of size $\abs{S_i} = T_\SL\del{\eta_i,\epsilon,\delta/k}$.
            \State $\hat f_i \gets \ERM_\Fc\del{S_i}$.
        \EndFor
    \Else
        \State $\Uc\tm{0} \gets \cbr{P_1,\dots,P_k}$.
        \For{$t=1,\dots,T$}
            \State\label{code:MDL-SL-start} Sample $S\tm{t}\iid \bar P_{\Uc\tm{t-1}}$ of size $\abs{S\tm{t}} = T_\SL\del{\eta,\epsilon',\delta'}$.
            \State\label{code:MDL-SL-end} $f\tm{t} \gets \ERM_\Fc\del{S\tm{t}}$. 
            \State\label{code:MDL-test-start} $\Uc\tm{t} \gets \Test\del{f\tm{t},\Uc\tm{t-1}}$.
            \For{$i\in\Uc\tm{t-1}\backslash\Uc\tm{t}$}
                \State\label{code:MDL-test-end} $\hat f_i \gets f\tm{t}$.
            \EndFor
        \EndFor
    \EndIf
    \Ensure $\hat f_1,\dots,\hat f_k$.
\end{algorithmic}
\end{algorithm}

\fi


\ifcolt \input{colt_sections/colt_stats_learn} \else \subsection{Statistical Learning}
Under label noise bounded by $\eta$, ERM achieves $(\epsilon,\delta)$-learning with sample complexity
\begin{align*}
    T_\SL\del{\eta,\epsilon,\delta} \coloneqq C_\SL \cdot \frac{d\log\del{1/\epsilon} + \log\del{1/\delta}}{\epsilon\del{1-2\eta}}
\end{align*}
where $C_\SL$ is a universal constant. We provide further details in \Cref{app:noisy-class} and refer the reader to~\cite{Boucheron2005-il} for a comprehensive treatment. We leverage this guarantee to determine the sample size for the statistical learning component of \Cref{alg:noisy-pers}. Specifically, to ensure that the hypothesis $f\tm{t}$ is $\epsilon$-optimal with respect to the mixture distribution $\bar P_{\Uc\tm{t-1}}$, we rely on the following result.

\begin{lemma}
\label{lem:sl-opt}
Let $\Uc\subset\cbr{P_1,\dots,P_k}$ and let $\bar P_\Uc = \frac{1}{\abs{\Uc}} \sum_{i\in\Uc} P_i$ be their uniform mixture. Then, ERM $\hat f = \ERM_\Fc\del{S}$ on a sample $S\iid \bar P_\Uc$ of size $\abs{S} = T_\SL\del{\eta,\epsilon,\delta}$ satisfies
\begin{align*}
    \Pb\del{ \err\del{\hat f; \bar P_\Uc} \leq \epsilon + \frac{1}{\abs{\Uc}}\sum_{i\in\Uc} \eta_i^* } \geq 1-\delta
\end{align*}
\end{lemma}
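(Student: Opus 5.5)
The plan is to reduce the statement to the single-distribution guarantee for ERM under bounded noise, applied to the mixture $\bar P_\Uc$. We will show that $\bar P_\Uc$ itself satisfies the bounded noise condition with Bayes classifier $f^*$ and noise bound $\eta = \max_i \eta_i$. We will also show that its optimal error is exactly $\frac{1}{\abs{\Uc}}\sum_{i\in\Uc}\eta_i^*$. The ERM guarantee $T_\SL\del{\eta,\epsilon,\delta}$ (from \Cref{app:noisy-class}) then gives $\err\del{\hat f;\bar P_\Uc} \le \err\del{f^*;\bar P_\Uc} + \epsilon$ with probability $1-\delta$, which is the claim.

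First, the pointwise noise. For $x$ in the support of the marginal of $\bar P_\Uc$, write the conditional probability as a weighted average:
\begin{align*}
    \bar P_\Uc\del{f^*\del{X}\neq Y\middle\vert X=x} = \sum_{i\in\Uc} w_i\del{x}\, P_i\del{f^*\del{X}\neq Y\middle\vert X=x}.
\end{align*}
Here the weights $w_i\del{x}\ge 0$ are proportional to the marginal density (Radon--Nikodym derivative) of $P_i$ at $x$, relative to the mixture marginal, and they sum to $1$. Each term is at most $\eta_i\le\eta$, so the mixture conditional noise is at most $\eta<1/2$. Hence $\bar P_\Uc$ has bounded noise with the same $f^*$. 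This also confirms that $f^*$ is the Bayes classifier of the mixture, since pointwise $f^*$ agrees with the majority label.

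Second, by linearity of expectation over mixture components,
\begin{align*}
    \err\del{f^*;\bar P_\Uc}=\frac{1}{\abs{\Uc}}\sum_{i\in\Uc}\err\del{f^*;P_i}=\frac{1}{\abs{\Uc}}\sum_{i\in\Uc}\eta_i^*.
\end{align*}
Combining this with the ERM excess-risk bound for bounded noise $\eta$ completes the argument.

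The only delicate step is the measure-theoretic justification of the conditional mixture formula when the marginals are not absolutely continuous in a nice way. I would handle it by defining $w_i$ via Radon--Nikodym derivatives $dP_{i,X}/d\bar P_{\Uc,X}$, which exist because each $P_{i,X}$ is dominated by the mixture marginal. The bound then needs to hold only $\bar P_{\Uc,X}$-almost everywhere, which suffices for the ERM guarantee.
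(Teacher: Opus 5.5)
Your proposal is correct and follows essentially the same route as the paper's proof. The paper also writes the mixture's conditional noise as a convex combination $\sum_{i\in\Uc}\alpha_i(x)\,P_i(f^*(X)\neq Y\mid X=x)\le\eta$, with weights given by ratios of marginal densities (your Radon--Nikodym weights), and then applies the bounded-noise ERM guarantee on $\bar P_\Uc$ together with $\err(f^*;\bar P_\Uc)=\frac{1}{\abs{\Uc}}\sum_{i\in\Uc}\eta_i^*$; your remark that the noise bound need only hold $\bar P_{\Uc}$-almost everywhere is a harmless refinement of that same argument.
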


Applying this bound to \Cref{alg:noisy-pers}, we conclude that for any fixed iteration $t\in\sbr{T}$,
\begin{align*}
    \Pb\del{ \err\del{f\tm{t};\bar P_{\Uc\tm{t-1}}} \leq \epsilon' + \frac{1}{\abs{\Uc\tm{t-1}}}\sum_{i\in\Uc\tm{t-1}} \eta_i^* } \geq 1-\delta'
\end{align*}
Conditioned on this high-probability event, we can rewrite the inequality in terms of the average excess error:
\begin{align*}
    \frac{1}{\abs{\Uc\tm{t-1}}}\sum_{i\in\Uc\tm{t-1}} \del{ \err\del{f\tm{t};P_i} - \eta_i^* } \leq \epsilon'
\end{align*}
Now, let $k\tm{t} \coloneqq \abs{\cbr{i\in\Uc\tm{t-1}: \err\del{f\tm{t};P_i} - \eta_i^* \geq2\epsilon'}}$ denote the number of distributions in $\Uc\tm{t-1}$ on which $f\tm{t}$ performs poorly. Bounding the average excess error from below by the contribution of these $k\tm{t}$ distributions yields
\begin{align*}
    2\epsilon'\cdot\frac{k\tm{t}}{\abs{\Uc\tm{t-1}}} \leq \frac{1}{\abs{\Uc\tm{t-1}}} \sum_{i\in\Uc\tm{t-1}} \del{\err\del{f\tm{t};P_i}-\eta_i^*} \leq \epsilon' \Longrightarrow k\tm{t} \leq \frac{\abs{\Uc\tm{t-1}}}{2}
\end{align*}
Consequently, $f\tm{t}$ is $2\epsilon'$-optimal for at least half of the distributions in $\Uc\tm{t-1}$. \fi
\ifcolt \input{colt_sections/colt_testing} \else \subsection{Testing}
While the statistical learning step guarantees that $f\tm{t}$ has low error on at least half of the distributions in the active set $\Uc\tm{t-1}$, the identity of these distributions remains unknown. The following result demonstrates that we can identify them by estimating the empirical error on samples drawn from each distribution.

\begin{lemma}
\label{lem:test-comp}
Let $P$ be a distribution over $\Xc\times\cbr{0,1}$, $f:\Xc\to\cbr{0,1}$ be some function, and $\nu\geq 0$ be a constant. Let $S\iid P$ be a sample of size
\begin{align*}
    \abs{S}\geq T_\Tsf\del{\nu,\epsilon,\delta} \coloneqq 16\log\del{\frac{2}{\delta}} \frac{\epsilon+8\nu}{\epsilon^2}
\end{align*}
Then, with probability at least $1-\delta$, the following holds:
\begin{align*}
    \err\del{f;P}\leq\frac{\epsilon}{8}+\nu \Longrightarrow \emperr\del{f;S}\leq\frac{\epsilon}{2}+\nu \Longrightarrow \err\del{f;P}\leq\epsilon+\nu
\end{align*}
\end{lemma}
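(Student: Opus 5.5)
The plan is to apply a multiplicative Chernoff (Bernstein-type) bound to the empirical error $\emperr(f;S)$. Since $f$ is fixed, $\abs{S}\,\emperr(f;S)\sim\Bin(\abs{S},p)$ with $p\coloneqq\err(f;P)$. It suffices to establish two one-sided implications, each failing with probability at most $\delta/2$, and then take a union bound.
\begin{itemize}
\item[(a)] If $p\le \epsilon/8+\nu$, then $\emperr(f;S)\le \epsilon/2+\nu$.
\item[(b)] If $p>\epsilon+\nu$, then $\emperr(f;S)>\epsilon/2+\nu$.
\end{itemize}
Statement (b) is the contrapositive of the second implication. If $p$ lies strictly between $\epsilon/8+\nu$ and $\epsilon+\nu$, both implications hold vacuously or trivially, so no event is needed in that range.

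Write $n=\abs{S}$ and use Bernstein's inequality in the form
\[
\Pb\bigl(\abs{\emperr(f;S)-p}\ge t\bigr)\le 2\exp\Bigl(-\frac{nt^2}{2p+2t/3}\Bigr),
\]
or the equivalent one-sided multiplicative Chernoff bounds.

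For (a), monotonicity in $p$ lets us reduce to the worst case $p=\epsilon/8+\nu$. The deviation is then $t=3\epsilon/8$, and the variance proxy is at most $p\le \epsilon/8+\nu$. The exponent becomes $n(3\epsilon/8)^2/(2(\epsilon/8+\nu)+\epsilon/4)\gtrsim n\epsilon^2/(\epsilon+8\nu)$.

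For (b), we need a lower-tail bound for $p>\epsilon+\nu$ with deviation $t=p-\epsilon/2-\nu\ge p/2\cdot\frac{\epsilon}{\epsilon+\nu}$. It is cleaner to use the lower-tail Chernoff bound $\Pb(\emperr\le p-t)\le\exp(-nt^2/(2p))$. The ratio $t^2/p$ with $t=p-\epsilon/2-\nu$ is increasing in $p$ for $p\ge \epsilon/2+\nu$, so the worst case is $p=\epsilon+\nu$. There $t=\epsilon/2$ and $t^2/(2p)=\epsilon^2/(8(\epsilon+\nu))$.

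In both cases the failure probability is at most $\exp(-c\,n\epsilon^2/(\epsilon+8\nu))$ for an explicit constant $c$. With $n\ge 16\log(2/\delta)(\epsilon+8\nu)/\epsilon^2$, each failure probability is at most $\delta/2$.

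The main obstacle is bookkeeping the constants so that the single expression $16(\epsilon+8\nu)/\epsilon^2$ covers both tails. Case (a) is the tighter one. There the exponent is $\frac{9\epsilon^2/64}{2\epsilon/8+2\nu+\epsilon/4}n=\frac{9\epsilon^2}{32\epsilon+128\nu}n$. Since $32\epsilon+128\nu\le 16(\epsilon+8\nu)\cdot 2$, the exponent is at least $\frac{9}{2}\cdot\frac{\epsilon^2 n}{16(\epsilon+8\nu)}\ge \log(2/\delta)$. Case (b) has exponent $\epsilon^2 n/(8(\epsilon+\nu))\ge 2\log(2/\delta)$. Finally, I will check the monotonicity claims so that the worst-case reduction to the boundary values of $p$ is justified.
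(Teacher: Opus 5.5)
Your argument is correct and is essentially the paper's proof: both use Bernstein-type concentration of the binomial empirical error with variance proxy $p=\err(f;P)$, a case split on $p$, and a monotonicity argument that reduces the case $p>\epsilon+\nu$ to the boundary value $p=\epsilon+\nu$. The difference is only in packaging: the paper uses a single two-sided confidence radius $\frac{\epsilon}{8}\sqrt{p/(\epsilon/8+\nu)}+\frac{\epsilon}{8}$ together with monotonicity of $p\mapsto p-a\sqrt{p/(a+b)}$, while you work with one-sided tail exponents (a multiplicative Chernoff bound for the lower tail); also, since $p$ is deterministic, only one of (a), (b) is ever non-vacuous, so your union bound is not even needed.
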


In essence, this implies that by sampling $\tilde O\del{\frac{\epsilon+\nu}{\epsilon^2}}$ examples from a distribution $P$, we can reliably distinguish whether the error $\err\del{f;P}$ is $\epsilon$-close to a target level $\nu$. \fi
\ifcolt \input{colt_sections/colt_mdl_rcn_ub} \else \subsection{\texorpdfstring{\eqref{eq:MDL-RCN}}{(MDL-RCN)} Upper Bound}
When our goal is to compete with the noise upper bounds $\eta_i$, the testing problem is simple because the target thresholds are known. Fix a round $t\in\sbr{T}$. Recall from the statistical learning step that with probability at least $1-\delta'$, for at least half of the distributions in $\Uc\tm{t-1}$,
\begin{align*}
    \err\del{f\tm{t};P_i} \leq 2\epsilon' + \eta_i^* \leq 2\epsilon' + \eta_i
\end{align*}
Since each $\eta_i$ is known, we can verify this condition directly. For each distribution $i\in\Uc\tm{t-1}$, we draw a sample $S\tm{t}_i\iid P_i$ of size $\abs{S_i\tm{t}} = T_\Tsf\del{\eta_i,\epsilon,\delta''}$. Applying \Cref{lem:test-comp} together with a union bound, we guarantee that with probability at least $1-k\delta''$, the following implications hold for all $i$:
\begin{align*}
    \err\del{f\tm{t};P_i}\leq\frac{\epsilon}{8}+\eta_i \Longrightarrow \emperr\del{f\tm{t};S_i\tm{t}} \leq \frac{\epsilon}{2}+\eta_i \Longrightarrow \err\del{f\tm{t};P_i}\leq\epsilon+\eta_i
\end{align*}
We set $\epsilon' = \epsilon/16$ and define the update rule as follows: eliminate distribution $i$ if and only if its empirical error satisfies $\emperr\del{f\tm{t};S_i\tm{t}} \leq \epsilon/2+\eta_i$. This strategy guarantees two key properties:
\begin{enumerate}
    \item \textbf{(Progress)} At least half of the distributions in $\Uc\tm{t-1}$ (specifically those with $\err\del{f\tm{t};P_i} \leq \epsilon/8+\eta_i$) will satisfy the empirical condition and be eliminated.

    \item \textbf{(Correctness)} Any eliminated distribution is guaranteed to satisfy $\err\del{f\tm{t};P_i}\leq\epsilon+\eta_i$.
\end{enumerate}
Thus, in each round $t$, we successfully learn and remove at least half of the active distributions with probability $1-\delta'-k\delta''$. Appropriately tuning the parameters yields the procedure described in \Cref{alg:rcn-test}.

\ifcolt

\begin{algorithm}
\DontPrintSemicolon
\setcounter{AlgoLine}{0}
\caption{$\RCNTest$}
\label{alg:rcn-test}
\Input{Decision $f\tm{t}$, distribution set $\Uc\tm{t-1}$.}
\Output{$\Uc\tm{t}$.}
\lFor{$i\in\Uc\tm{t-1}$}{
    Sample $S_i\tm{t}\iid P_i$ of size $\abs{S_i\tm{t}} = T_\Tsf\del{\eta_i,\epsilon,\frac{\delta}{2kT}}$
}
$\Uc\tm{t} \gets \cbr{ i\in\Uc\tm{t-1}: \emperr\del{f\tm{t};S_i\tm{t}}>\epsilon/2+\eta_i }$
\end{algorithm}

\else

\begin{algorithm}
\caption{$\RCNTest$}
\label{alg:rcn-test}
\begin{algorithmic}[1]
    \Require Decision $f\tm{t}$, distribution set $\Uc\tm{t-1}$.
    \For{$i\in\Uc\tm{t-1}$}
        \State Sample $S_i\tm{t}\iid P_i$ of size $\abs{S_i\tm{t}} = T_\Tsf\del{\eta_i,\epsilon,\frac{\delta}{2kT}}$.
    \EndFor
    \State $\Uc\tm{t} \gets \cbr{ i\in\Uc\tm{t-1}: \emperr\del{f\tm{t};S_i\tm{t}}>\epsilon/2+\eta_i }$.
    \Ensure $\Uc\tm{t}$.
\end{algorithmic}
\end{algorithm}

\fi

\begin{theorem}[\eqref{eq:MDL-RCN} upper bound]
\label{thm:MDL-RCN-ub}
Let $T=\ceil*{\log_2 k}$, $\epsilon'=\frac{\epsilon}{16}$, $\delta'=\frac{\delta}{2T}$, and $\Test = \RCNTest$. Additionally, define the condition $\Cond = \cbr{T_\text{joint} > T_\text{sep}}$, where
\begin{align*}
    T_\mathsf{joint} \coloneqq \log^2\del{\frac{k}{\delta}}\del{\frac{d\log\del{1/\epsilon}}{\epsilon\del{1-2\eta}} + \sum_{i=1}^k \frac{\epsilon+\eta_i}{\epsilon^2}} \quad\text{and}\quad T_\mathsf{sep} \coloneqq \sum_{i=1}^k \frac{d\log\del{1/\epsilon} + \log\del{k/\delta}}{\epsilon\del{1-2\eta_i}}
\end{align*}
Then, \Cref{alg:noisy-pers} solves \eqref{eq:MDL-RCN} with sample complexity
\begin{align*}
    T_\RCN\del{\eta_{1:k}, \epsilon, \delta} = O\del{ \min\cbr{ T_\mathsf{joint}, T_\mathsf{sep} } }
\end{align*}
\end{theorem}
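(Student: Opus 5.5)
The proof splits according to $\Cond$. Since $\Cond$ depends only on known quantities ($d,k,\epsilon,\delta,\eta_{1:k}$), the algorithm can evaluate it before drawing any samples. It then runs whichever branch is cheaper. So it suffices to show two things: the separate branch is correct with $O(T_\mathsf{sep})$ samples, and the joint branch is correct with $O(T_\mathsf{joint})$ samples. Together these give the bound $O(\min\{T_\mathsf{joint},T_\mathsf{sep}\})$.

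\textbf{Separate branch.} For each $i$, ERM on $T_\SL(\eta_i,\epsilon,\delta/k)$ samples from $P_i$ gives $\err(\hat f_i;P_i)\le \eta_i^*+\epsilon\le \eta_i+\epsilon$ with probability $1-\delta/k$. This is \Cref{lem:sl-opt} applied with $\Uc=\{P_i\}$ and noise bound $\eta_i$. A union bound over the $k$ distributions gives overall success probability $1-\delta$. Summing the sample sizes gives exactly $C_\SL\cdot T_\mathsf{sep}$.

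\textbf{Joint branch: correctness.} Fix a round $t$. By \Cref{lem:sl-opt} with $\epsilon'=\epsilon/16$ and $\delta'=\delta/(2T)$, the averaging argument in the text shows the following holds with probability $1-\delta/(2T)$: at least half of $\Uc\tm{t-1}$ satisfies $\err(f\tm{t};P_i)\le \eta_i^*+\epsilon/8\le \eta_i+\epsilon/8$. Next apply \Cref{lem:test-comp} with $\nu=\eta_i$ and confidence $\delta/(2kT)$, union bounded over at most $k$ indices. With probability $1-\delta/(2T)$, every $i$ satisfying this bound is eliminated, and every eliminated $i$ has $\err(f\tm{t};P_i)\le\epsilon+\eta_i$. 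A union bound over the $T$ rounds makes all these events hold simultaneously with probability $1-\delta$.

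Under this event, $|\Uc\tm{t}|\le |\Uc\tm{t-1}|/2$ in every round, so $|\Uc\tm{T}|\le k/2^T<1$ and the active set is empty after $T=\lceil\log_2 k\rceil$ rounds. This needs care in one case: when $|\Uc\tm{t-1}|=1$, the good half has size at least $1/2$, so by integrality it contains the single remaining distribution, which is therefore eliminated. Every index thus receives some $\hat f_i=f\tm{t}$, and the correctness implication guarantees $\err(\hat f_i;P_i)\le\eta_i+\epsilon$. Note that the round-$t$ sample is drawn from a random $\Uc\tm{t-1}$. This causes no problem, because we condition on the history and the fresh samples are i.i.d. given it.

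\textbf{Joint branch: sample count.} The learning cost over the rounds is
\[
T\cdot T_\SL(\eta,\epsilon/16,\delta/(2T))\lesssim \log k\cdot\frac{d\log(1/\epsilon)+\log(\log k/\delta)}{\epsilon(1-2\eta)}.
\]
The testing cost over all rounds is at most
\[
T\sum_i T_\Tsf(\eta_i,\epsilon,\delta/(2kT))\lesssim \log k\,\log(k\log k/\delta)\sum_i\frac{\epsilon+\eta_i}{\epsilon^2}.
\]
Both are $O(T_\mathsf{joint})$. The only fiddly part is the additive $\log(1/\delta)/(\epsilon(1-2\eta))$ term in the learning cost. Since $1-2\eta\le 1$ and $\eta_i\ge 0$, this term is at most $\log(1/\delta)/\epsilon$ per round. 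It should be absorbed using $d\log(1/\epsilon)\ge1$ (for $d\ge1$ and $\epsilon$ bounded away from 1) together with the $\log^2(k/\delta)$ prefactor. I expect this constant and log bookkeeping to be the main, though routine, obstacle; everything else follows directly from the two lemmas.
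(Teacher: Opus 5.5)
Your proposal follows the paper's route. ERM on the mixture plus the averaging argument gives a good half; \Cref{lem:test-comp} with $\nu=\eta_i$ certifies it; union bounds run over indices and rounds; and the per-round learning and testing costs are summed. It is also more complete than the paper's proof. That proof only counts samples for the joint branch and leaves implicit the separate branch, the role of $\Cond$, and the conditioning on the random active set. Two slips need fixing.

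First, $|\Uc\tm{t}|\le|\Uc\tm{t-1}|/2$ only yields $|\Uc\tm{T}|\le k/2^T$. With $T=\lceil\log_2 k\rceil$ this equals $1$ whenever $k$ is a power of two, so your ``$<1$'' fails there and one distribution could survive every round. Your singleton remark does not rescue this, because it would need a round $T+1$. The paper's ``we only require $\lceil\log_2 k\rceil$ rounds'' has the same gap.

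The cheapest repair is one extra round, which does not change the bound. Alternatively, use that the progress premise of \Cref{lem:test-comp} is non-strict. Every $i$ with excess at most $2\epsilon'=\epsilon/8$, equality included, is removed. If $m\ge1$ indices have excess strictly above $2\epsilon'$, nonnegativity of the excesses gives $2\epsilon' m<\sum_{i\in\Uc\tm{t-1}}(\err(f\tm{t};P_i)-\eta_i^*)\le|\Uc\tm{t-1}|\epsilon'$. Hence $|\Uc\tm{t}|<|\Uc\tm{t-1}|/2$ strictly, and $|\Uc\tm{T}|<k/2^T\le 1$ for every $k\ge2$. For $k=1$ we have $T=0$, so a round must be added regardless.

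Second, your claim that the $\log(1/\delta)/(\epsilon(1-2\eta))$ term is at most $\log(1/\delta)/\epsilon$ is backwards. Since $1-2\eta\le1$, we get $1/(1-2\eta)\ge1$. As $\eta\to1/2$ this term dwarfs $\sum_i(\epsilon+\eta_i)/\epsilon^2$, so it cannot be absorbed into the testing cost. The other absorption you mention is the right one: $T\log(2T/\delta)/(\epsilon(1-2\eta))\lesssim\log^2(k/\delta)\,d\log(1/\epsilon)/(\epsilon(1-2\eta))$ as soon as $d\log(1/\epsilon)\gtrsim1$. This is what the paper's final inequality does implicitly.
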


In the special case where the noise upper bound is uniform across distributions---that is, $\eta_i=\eta$ for all $i\in\sbr{k}$---the sample complexity bound from \Cref{thm:MDL-RCN-ub} simplifies (ignoring logarithmic factors) to
\begin{align*}
    \tilde O\del{ \min\cbr{ \frac{d}{\epsilon\del{1-2\eta}} + \frac{k\del{\epsilon+\eta}}{\epsilon^2}, \frac{dk}{\epsilon\del{1-2\eta}} } }
\end{align*} \fi
\ifcolt \input{colt_sections/colt_mdl_mm_ub} \else \subsection{\texorpdfstring{\eqref{eq:MDL-MM}}{(MDL-MM)} Upper Bound}
When our goal is to compete with the \textit{unknown} maximum noise rate $\eta^* = \max_{i\in\sbr{k}} \eta_i^*$, the previous testing strategy is inapplicable because we lack the explicit thresholds required for $\RCNTest$.

Fix a round $t\in[T]$. The statistical learning component guarantees that with probability at least $1-\delta'$, at least half of the distributions $i\in\Uc\tm{t-1}$ satisfy
\begin{align*}
    \err\del{f\tm{t};P_i} \leq 2\epsilon' + \eta_i^* \leq 2\epsilon' + \eta^*
\end{align*}
If $\eta^*$ was known, we could simply set $\epsilon'=\epsilon/16$ and invoke $\RCNTest$ using the uniform threshold $\epsilon/2+\eta^*$ for all $i\in\sbr{k}$. Since $\eta^*$ is unknown, we proceed by guessing its value in small increments.

Let $\nu$ denote a candidate guess for $\eta^*$. We set $\epsilon'=\epsilon/32$, ensuring that with probability at least $1-\delta'$, half of the active distributions satisfy $\err\del{f\tm{t};P_i} \leq \epsilon/16 + \eta^*$. \Cref{lem:test-comp} guarantees that by drawing a sample $S_i\tm{t}\iid P_i$ of size $\abs{S_i\tm{t}} = T_\Tsf\del{\nu,\epsilon/2,\delta''}$, the following holds with probability $1-k\delta''$: for every $i\in\sbr{k}$,
\begin{align*}
    \err\del{f\tm{t};P_i} \leq \frac{\epsilon}{16}+\nu \Longrightarrow \emperr\del{f\tm{t};S_i\tm{t}} \leq \frac{\epsilon}{4}+\nu \Longrightarrow \err\del{f\tm{t};P_i} \leq \frac{\epsilon}{2}+\nu
\end{align*}
Based on this, if we remove all $i\in\Uc\tm{t-1}$ (i.e., output $f\tm{t}$ for $P_i$) such that $\emperr\del{f\tm{t};S_i\tm{t}} \leq \epsilon/4+\nu$, we ensure the following:
\begin{enumerate}
    \item \textbf{(Progress)} Any distribution with true error $\err\del{f\tm{t};P_i} \leq \epsilon/16 + \nu$ will be removed.

    \item \textbf{(Correctness)} Any removed distribution satisfies $\err\del{f\tm{t};P_i} \leq \epsilon/2 + \nu$.
\end{enumerate}
We begin with the candidate guess $\nu=0$. Following the idea above, we draw samples $S_i\tm{t,1}\iid P_i$ of size $\abs{S_i\tm{t,1}} = T_\Tsf\del{0,\epsilon/2,\delta''}$ for each active distribution and eliminate those satisfying $\emperr\del{f\tm{t};S_i\tm{t,1}} \leq \epsilon/4$. If this step results in the removal of at least half of $\Uc\tm{t-1}$, we terminate the current round $t$. Crucially, the Correctness property ensures that any removed distribution satisfies $\err\del{f\tm{t};P_i}\leq\epsilon/2\leq\epsilon+\eta^*$, guaranteeing that we only output good hypotheses. Furthermore, if the true noise rate is indeed $\eta^*=0$, the Progress property guarantees that this round terminates. By union bounding with the ERM guarantee, all of the above holds with probability $1-\delta'-k\delta''$.

If the first check fails to eliminate half of the distributions, we proceed with the updated guess $\nu=\epsilon/2$. Rather than discarding the previous data, we augment the existing samples: for each remaining distribution $i$, we draw a fresh sample $\tilde S_i\tm{t,2}\iid P_i$ of size $\abs{\tilde S_i\tm{t,2}} = T_\Tsf\del{\epsilon/2,\epsilon/2,\delta''} - \abs{S_i\tm{t,1}}$. We then define $S_i\tm{t,2} = S_i\tm{t,1}\cup\tilde S_i\tm{t,2}$ as the aggregated sample set, which now satisfies the required size $\abs{S_i\tm{t,2}} = T_\Tsf\del{\epsilon/2,\epsilon/2,\delta''}$.

We eliminate any distribution satisfying $\emperr\del{f\tm{t};S_i\tm{t,2}} \leq \epsilon/4+\epsilon/2$, terminating the round if at least half of $\Uc\tm{t-1}$ has been removed. Observe that this threshold guarantees the removal of any $i$ where $\err\del{f\tm{t};P_i} \leq \epsilon/16+\epsilon/2$, due to Progress. Consequently, if the true noise rate lies in the range $\eta^* \in (0,\epsilon/2]$, the round terminates. Furthermore, Correctness ensures that any removed distribution satisfies $\err\del{f\tm{t};P_i} \leq \epsilon/2+\epsilon/2 \leq \epsilon + \eta^*$. Taking a union bound over both testing sub-rounds, all of the above holds with probability at least $1-\delta'-2k\delta''$.

We continue iteratively, testing the hypotheses that $\eta^*$ lies within consecutive intervals $\cbr{0}, (0,\epsilon/2], (\epsilon/2,\epsilon]$, and so on. Suppose that the true noise rate falls in the interval $\eta^*\in(\del{m-1}\epsilon/2,m\epsilon/2]$ and the procedure has not yet terminated in the first $m$ iterations. Then in the $\del{m+1}$th iteration, we set the guess $\nu=m\epsilon/2$. By assumption, we know that $m\leq2\eta^*/\epsilon+1$ and $\eta^* \leq m\epsilon/2 \leq \eta^* + \epsilon/2$. For each remaining distribution $i$, we sample $\tilde S_i\tm{t,m+1}\iid P_i$ such that the aggregated sample $S_i\tm{t,m+1} = S_i\tm{t,m}\cup\tilde S_i\tm{t,m+1}$ has size
\begin{align*}
    \abs{S_i\tm{t,m+1}} = T_\Tsf\del{m\epsilon/2,\epsilon/2,\delta''} = 32\log\del{\frac{2}{\delta''}} \frac{\epsilon+16m\epsilon/2}{\epsilon^2} \leq 32\log\del{\frac{2}{\delta''}} \frac{9\epsilon+16\eta^*}{\epsilon^2}
\end{align*}
We then remove any distribution satisfying $\emperr\del{f\tm{t};S_i\tm{t,m+1}} \leq \epsilon/4+m\epsilon/2$. Once again, the known properties yield the following:
\begin{itemize}
    \item \textbf{Progress:} We are guaranteed to remove any $i$ where $\err\del{f\tm{t};P_i} \leq \epsilon/16+m\epsilon/2$. Since we know that $\err\del{f\tm{t};P_i} \leq \epsilon/16+\eta^* \leq \epsilon/16+m\epsilon/2$ for at least half of $\Uc\tm{t-1}$, the procedure is guaranteed to terminate at this step.

    \item \textbf{Correctness:} Any distribution $i$ removed in this iteration satisfies
    \begin{align*}
        \err\del{f\tm{t};P_i} \leq \frac{\epsilon}{2}+\frac{m\epsilon}{2} = \epsilon + \frac{\del{m-1}\epsilon}{2} \leq \epsilon+\eta^*
    \end{align*}
\end{itemize}
Consequently, the testing phase for round $t$ terminates in at most $m+1 \leq 2\eta^*/\epsilon+2$ iterations. Crucially, every eliminated distribution $i$ satisfies the target guarantee $\err\del{f\tm{t};P_i} \leq \epsilon+\eta^*$. By a union bound, this result holds with probability at least 
\begin{align*}
    1-\delta'-\del{m+1}k\delta'' \geq 1-\delta'-2\del{\eta^*/\epsilon+1}k\delta'' \geq 1-\delta'-2\del{\eta/\epsilon+1}k\delta''
\end{align*}
The full procedure is detailed in \Cref{alg:MM-test}.

\ifcolt

\begin{algorithm}
\DontPrintSemicolon
\setcounter{AlgoLine}{0}
\caption{$\MMTest$}
\label{alg:MM-test}
\Input{Decision $f\tm{t}$, distribution set $\Uc\tm{t-1}$.}
\Output{$\Uc\tm{t}$.}
$\Uc\tm{t} \gets \Uc\tm{t-1}$; $\nu \gets 0$

\lFor{$i\in\Uc\tm{t-1}$}{
    $S_i\tm{t}\gets\emptyset$
}
\While{$\abs{\Uc\tm{t}} > \abs{\Uc\tm{t-1}}/2$}{
    \For{$i\in\Uc\tm{t}$}{
        $S_i\tm{t} \gets S_i\tm{t}\cup\tilde S_i\tm{t}$ where $\tilde S_i\tm{t}\iid P_i$ is of size $\abs{\tilde S_i\tm{t}} = T_\Tsf\del{ \nu,\epsilon/2,\frac{\delta}{4\del{\eta/\epsilon+1}kT} } - \abs{S_i\tm{t}}$

        \lIf{$\emperr\del{f\tm{t};S_i\tm{t}}\leq\epsilon/4+\nu$}{
            Remove $i$ from $\Uc\tm{t}$
        }
    }
    $\nu \gets \nu + \epsilon/2$
}
\end{algorithm}

\else

\begin{algorithm}
\caption{$\MMTest$}
\label{alg:MM-test}
\begin{algorithmic}[1]
    \Require Decision $f\tm{t}$, distribution set $\Uc\tm{t-1}$.
    \State $\Uc\tm{t} \gets \Uc\tm{t-1}$.
    \State $\nu \gets 0$.
    \For{$i\in\Uc\tm{t-1}$}
        \State $S_i\tm{t}\gets\emptyset$.
    \EndFor
    \While{$\abs{\Uc\tm{t}} > \abs{\Uc\tm{t-1}}/2$}
        \For{$i\in\Uc\tm{t}$}
            \State Sample $\tilde S_i\tm{t}\iid P_i$ of size $\abs{\tilde S_i\tm{t}} = T_\Tsf\del{ \nu,\epsilon/2,\frac{\delta}{4\del{\eta/\epsilon+1}kT} } - \abs{S_i\tm{t}}$.
            \State $S_i\tm{t} \gets S_i\tm{t}\cup\tilde S_i\tm{t}$.
            \If{$\emperr\del{f\tm{t};S_i\tm{t}}\leq\epsilon/4+\nu$}
                \State Remove $i$ from $\Uc\tm{t}$.
            \EndIf
        \EndFor
        \State $\nu \gets \nu + \epsilon/2$.
    \EndWhile
    \Ensure $\Uc\tm{t}$.
\end{algorithmic}
\end{algorithm}

\fi

\begin{theorem}[\eqref{eq:MDL-MM} upper bound]
\label{thm:MDL-MM-ub}
Let $T=\ceil*{\log_2 k}$, $\epsilon'=\frac{\epsilon}{32}$, $\delta'=\frac{\delta}{2T}$, $\Cond=\emptyset$ and $\Test = \MMTest$. Then, \Cref{alg:noisy-pers} solves \eqref{eq:MDL-MM} with sample complexity
\begin{align*}
    T_\MM\del{\eta_{1:k}^*, \eta_{1:k}, \epsilon, \delta} = O\del{ \log^2\del{\frac{k\del{\eta/\epsilon+1}}{\delta}} \del{ \frac{d\log\del{1/\epsilon}}{\epsilon\del{1-2\eta}} + \frac{k\del{\epsilon+\eta^*}}{\epsilon^2} } }
\end{align*}
\end{theorem}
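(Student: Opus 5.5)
We will prove \Cref{thm:MDL-MM-ub} in two parts: correctness and sample complexity. For correctness, we define a good event $E$ as the intersection, over rounds $t\in\sbr{T}$, of two events. The first is the ERM event of \Cref{lem:sl-opt} with accuracy $\epsilon'=\epsilon/32$ and confidence $\delta'=\delta/(2T)$. The second is the event that the implications of \Cref{lem:test-comp} hold for every active $i$ and every testing sub-iteration of $\MMTest$.

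A few remarks on this event:
\begin{itemize}
\item The number of sub-iterations in a round is at most $2\eta^*/\epsilon+2\le 2(\eta/\epsilon+1)$, by the argument preceding \Cref{alg:MM-test}.
\item Each test uses confidence $\delta''=\delta/(4(\eta/\epsilon+1)kT)$.
\item A union bound over rounds, distributions and sub-iterations gives $\Pb(E)\ge 1-T\delta'-2(\eta/\epsilon+1)kT\delta''\ge 1-\delta$.
\item Since samples in round $t$ are drawn fresh given the history, the conditional guarantees compose. In particular, \Cref{lem:test-comp} is applied to the aggregated sample $S_i\tm{t}$, which is still i.i.d.\ from $P_i$ and of the required size.
\end{itemize}

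On $E$, each round does two things. By the averaging argument, at least half of $\Uc\tm{t-1}$ satisfies $\err(f\tm{t};P_i)\le\epsilon/16+\eta_i^*\le \epsilon/16+\eta^*$. By Progress, the while loop then stops at the latest at guess $\nu=m\epsilon/2$, where $\eta^*\in((m-1)\epsilon/2,m\epsilon/2]$ (or $m=0$ if $\eta^*=0$). So at least half of $\Uc\tm{t-1}$ is removed.

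By Correctness, every removed $i$ was removed at some guess $\nu\le m\epsilon/2$. It therefore satisfies $\err(f\tm{t};P_i)\le \epsilon/2+\nu\le\epsilon+\eta^*$. One point needs care here: removals at the final guess may exceed half of the set, but this is harmless, since every removal is certified.

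Since $|\Uc\tm{t}|\le|\Uc\tm{t-1}|/2$, after $T=\ceil{\log_2 k}$ rounds the active set is empty. Hence every $\hat f_i$ is assigned and is $\epsilon$-competitive with $\eta^*$.

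For the sample complexity, the learning cost is $T\cdot T_\SL(\eta,\epsilon/32,\delta/(2T))$. This is
\[
O\del{\log k\cdot \frac{d\log(1/\epsilon)+\log(T/\delta)}{\epsilon(1-2\eta)}},
\]
which is absorbed into the stated bound.

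The testing cost follows from the augmentation scheme. Each distribution active in round $t$ ends with a single aggregated sample. Its size is at most the final target size
\[
T_\Tsf(m\epsilon/2,\epsilon/2,\delta'')\lesssim \log(2/\delta'')\,\frac{\epsilon+\eta^*}{\epsilon^2}.
\]
This is where sample reuse matters: the cost is not summed over the $m+1$ sub-iterations. Summing over active distributions gives at most $k$ per round, and $T$ rounds give
\[
O\del{\log k\,\log\del{\frac{k(\eta/\epsilon+1)\log k}{\delta}}\frac{k(\epsilon+\eta^*)}{\epsilon^2}}.
\]
The total is bounded by $\log^2\del{k(\eta/\epsilon+1)/\delta}$ times the bracket in the statement. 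Note that $\Cond=\emptyset$, so the separate-learning branch never runs.

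The main obstacle is the testing cost when the loop overshoots, or when $\eta^*$ falls at an interval boundary. We must confirm that the loop never advances past $\nu=m\epsilon/2$ on $E$, so that $m\le 2\eta^*/\epsilon+1$ bounds the sample size in terms of $\eta^*$ rather than $\eta$. This follows from the Progress guarantee at that guess.

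The second point to check is that the sequence $T_\Tsf(\nu,\cdot,\cdot)$ is nondecreasing in $\nu$, so that the augmentation sizes are nonnegative. This holds because $T_\Tsf$ is increasing in $\nu$.
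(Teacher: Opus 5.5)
Your proposal is correct in substance and follows essentially the same route as the paper. The shared ingredients are: the union bound with $\delta'=\delta/(2T)$ and $\delta''=\delta/(4(\eta/\epsilon+1)kT)$ over at most $2(\eta/\epsilon+1)$ sub-iterations per round; Progress forcing termination by the guess $\nu=m\epsilon/2\le\eta^*+\epsilon/2$; Correctness giving $\err(f\tm{t};P_i)\le\epsilon+\eta^*$; and the per-distribution testing cost bounded by the final aggregated size $T_\Tsf(\eta^*+\epsilon/2,\epsilon/2,\delta'')$ thanks to sample reuse.

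There is one small slip, which the paper's own proof shares. The halving $\abs{\Uc\tm{t}}\le\abs{\Uc\tm{t-1}}/2$ only gives $\abs{\Uc\tm{T}}\le k/2^T\le 1$ when $T=\ceil*{\log_2 k}$, so your claim that the active set is empty fails when $k$ is a power of two. For $k=1$ we have $T=0$, and for $k=2$ the loop may stop after removing a single distribution; either way one $\hat f_i$ can be left unassigned. Taking $T=\lfloor\log_2 k\rfloor+1$ fixes this and leaves the stated bound unchanged.
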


\begin{remark}[Condition for separate learning]
In our algorithm for \eqref{eq:MDL-MM}, we do not explicitly enforce a switching condition for separate learning because the bound in \Cref{thm:MDL-MM-ub} depends on the unknown $\eta^*$. However, if we use the upper bound $\eta$ as a proxy for $\eta^*$ and define
\begin{align*}
    \Cond = \cbr{ \frac{d}{\epsilon\del{1-2\eta}} + \frac{k\del{\epsilon+\eta}}{\epsilon^2} > \sum_{i=1}^k \frac{d}{\epsilon\del{1-2\eta_i}} }
\end{align*}
then we achieve a sample complexity of $\tilde O\del{ \sum_{i=1}^k \frac{d}{\epsilon\del{1-2\eta_i}} }$ if $\Cond$ holds, and $\tilde O\del{ \frac{d}{\epsilon\del{1-2\eta}} + \frac{k\del{\epsilon+\eta^*}}{\epsilon^2} }$ otherwise.
\end{remark} \fi
\section{Structured Hypothesis Testing}
\label{sec:SHT}
As a precursor to establishing the MDL lower bound, we first investigate a related problem. Throughout this section, we fix the noise level to $\eta=1/4$ to focus our analysis on the other problem parameters.

Let $\Fc$ be a hypothesis class with VC-dimension $d$. Let $P$ be a distribution over $\Xc\times\cbr{0,1}$ such that the Bayes classifier $f^*$ belongs to $\Fc$ and satisfies the noise condition $P\del{f^*\del{X}\neq Y\middle\vert X=x} = 1/4$ for all $x\in\Xc$. This corresponds to the standard RCN setting with fixed, known noise. We are given a query function $f:\Xc\to\cbr{0,1}$ (not necessarily in $\Fc$) and tasked with determining whether $f$ is $\epsilon$-optimal with respect to $P$. Specifically, the algorithm must draw i.i.d. samples from $P$ and output a decision $\Dsf\in\cbr{\Ysf,\Nsf}$ that satisfies the following conditions with probability at least $1-\delta$:
\begin{equation}
\tag{SHT}\label{eq:SHT}
\begin{aligned}
    &\text{If $\err\del{f;P} \leq 1/4+\epsilon/12$, output $\Dsf=\Ysf$.} \\[6pt]
    &\text{If $\err\del{f;P} \geq 1/4+\epsilon$, output $\Dsf=\Nsf$.}
\end{aligned}
\end{equation}
We refer to this problem as \textit{Structured Hypothesis Testing (SHT)}. We say that an algorithm $\Ac$ for~\eqref{eq:SHT} has sample complexity $T_\Ac:\del{0,1}^2\to\Nb$ if, for any valid instance $\del{\Fc,P,f,\epsilon,\delta}$, it satisfies the conditions above using at most $T_\Ac\del{\epsilon,\delta}$ samples.

\ifcolt \input{colt_sections/colt_sht_ub} \else \subsection{\texorpdfstring{\eqref{eq:SHT}}{SHT} Upper Bounds}
In this section, we present two distinct strategies for solving the~\eqref{eq:SHT} problem.

\paragraph{Agnostic Testing.}
Since the goal is to test whether $P\del{f\del{X}\neq Y}$ is $\epsilon$-close to $1/4$, we can simply disregard the supervised nature of the data and work directly with the empirical errors $\del{\Ib\cbr{f\del{X_t}\neq Y_t}}_{t=1}^T$. The problem then reduces to distinguishing between two Bernoulli biases separated by a gap of $\Theta\del{\epsilon}$, which requires a sample complexity of $\Theta\del{\log\del{1/\delta}/\epsilon^2}$.

\begin{lemma}[Testing via empirical errors]
\label{lem:sht-test-emp-err}
Let $f:\Xc\to\cbr{0,1}$ be a fixed function, and let $S\iid P$ be a sample of size
\begin{align*}
    \abs{S} \geq T_\Csf\del{\epsilon,\delta} \coloneqq \frac{72\log\del{2/\delta}}{\epsilon^2}
\end{align*}
Then, with probability at least $1-\delta$, the following are true:
\begin{itemize}
    \item If $\emperr\del{f;S}\leq1/4+\epsilon/6$, then $\err\del{f;P}\leq1/4+\epsilon$.
    
    \item If $\emperr\del{f;S}>1/4+\epsilon/6$, then $\err\del{f;P}>1/4+\epsilon/12$.
\end{itemize}
\end{lemma}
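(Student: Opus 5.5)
The plan is to treat this as a two-sided concentration statement for the empirical mean of the i.i.d. Bernoulli variables $Z_t = \Ib\cbr{f\del{X_t}\neq Y_t}$, $(X_t,Y_t)\in S$. These have mean $\err\del{f;P}$, and their average is $\emperr\del{f;S}$. The target event is
\begin{align*}
    \abs{\emperr\del{f;S}-\err\del{f;P}} \leq \frac{\epsilon}{12}.
\end{align*}
I will show this event has probability at least $1-\delta$ once $\abs{S}\geq T_\Csf\del{\epsilon,\delta}$, and then check that both bullet points follow deterministically from it.

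For the probability bound, I apply the two-sided Hoeffding inequality to variables in $\sbr{0,1}$ with deviation $t=\epsilon/12$. This gives failure probability at most $2\exp\del{-2\abs{S}\epsilon^2/144} = 2\exp\del{-\abs{S}\epsilon^2/72}$. With $\abs{S}\geq 72\log\del{2/\delta}/\epsilon^2$, this is at most $\delta$, so the constant $72$ matches exactly. No variance-sensitive (Bernstein) bound is needed, because the sample size is already the agnostic $1/\epsilon^2$ one.

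On the good event, the first bullet follows directly. If $\emperr\del{f;S}\leq 1/4+\epsilon/6$, then $\err\del{f;P}\leq \emperr\del{f;S}+\epsilon/12\leq 1/4+\epsilon/4\leq 1/4+\epsilon$.

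The second bullet also follows. If $\emperr\del{f;S}>1/4+\epsilon/6$, then $\err\del{f;P}\geq \emperr\del{f;S}-\epsilon/12> 1/4+\epsilon/6-\epsilon/12 = 1/4+\epsilon/12$.

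There is no real obstacle here. The only care needed is to choose the deviation $\epsilon/12$ so that it simultaneously fits the gap $\epsilon/6-\epsilon/12$ in the second bullet and yields the stated constant $72$ via Hoeffding.
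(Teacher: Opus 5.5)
Your proposal is correct and follows the paper's proof: apply Hoeffding with deviation $\epsilon/12$, which gives failure probability at most $\delta$ exactly when the sample size is $72\log(2/\delta)/\epsilon^2$. Both bullet points then follow deterministically on that event, via $1/4+\epsilon/6+\epsilon/12\leq 1/4+\epsilon$ and $1/4+\epsilon/6-\epsilon/12=1/4+\epsilon/12$.
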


\paragraph{Learning-Augmented Testing.}
An alternative approach is to first learn a reference hypothesis $\hat f$ that satisfies $\err\del{\hat f;P}\leq1/4+\epsilon$ with high probability. This can be achieved by performing ERM on a sample of size $O\del{d\log\del{1/\delta}/\epsilon}$. With $\hat f$ in hand, we can draw an additional sample to efficiently test the quality of a candidate function $f$. In the following lemma, we establish the validity of this approach for a general noise level $\eta$, as this broader result will be instrumental in later sections.

\begin{lemma}[From learning to testing]
\label{lem:sht-learn-to-test}
Let $P$ be a distribution over $\Xc\times\cbr{0,1}$ satisfying
\begin{align*}
    P\del{f^*\del{X}\neq Y\vert X=x} \leq \eta \quad\forall x\in\Xc
\end{align*}
for some $f^*:\Xc\to\cbr{0,1}$. Let $\eta^* \coloneqq P\del{f^*\del{X}\neq Y}$ denote the optimal error. Suppose we have access to a reference hypothesis $\hat f:\Xc\to\cbr{0,1}$ (possibly random) that satisfies
\begin{align*}
    \Pb\del{ \err\del{\hat f;P}\leq \eta^*+\frac{\epsilon}{48} } \geq 1-\frac{\delta}{3}
\end{align*}
Let $f:\Xc\to\cbr{0,1}$ be a fixed candidate function, and let $S\iid P$ be a sample drawn independently of $\hat f$, of size
\begin{align*}
    \abs{S} \geq T_\Lsf\del{\eta,\epsilon,\delta} \coloneqq \frac{96}{\epsilon\del{1-2\eta}} \log\del{\frac{6}{\delta}}
\end{align*}
Then, with probability at least $1-\delta$, the following are true:
\begin{itemize}
    \item If $\abs{\emperr\del{f;S}-\emperr\del{\hat f;S}}\leq\epsilon/3$, then $\err\del{f;P}\leq\eta^*+\epsilon$.
    
    \item If $\abs{\emperr\del{f;S}-\emperr\del{\hat f;S}}>\epsilon/3$, then $\err\del{f;P}>\eta^*+\epsilon/12$.
\end{itemize}
\end{lemma}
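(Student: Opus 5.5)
The plan is to reduce both bullets to a single Bernstein concentration bound for the \emph{difference} of losses. The point is that, under bounded noise, this difference has variance proportional to its mean, which is what allows a $1/\epsilon$ rather than a $1/\epsilon^2$ sample size.

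\textbf{Conditioning and notation.} Condition on $\hat f$; since $S$ is independent of $\hat f$, we may treat $\hat f$ as a fixed function. We also work on the event $\err(\hat f;P)\leq \eta^*+\epsilon/48$, which costs $\delta/3$. For a function $g$ write $\Delta_g\coloneqq\err(g;P)-\eta^*$. Define
\begin{align*}
Z \coloneqq \Ib\cbr{f(X)\neq Y}-\Ib\cbr{\hat f(X)\neq Y}\in\cbr{-1,0,1},
\end{align*}
so that $\Eb Z=\Delta_f-\Delta_{\hat f}$ and $\emperr(f;S)-\emperr(\hat f;S)$ is the empirical mean of $Z$.

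\textbf{Step 1: excess error controls disagreement with $f^*$.} Writing $\eta(x)\coloneqq P(f^*(X)\neq Y\mid X=x)\leq\eta$, we have for any $g$
\begin{align*}
\Delta_g=\Eb\sbr{(1-2\eta(X))\,\Ib\cbr{g(X)\neq f^*(X)}}\geq (1-2\eta)\,P\del{g(X)\neq f^*(X)}.
\end{align*}

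\textbf{Step 2: variance bound.} Since $Z\neq 0$ only when $f(X)\neq\hat f(X)$, the triangle inequality and Step 1 give
\begin{align*}
\Var(Z)\leq\Eb Z^2\leq P\del{f\neq \hat f}\leq \frac{\Delta_f+\Delta_{\hat f}}{1-2\eta}\leq\frac{\Delta_f+\epsilon/48}{1-2\eta}.
\end{align*}

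\textbf{Step 3: Bernstein.} Apply two-sided Bernstein to the mean of $Z$, splitting the remaining $2\delta/3$ budget between the two tails. With probability at least $1-2\delta/3$, with $L=\log(6/\delta)$ and $n=\abs{S}$,
\begin{align*}
\abs{\hat\Eb Z-\Eb Z}\leq\sqrt{2\Var(Z)L/n}+2L/(3n).
\end{align*}
Plugging in $n\geq 96L/(\epsilon(1-2\eta))$, the factor $(1-2\eta)$ cancels in the variance term. This gives a deviation of at most
\begin{align*}
\sqrt{\frac{\epsilon(\Delta_f+\epsilon/48)}{48}}+\frac{\epsilon}{144}.
\end{align*}
Here we also use $1-2\eta\leq 1$ in the second term.

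\textbf{Step 4a: first bullet.} We prove it in contrapositive form: if $\Delta_f>\epsilon$, then the empirical gap exceeds $\epsilon/3$. In this case $\Eb Z\geq\Delta_f-\epsilon/48$. The deviation from Step 3 is of order $\sqrt{\epsilon\Delta_f/48}$, which is at most a constant fraction (about $0.15$) of $\Delta_f$ once $\Delta_f\geq\epsilon$. Hence $\hat\Eb Z\geq \Delta_f-\epsilon/48-0.15\Delta_f-\epsilon/144>\epsilon/3$. Because the deviation bound is sublinear in $\Delta_f$, this holds uniformly for large $\Delta_f$; the case $\Delta_f\geq 1$ is trivial, since $\Var(Z)\leq 1$ anyway.

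\textbf{Step 4b: second bullet.} Again in contrapositive form: if $\Delta_f\leq\epsilon/12$, then $\Eb Z\in[-\epsilon/48,\epsilon/12]$. The deviation is then at most $\sqrt{5\epsilon^2/(48\cdot48)}+\epsilon/144\approx 0.054\epsilon$. Therefore $\abs{\hat\Eb Z}\leq\epsilon/3$.

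A union bound over the $\hat f$ event and the two Bernstein tails gives total failure probability at most $\delta$.

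The main obstacle is Step 2: we need a variance bound that scales with the excess errors, not with the raw noise level. Obtaining it requires the disagreement inequality of Step 1, which is exactly where bounded noise enters. After that, what remains is arithmetic checking that the constants $48$, $96$, $1/3$ and $1/12$ fit together; I expect these to have comfortable slack.
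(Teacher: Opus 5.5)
Your proof is correct, but it takes a different route from the paper. The paper never concentrates $\Ib\cbr{f(X)\neq Y}-\Ib\cbr{\hat f(X)\neq Y}$ directly.

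\textbf{The paper's route.} It applies Bernstein twice, to $\Ib\cbr{g(X)\neq Y}-\Ib\cbr{f^*(X)\neq Y}$ for $g=\hat f$ and for $g=f$, each at level $\delta/3$. It uses the standard offset variance bound $\Var\le(\err(g;P)-\eta^*)/(1-2\eta)$ together with AM--GM to get the linear inequality $\abs{\err(g;P)-\eta^*-(\emperr(g;S)-\emperr(f^*;S))}\le 3\epsilon/96+(\err(g;P)-\eta^*)/2$. It then passes between $f$ and $\hat f$ by a triangle inequality through the unobserved quantity $\emperr(f^*;S)$.

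\textbf{Your route.} You concentrate the observable difference $Z$ itself. You pay for this in the variance bound, via the disagreement triangle inequality $P(f\neq\hat f)\le P(f\neq f^*)+P(\hat f\neq f^*)$. You then do the case analysis directly with the square-root deviation; your observation that it is at most $\tfrac{7}{48}\Delta_f$ once $\Delta_f\ge\epsilon$ is the right one.

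\textbf{Comparison.} Your argument needs only one concentration event and no linearization, so it is slightly more economical. The paper's argument is more modular: the same linear inequality is applied verbatim to $f$ and $\hat f$. It also exhibits explicitly that $\hat f$ serves as an empirical stand-in for $f^*$, which is the point the paper later builds on.

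\textbf{One small slip.} Since $Z\in\cbr{-1,0,1}$, you only have $\abs{Z-\Eb Z}\le 2$ in general. The range term in Bernstein is therefore $4L/(3n)\le\epsilon/72$, not $2L/(3n)\le\epsilon/144$. Your margins easily absorb this:
\begin{itemize}
\item in Step 4a, the lower bound at $\Delta_f=\epsilon$ becomes about $0.82\epsilon>\epsilon/3$;
\item in Step 4b, the upper bound becomes $\epsilon/12+\sqrt5\epsilon/48+\epsilon/72\approx0.144\epsilon<\epsilon/3$.
\end{itemize}
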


In the context of~\eqref{eq:SHT}, we apply \Cref{lem:sht-learn-to-test} with $\eta=\eta^*=1/4$. This yields a sample size of
\begin{align*}
    T_\Lsf\del{\frac{1}{4},\epsilon,\delta} = \frac{192}{\epsilon} \log\del{\frac{6}{\delta}}
\end{align*}
By combining the agnostic and learning-augmented strategies into a single procedure (\Cref{alg:SHT}), we obtain the following guarantee.

\ifcolt

\begin{algorithm}
\DontPrintSemicolon
\setcounter{AlgoLine}{0}
\caption{SHT}
\label{alg:SHT}
\Input{Function class $\Fc$, distribution $P$, function $f:\Xc\to\cbr{0,1}$, parameters $\del{\epsilon,\delta}\in\del{0,1}^2$.}
\Output{Decision $\Dsf$.}
\eIf(\Comment{Agnostic Testing.}){$d\geq1/\epsilon$}{
    Sample $S\iid P$ of size $\abs{S}=T_\Csf\del{\epsilon,\delta}$

    \leIf{$\emperr\del{f;S} \leq 1/4+\epsilon/6$}{$\Dsf \gets \Ysf$}{$\Dsf \gets \Nsf$}
}(\Comment{Learning-Augmented Testing.}){
    $\hat f \gets \ERM_\Fc\del{S}$ where $S\iid P$ is of size $\abs{S} = T_\SL\del{1/4, \epsilon/48,\delta/3}$

    Sample $S'\iid P$ of size $\abs{S'} = T_\Lsf\del{1/4,\epsilon,\delta}$

    \leIf{$\abs{\emperr\del{f;S'}-\emperr\del{\hat f;S'}}\leq\epsilon/3$}{$\Dsf \gets \Ysf$}{$\Dsf \gets \Nsf$}
}
\end{algorithm}

\else

\begin{algorithm}
\caption{SHT}
\label{alg:SHT}
\begin{algorithmic}[1]
    \Require Function class $\Fc$, distribution $P$, function $f:\Xc\to\cbr{0,1}$, parameters $\del{\epsilon,\delta}\in\del{0,1}^2$. 
    \If{$d\geq1/\epsilon$} \Comment{Agnostic Testing.}
        \State Sample $S\iid P$ of size $\abs{S}=T_\Csf\del{\epsilon,\delta}$.
        \If{$\emperr\del{f;S} \leq 1/4+\epsilon/6$}
            \State $\Dsf \gets \Ysf$.
        \Else
            \State $\Dsf \gets \Nsf$.
        \EndIf
    \Else \Comment{Learning-Augmented Testing.}
        \State Sample $S\iid P$ of size $\abs{S} = T_\SL\del{1/4, \epsilon/48,\delta/3}$.
        \State $\hat f \gets \ERM_\Fc\del{S}$.
        \State Sample $S'\iid P$ of size $\abs{S'} = T_\Lsf\del{1/4,\epsilon,\delta}$.
        \If{$\abs{\emperr\del{f;S'}-\emperr\del{\hat f;S'}}\leq\epsilon/3$}
            \State $\Dsf \gets \Ysf$.
        \Else
            \State $\Dsf \gets \Nsf$.
        \EndIf
    \EndIf
    \Ensure Decision $\Dsf$.
\end{algorithmic}
\end{algorithm}

\fi

\begin{theorem}[\eqref{eq:SHT} upper bound]
\label{thm:SHT-ub}
\Cref{alg:SHT} solves the \eqref{eq:SHT} problem with sample complexity
\begin{align*}
    T_\SHT\del{\epsilon,\delta} = O\del{ \log\del{\frac{1}{\delta}} \min\cbr{\frac{1}{\epsilon^2},\frac{d}{\epsilon}} }
\end{align*}
\end{theorem}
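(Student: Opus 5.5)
The proof splits along the branch taken by \Cref{alg:SHT}. In each branch we check correctness on the high-probability event supplied by the earlier lemmas, then add up the samples.

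\textbf{Case $d\geq 1/\epsilon$ (agnostic testing).} The algorithm draws $T_\Csf(\epsilon,\delta)=72\log(2/\delta)/\epsilon^2$ samples, so \Cref{lem:sht-test-emp-err} applies with probability at least $1-\delta$. Suppose $\err(f;P)\leq 1/4+\epsilon/12$. The second bullet of the lemma, in contrapositive form, forces $\emperr(f;S)\leq 1/4+\epsilon/6$, so the algorithm outputs $\Ysf$. Suppose instead $\err(f;P)\geq 1/4+\epsilon$, which is strictly more than $1/4+\epsilon$ minus any slack. The first bullet, in contrapositive, then forces $\emperr(f;S)>1/4+\epsilon/6$. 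The boundary case of equality needs a moment's care, but one can assume the lemma's first bullet gives a strict conclusion or shrink constants slightly. Hence the algorithm outputs $\Nsf$. The sample count is $O(\log(1/\delta)/\epsilon^2)$, and since $1/\epsilon\leq d$ this equals $O(\log(1/\delta)\min\{1/\epsilon^2,d/\epsilon\})$.

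\textbf{Case $d<1/\epsilon$ (learning-augmented testing).} First, the statistical learning guarantee $T_\SL$ (the RCN/Massart ERM bound with $\eta=1/4$ and $f^*\in\Fc$) ensures that $\hat f$ satisfies $\err(\hat f;P)\leq 1/4+\epsilon/48=\eta^*+\epsilon/48$ with probability at least $1-\delta/3$. This is exactly the hypothesis of \Cref{lem:sht-learn-to-test} with $\eta=\eta^*=1/4$. Second, $S'$ is drawn independently of $\hat f$ and has size $T_\Lsf(1/4,\epsilon,\delta)$, so the lemma's conclusions hold with probability at least $1-\delta$. Matching the decision rule against the two bullets then gives correctness. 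If $\err(f;P)\leq 1/4+\epsilon/12$, the second bullet rules out a gap larger than $\epsilon/3$, so the algorithm outputs $\Ysf$. If $\err(f;P)\geq 1/4+\epsilon$, the first bullet rules out a gap of at most $\epsilon/3$ (modulo the same boundary nuance), so the algorithm outputs $\Nsf$.

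For the sample count in this case, $T_\SL(1/4,\epsilon/48,\delta/3)=O((d\log(1/\epsilon)+\log(1/\delta))/\epsilon)$ and $T_\Lsf=O(\log(1/\delta)/\epsilon)$. The main obstacle is the extra $\log(1/\epsilon)$ factor in the ERM term, which the theorem's bound omits. I would handle it in one of two ways. The first is to invoke the sharper Massart-noise ERM bound $O((d+\log(1/\delta))/(\epsilon(1-2\eta)))$, which holds without the log under bounded noise via localization, as in \Cref{app:noisy-class} and \cite{Boucheron2005-il}. The second is to absorb the factor by noting that this branch is used only when $d<1/\epsilon$. In that regime $d\log(1/\epsilon)/\epsilon$ is still at most $O(1/\epsilon^2)$, so the total stays within $\min\{1/\epsilon^2,d/\epsilon\}$ up to the stated form when combined with $\log(1/\delta)$. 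If neither works cleanly, the bound should be read with the logarithm implicit.

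Taking both cases together gives $T_\SHT(\epsilon,\delta)=O(\log(1/\delta)\min\{1/\epsilon^2,d/\epsilon\})$.
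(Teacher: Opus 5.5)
Your case split and the correctness checks in each branch are the paper's proof: \Cref{lem:sht-test-emp-err} handles $d\ge 1/\epsilon$, and for $d<1/\epsilon$ the ERM guarantee feeds \Cref{lem:sht-learn-to-test} with $\eta=\eta^*=1/4$. The boundary case is harmless. The lemmas' proofs actually give $\err(f;P)\le 1/4+\epsilon/4$ and $\err(f;P)\le\eta^*+41\epsilon/48$ in the first bullets, both strictly below $1/4+\epsilon$.

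Where your write-up goes wrong is the $\log(1/\epsilon)$ factor. You are right that it is there. The learning branch draws $T_\SL(1/4,\epsilon/48,\delta/3)+T_\Lsf(1/4,\epsilon,\delta)$ samples, of order $(d\log(48/\epsilon)+\log(1/\delta))/\epsilon$, and the paper's proof simply writes this as $O(\frac{d}{\epsilon}\log\frac{1}{\delta})$. Neither of your fixes removes it.

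Your first fix changes the analysis, not the algorithm. \Cref{alg:SHT} draws exactly $T_\SL(1/4,\epsilon/48,\delta/3)$ samples, and that quantity contains $d\log(48/\epsilon)$ by definition. A sharper ERM guarantee would only matter if you also shrank that sample size. Moreover, \Cref{app:noisy-class} only treats finite classes, and the standard localization bound for general VC classes under Massart noise still carries a logarithmic factor.

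Your second fix is false on two counts. First, $d<1/\epsilon$ does not give $d\log(1/\epsilon)/\epsilon=O(1/\epsilon^2)$; take $d\approx 1/(2\epsilon)$. Second, even if it did, a $1/\epsilon^2$ bound is not enough in this branch, where the minimum is $d/\epsilon<1/\epsilon^2$.

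In fact the literal claim fails for \Cref{alg:SHT} as written. With $\delta$ fixed and $d=1$, it uses $\Theta(\log(1/\epsilon)/\epsilon)$ samples, not $O(1/\epsilon)$. So your fallback is the right conclusion, and you should state it as the result rather than claim the log-free bound. What the argument (yours and the paper's) proves is $O\big(\log(1/\delta)\,(1+\log(1/\epsilon))\min\{1/\epsilon^2,d/\epsilon\}\big)$. That is the $\tilde O(\min\{1/\epsilon^2,d/\epsilon\})$ rate quoted in the overview.
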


This bound highlights a natural trade-off: in the low-precision regime where $\epsilon \gg 1/d$, the agnostic testing approach ($1/\epsilon^2$) is superior. However, in the high-precision regime where $\epsilon \ll 1/d$, exploiting the supervised structure of the data via the learning-augmented approach ($d/\epsilon$) yields an improvement. \fi
\ifcolt \input{colt_sections/colt_sht_lb} \else \subsection{\texorpdfstring{\eqref{eq:SHT}}{SHT} Lower Bound}
Recall that under the RCN assumption, ERM requires only $\tilde O\del{d/\epsilon}$ samples, a significant improvement over the $\tilde O\del{d/\epsilon^2}$ complexity of the fully agnostic setting. It is natural to ask whether the testing problem~\eqref{eq:SHT} can similarly be solved with $\tilde O\del{1/\epsilon}$ samples.

The fast rate for ERM relies on a critical variance bound: for any distribution $P$ satisfying the RCN condition under constant noise, we have that (see \Cref{app:noisy-class})
\begin{align*}
    \Var_P\del{\Ib\cbr{f\del{X}\neq Y} - \Ib\cbr{f^*\del{X}\neq Y}} \lesssim \err\del{f;P} - \err\del{f^*;P}
\end{align*}
That is, we can control the variance when we \textit{offset} by $f^*$. Bernstein's inequality then ensures that with high probability,
\begin{align*}
    \abs{ \err\del{f;P}-\err\del{f^*;P}- \del{\emperr\del{f;S}-\emperr\del{f^*;S}} } \lesssim \frac{1}{\abs{S}} + \err\del{f;P}-\err\del{f^*;P}
\end{align*}
for a sample $S\iid P$. If we had access to the quantity $\abs{\emperr\del{f;S}-\emperr\del{f^*;S}}$, we could perform the test efficiently using the reasoning of \Cref{lem:sht-learn-to-test}. In fact, we showed that having access to an $\epsilon$-optimal hypothesis (a proxy for $f^*$) allows us to test with just $\tilde O\del{1/\epsilon}$ additional samples.

However, a fundamental obstacle remains: we do not know $f^*$. Although we know its population error is $\err\del{f^*;P}=1/4$, we cannot readily use its empirical error $\emperr\del{f^*;S}$ as a reference point because the concentration of $\abs{\err\del{f^*;P}-\emperr\del{f^*;S}}$ is slow, scaling with $\tilde O\del{1/\sqrt{\abs{S}}}$. Notably, the ERM sample size of $\tilde O\del{d/\epsilon}$ ensures uniform convergence of the excess risks (the offset counterparts), but not of the absolute errors $\abs{\err\del{f;P}-\emperr\del{f;S}}$.

In the following result, we prove that the trade-off achieved by \Cref{alg:SHT} is optimal. Specifically, we show that the sample complexity is lower-bounded by the minimum of the agnostic rate and the learning-augmented rate.

\begin{theorem}[\eqref{eq:SHT} lower bound]
\label{thm:SHT-lb}
Let $\delta\leq1/4$. Any algorithm $\Ac$ that solves~\eqref{eq:SHT} requires a sample size of
\begin{align*}
    T_\Ac\del{\epsilon,\delta} \geq \frac{3}{16}\log\del{1+\log 2} \min\cbr{ \frac{1}{\epsilon^2}, \frac{d}{\epsilon} } 
\end{align*}
\end{theorem}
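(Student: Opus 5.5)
We prove the bound by Le Cam's two-point method. We construct a $\Ysf$ instance $P_0$ and a family of $\Nsf$ instances $\cbr{P_\sigma}$ with the same query $f$. We then show that the sample distribution under $P_0$ is close to the $\sigma$-averaged mixture of the $P_\sigma$ whenever $T\ll\min\cbr{1/\epsilon^2,d/\epsilon}$. The constant $\log(1+\log 2)$ indicates that we should bound the chi-square divergence: if $\chi^2(\bar P_1^{\otimes}\,\|\,P_0^{\otimes T})\le \log 2$, then the total variation is at most $1/2$. With $\delta\le 1/4$ the tester would need total variation at least $1-2\delta\geq 1/2$, a contradiction. We can therefore aim to show that $1+\chi^2\le \exp(c\,T\epsilon^2)$, or $\exp(cT\epsilon/d)$, and solve for $T$.

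\emph{Construction.} Let $m=\min\cbr{d,\lceil 1/\epsilon\rceil}$ and take $\Xc=[m]$ with uniform marginal. Let $\Fc$ be all functions on $[m]$, so its VC dimension is at most $d$. Fix $f\equiv 0$.
\begin{itemize}
\item Under $P_0$, set $f^*=f$ with RCN noise $1/4$, i.e.\ $Y\sim\Ber(1/4)$ independently of $X$. Then $\err(f;P_0)=1/4$, so the correct answer is $\Ysf$.
\item For the alternatives, pick a random sign pattern $\sigma\in\cbr{0,1}^m$ and let $f^*_\sigma=\sigma$ with noise $1/4$. Then $\err(f;P_\sigma)=1/4+\tfrac12\cdot\abs{\sigma}/m$.
\end{itemize}
This gap is far too large. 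To make it exactly of order $\epsilon$, we reweight the marginal: $f^*_\sigma$ disagrees with $f$ only on a set of mass $2\epsilon$. Concretely, place mass $2\epsilon/m$ on each of the $m$ points, with each point independently flipped with probability $1/2$. Put the remaining mass on a point $x_0$ where both hypotheses agree.

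A flipped point changes the label law from $\Ber(1/4)$ to $\Ber(3/4)$. Then $\err(f;P_\sigma)\approx 1/4+\epsilon\cdot(\text{fraction flipped})$. We fix the number of flipped points to be exactly half by taking $\sigma$ uniform over balanced patterns, or we use concentration; either way we land on $\err\ge 1/4+\epsilon$ after adjusting constants.

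\emph{Divergence computation.} We write the likelihood ratio of a $T$-sample under the mixture versus $P_0$. Using the standard identity, $1+\chi^2 = \Eb_{\sigma,\sigma'}\sbr{\del{\Eb_{P_0}[L_\sigma L_{\sigma'}]}^T}$ for per-sample likelihood ratios $L_\sigma$. On a point $x$ of mass $2\epsilon/m$, the cross term $\Eb[L_\sigma L_{\sigma'}\mid X=x]$ equals $1$ unless both patterns flip $x$, in which case it equals $\Eb_{\Ber(1/4)}[(dB_{3/4}/dB_{1/4})^2]=1+\chi^2(\Ber(3/4)\|\Ber(1/4))=1+4/3$. Hence
\[
\Eb_{P_0}[L_\sigma L_{\sigma'}] = 1+\frac{2\epsilon}{m}\cdot\frac{4}{3}\,\abs{\sigma\wedge\sigma'}.
\]
Since $\abs{\sigma\wedge\sigma'}$ is a sum of $m$ independent $\Ber(1/4)$ variables (approximately, for balanced patterns), we get
\[
1+\chi^2\le \Eb\exp\del{\tfrac{8T\epsilon}{3m}\abs{\sigma\wedge\sigma'}}\le \exp\del{\tfrac{m}{4}\del{e^{8T\epsilon/(3m)}-1}}.
\]

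\emph{Two regimes.}
\begin{itemize}
\item When $T\epsilon/m\lesssim 1$, this is $\exp(O(T\epsilon))$. That would only give $T\gtrsim 1/\epsilon$, so the bound needs sharpening; this is \textbf{the main obstacle}.
\item The fix is to exploit cancellation. We center the alternative by choosing flipped points with a perturbation that does not shift the overall label mean, so that the first-order term vanishes and only second-order overlaps remain. Concretely, use pairs of points: flip one member of each pair, with $f$ predicting correctly on the other. Alternatively, perturb the noise rate on each point by $\pm\epsilon$-scaled amounts, keeping $f^*$ as in $P_0$ on the $\Ysf$ side. Then the cross term becomes $1+c\,\tfrac{\epsilon}{m}\langle\tau,\tau'\rangle$ with Rademacher $\tau,\tau'$, and
\[
1+\chi^2\le \Eb\exp\del{\tfrac{cT\epsilon}{m}\langle\tau,\tau'\rangle}\le \exp\del{\tfrac{c^2T^2\epsilon^2}{2m}}.
\]
\item With $m=\min\cbr{d,1/\epsilon}$, requiring this to be at most $1+\log 2$ forces $T\gtrsim \sqrt{m}/\epsilon$. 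That is still not $\min\cbr{1/\epsilon^2,d/\epsilon}$.
\end{itemize}
So the final design must also let the per-point mass scale with $m$ so that each point's bias is of order one. The plan is to tune the pairing and perturbation so the exponent reads $T\cdot\max\cbr{\epsilon^2,\epsilon/d}$, and to verify the $\Nsf$ gap via the balanced-pattern restriction. Getting this tuning right is where I expect to spend the effort.
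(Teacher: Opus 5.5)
\textbf{There is a genuine gap: the proposal never reaches the bound.} The two constructions you actually analyze give only $T\gtrsim 1/\epsilon$ and $T\gtrsim \sqrt{m}/\epsilon\le\sqrt{d}/\epsilon$. The last step (``tune the pairing and perturbation'') is a plan, not an argument.

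The obstacle is also misdiagnosed. Centering is impossible here: with $f\equiv 0$ we have $\err\del{f;P}=\Pb\del{Y=1}$, so every $\Nsf$ instance must move the label mean by at least $\epsilon$. Perturbing the noise rates is not an option either, because it leaves the \eqref{eq:SHT} model, where the noise equals $1/4$ at every point.

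The real problem in your first construction is that you flip a constant fraction of the light points. Conditionally on $X\neq x_0$, which the tester observes, the mixture's label law is $\Ber\del{1/2}$ versus $\Ber\del{1/4}$. The single-sample $\chi^2$ is $2\epsilon/3$ (the $T=1$ case of your own formula), so $O\del{1/\epsilon}$ samples genuinely suffice.

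More fundamentally, restricting to $m\le d$ informative points, so that $\Fc$ can be all functions, is a dead end. For equal-mass constructions like yours, consider the collision statistic $\sum_{i<j}\Ib\cbr{X_i=X_j\neq x_0}\del{Y_i-\tfrac14}\del{Y_j-\tfrac14}$. It detects any alternative with flipped mass $2\epsilon$ after roughly $\sqrt{m}/\epsilon$ samples, whatever the number of flipped points. So the weak bound is intrinsic to the construction, not an artifact of the analysis.

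\textbf{The missing idea is dilution.} You need far more informative points than $d$, and you keep $\VC\del{\Fc}=d$ by restricting $\Fc$ rather than taking all functions.

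For $d\le 1/(2\epsilon)$ the paper uses the following construction:
\begin{itemize}
\item $\Xc=\llbr{d^2}$ with $p_X\del{0}=1-2d\epsilon$ and $p_X\del{x}=2\epsilon/d$ on $\sbr{d^2}$.
\item $\Fc=\cbr{f_0}\cup\cbr{\Ib_R: R\subset\sbr{d^2},\abs{R}=d}$, which has VC dimension $d$.
\item The null is $f^*=f_0$; the alternative is $f^*=\Ib_R$ for a uniformly random $d$-subset $R$, giving $\err\del{f_0;P_R}=1/4+\epsilon$ exactly.
\end{itemize}
Only a $1/d$ fraction of the light points is flipped, so the conditional label mean on light points shifts by just $1/(2d)$.

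Ingster's computation then gives $G\del{R,R'}=1+\frac{8\epsilon}{3d}\abs{R\cap R'}$, where $\abs{R\cap R'}\sim\HG\del{d^2,d,d}$ has mean $1$. Dominating its MGF by that of $\Bin\del{d,1/d}$ yields $1+\chi^2\le\exp\del{e^{\lambda}-1}$ with $\lambda=8\epsilon T/(3d)$. This is exactly your first bound with the prefactor $m/4$ (the mean overlap) replaced by $1$.

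Since $\delta\le 1/4$, the tester needs $\TV\ge 1/2$. Via $\TV\le\frac12\sqrt{\chi^2}$ this forces $\chi^2\ge 1$, i.e.\ $\exp\del{e^{\lambda}-1}\ge 2$, i.e.\ $T\ge\frac{3d}{8\epsilon}\log\del{1+\log 2}$. That is the true source of the constant, not a condition of the form ``$\chi^2\le\log 2$''. The $1/\epsilon^2$ branch follows by running the same construction with $1/(2\epsilon)$ in place of $d$ when $d>1/(2\epsilon)$.
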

\begin{proof}[Proof sketch of \Cref{thm:SHT-lb}]
We will show the bound $\Omega\del{d/\epsilon}$ for $d\leq\frac{1}{2\epsilon}$. When $d>\frac{1}{2\epsilon}$, we can simply choose a shattered set of size $\frac{1}{2\epsilon}$, so that the lower bound $\Omega\del{1/\epsilon^2}$ holds.

Hence, assume that $d\leq\frac{1}{2\epsilon}$ and suppose that $X$ is supported on $\Xc=\llbr{d^2}$. Let $p_X(0)=1-2d\epsilon$ and $p_X(x)=2\epsilon/d$ for each $x\in\sbr{d^2}$. Consider the following hypotheses:
\begin{equation*}
\begin{aligned}
    &\text{$H_0$: $f^*=f_0$, where $f_0\coloneqq x\mapsto 0$ is the constant zero function.} \\[6pt]
    &\text{$H_1$: $f^*$ equals $1$ precisely on a random subset $R\subset\sbr{d^2}$ of size $\abs{R}=d$.}
\end{aligned}
\end{equation*}
Note that we never need to shatter a set of size larger than $d$. Under $H_1$,
\begin{align*}
    \Pb\del{Y=1} = \del{1-2d\epsilon}\cdot\frac{1}{4} + \del{d^2-d}\cdot\frac{2\epsilon}{d}\cdot\frac{1}{4} + d\cdot\frac{2\epsilon}{d}\cdot\frac{3}{4} = \frac{1}{4}+\epsilon
\end{align*}
In particular, $f_0$ has errors $1/4$ under $H_0$ and $1/4+\epsilon$ under $H_1$, so that an~\eqref{eq:SHT} learner should be able to distinguish both hypotheses. Using Ingster's method, we can show that testing $H_0$ vs. $H_1$ under a uniform mixture over subsets $R$ requires $\Omega\del{d/\epsilon}$ samples.
\end{proof}

Combining \Cref{thm:SHT-ub} and~\cref{thm:SHT-lb}, we conclude that the agnostic and learning-augmented strategies fundamentally characterize the difficulty of the~\eqref{eq:SHT} problem, yielding a sample complexity of
\begin{align*}
    \tilde\Theta\del{ \min\cbr{\frac{1}{\epsilon^2},\frac{d}{\epsilon}} }
\end{align*} \fi
\ifcolt \input{colt_sections/colt_msht} \else \subsection{Multi-Distribution SHT}
We now extend the~\eqref{eq:SHT} framework to the multi-distribution setting. Let $P_1,\dots,P_k$ be distributions over $\Xc\times\cbr{0,1}$ that all satisfy the RCN condition
\begin{align*}
    P_i\del{f^*\del{X}\neq Y\middle\vert X=x} = \frac{1}{4} \quad \forall x\in\Xc
\end{align*}
with respect to a \textit{shared} unknown $f^*\in\Fc$. We are given candidate functions $f_1,\dots,f_k:\Xc\to\cbr{0,1}$ (not necessarily in $\Fc$), and our objective is to determine whether $\err\del{f_i;P_i}$ is $\epsilon$-optimal for every $i\in\sbr{k}$. 

More precisely, the algorithm draws a total of $T$ samples (possibly adaptively) from the distributions and outputs a decision vector $\del{\Dsf_1,\dots,\Dsf_k} \in\cbr{\Ysf, \Nsf}^k$. We require that for each $i\in\sbr{k}$, with probability at least $1-\delta$, the following condition holds: 
\begin{equation}
\tag{MSHT}\label{eq:MSHT}
\begin{aligned}
    &\text{If $\err\del{f_i;P_i} \leq 1/4+\epsilon/12$, output $\Dsf_i=\Ysf$.} \\[6pt]
    &\text{If $\err\del{f_i;P_i} \geq 1/4+\epsilon$, output $\Dsf_i=\Nsf$.}
\end{aligned}
\end{equation}
We say that an algorithm $\Ac$ for~\eqref{eq:MSHT} has sample complexity $T_\Ac:\del{0,1}^2\to\Nb$ if it satisfies this guarantee for any valid instance $\del{\Fc,P_{1:k},f_{1:k},\epsilon,\delta}$ using at most $T_\Ac\del{\epsilon,\delta}$ total samples.

\begin{remark}[\eqref{eq:MSHT} high-probability guarantee]
Note that we only require that each $\Dsf_i$ succeeds with high-probability. That is, we do not require \textit{all} decisions to be correct simultaneously. The reason for this formulation is to avoid a $\log k$ factor when reducing to~\eqref{eq:MDL-RCN}. Of course, one can easily ensure simultaneous success by replacing $\delta$ with $\delta/k$ and taking a union bound over $i\in\sbr{k}$.
\end{remark}

\ifcolt

\begin{algorithm}[H]
\DontPrintSemicolon
\setcounter{AlgoLine}{0}
\caption{MSHT}
\label{alg:MSHT}
\Input{Function class $\Fc$, distributions $P_1,\dots,P_k$, functions $f_1,\dots,f_k:\Xc\to\cbr{0,1}$, parameters $\del{\epsilon,\delta}\in\del{0,1}^2$.}
\Output{Decision vector $\del{\Dsf_1,\dots,\Dsf_k}$.}
\eIf{$d\geq1/\epsilon$}{
    \For{$i=1,\dots,k$}{
        Sample $S_i\iid P_i$ of size $\abs{S_i}=T_\Csf\del{\epsilon,\delta}$

        \leIf{$\emperr\del{f_i;S_i} \leq 1/4+\epsilon/6$}{$\Dsf_i \gets \Ysf$}{$\Dsf_i \gets \Nsf$}
    }
}{
    \For{$i=1,\dots,k$}{
        $\hat f_i \gets \ERM_\Fc\del{S_i}$ where $S_i\iid P_i$ is of size $\abs{S_i} = T_\SL\del{1/4, \epsilon/48,\delta/3}$

        Sample $S_i'\iid P_i$ of size $\abs{S_i'} = T_\Lsf\del{1/4,\epsilon,\delta}$

        \leIf{$\abs{\emperr\del{f_i;S_i'}-\emperr\del{\hat f_i;S_i'}}\leq\epsilon/3$}{$\Dsf_i \gets \Ysf$}{$\Dsf_i \gets \Nsf$}
    }
}
\end{algorithm}

\else

\begin{algorithm}
\caption{MSHT}
\label{alg:MSHT}
\begin{algorithmic}[1]
    \Require Function class $\Fc$, distributions $P_1,\dots,P_k$, functions $f_1,\dots,f_k:\Xc\to\cbr{0,1}$, parameters $\del{\epsilon,\delta}\in\del{0,1}^2$.
    \If{$d\geq1/\epsilon$}
        \For{$i=1,\dots,k$}
            \State Sample $S_i\iid P_i$ of size $\abs{S_i}=T_\Csf\del{\epsilon,\delta}$.
            \If{$\emperr\del{f_i;S_i} \leq 1/4+\epsilon/6$}
                \State $\Dsf_i \gets \Ysf$.
            \Else
                \State $\Dsf_i \gets \Nsf$.
            \EndIf
        \EndFor
    \Else
        \For{$i=1,\dots,k$}
            \State Sample $S_i\iid P_i$ of size $\abs{S_i}= T_\SL\del{1/4, \epsilon/48,\delta/3}$.
            \State $\hat f_i \gets \ERM_\Fc\del{S_i}$.
            \State Sample $S_i'\iid P_i$ of size $\abs{S_i'} = T_\Lsf\del{1/4,\epsilon,\delta}$.
            \If{$\abs{\emperr\del{f_i;S_i'}-\emperr\del{\hat f_i;S_i'}}\leq\epsilon/3$}
                \State $\Dsf_i \gets \Ysf$.
            \Else
                \State $\Dsf_i \gets \Nsf$.
            \EndIf
        \EndFor
    \EndIf
    \Ensure Decision vector $\del{\Dsf_1,\dots,\Dsf_k}$.
\end{algorithmic}
\end{algorithm}

\fi

To solve the~\eqref{eq:MSHT} problem, we apply the strategy of \Cref{alg:SHT} to each distribution individually. This procedure is detailed in \Cref{alg:MSHT}, and its sample complexity is established in \Cref{thm:MSHT-ub}.

\begin{theorem}[\eqref{eq:MSHT} upper bound]
\label{thm:MSHT-ub}
\Cref{alg:MSHT} solves the \eqref{eq:MSHT} problem with sample complexity
\begin{align*}
    T_\MSHT\del{\epsilon,\delta} = O\del{ k\log\del{\frac{1}{\delta}} \min\cbr{\frac{1}{\epsilon^2},\frac{d}{\epsilon}} }
\end{align*}
\end{theorem}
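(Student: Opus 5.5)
The plan is to reduce \Cref{thm:MSHT-ub} to the single-distribution guarantee, applied once per index. \Cref{alg:MSHT} runs exactly the two branches of \Cref{alg:SHT} independently on each $P_i$ with candidate $f_i$. The \eqref{eq:MSHT} requirement is per-coordinate: each $\Dsf_i$ must be correct with probability at least $1-\delta$ on its own. So no union bound over $i$ is needed, and it suffices to show that the procedure run on $(P_i,f_i)$ is a valid \eqref{eq:SHT} algorithm for that instance. Each $P_i$ satisfies the RCN condition with noise $1/4$ and Bayes classifier $f^*\in\Fc$, so $(\Fc,P_i,f_i,\epsilon,\delta)$ is a valid \eqref{eq:SHT} instance.

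For correctness I would treat the two branches separately.
\begin{itemize}
\item If $d\geq 1/\epsilon$, I invoke \Cref{lem:sht-test-emp-err} with $|S_i|=T_\Csf(\epsilon,\delta)$. With probability $1-\delta$:
  \begin{itemize}
  \item outputting $\Ysf$ implies $\err(f_i;P_i)\leq 1/4+\epsilon$;
  \item outputting $\Nsf$ implies $\err(f_i;P_i)>1/4+\epsilon/12$.
  \end{itemize}
  Contrapositively, $\err\leq 1/4+\epsilon/12$ forces $\Ysf$. Also, $\err\geq 1/4+\epsilon$ forces $\Nsf$, except at the boundary case of equality.
\item If $d<1/\epsilon$, ERM on $T_\SL(1/4,\epsilon/48,\delta/3)$ samples gives $\err(\hat f_i;P_i)\leq 1/4+\epsilon/48$ with probability $1-\delta/3$, using the ERM guarantee (\Cref{lem:sl-opt} with a single distribution and $\eta_i^*=1/4$). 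The fresh sample $S_i'$ is independent of $\hat f_i$, so \Cref{lem:sht-learn-to-test} with $\eta=\eta^*=1/4$ yields the same pair of implications. These translate into the \eqref{eq:SHT} conditions exactly as in the first branch.
\end{itemize}

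The boundary case $\err=1/4+\epsilon$ requires care, since the lemmas only give the strict inequality. I would handle it by applying the lemmas with a slightly smaller $\epsilon$, which costs only constants, or accept it as the paper's convention. I expect this to be the only delicate point.

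For the sample count, the first branch uses $k\cdot 72\log(2/\delta)/\epsilon^2$ samples. Since $d\geq 1/\epsilon$, we have $1/\epsilon^2\leq d/\epsilon$, so this is $O(k\log(1/\delta)\min\{1/\epsilon^2,d/\epsilon\})$. The second branch uses $k\bigl(T_\SL(1/4,\epsilon/48,\delta/3)+T_\Lsf(1/4,\epsilon,\delta)\bigr)=O\bigl(k(d\log(1/\epsilon)+\log(1/\delta))/\epsilon\bigr)$, and here $d/\epsilon<1/\epsilon^2$. The main obstacle is the $\log(1/\epsilon)$ factor from $T_\SL$, which the stated bound does not display. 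I would absorb it either by invoking an ERM bound without the log (available for RCN with constant noise via localization, see \Cref{app:noisy-class}) or by reading the $O$ as hiding it, matching the claimed $O\bigl(k\log(1/\delta)\min\{1/\epsilon^2,d/\epsilon\}\bigr)$. Equivalently, the whole theorem is just $k$ times \Cref{thm:SHT-ub}, which I would cite directly for the per-distribution cost.
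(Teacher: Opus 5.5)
Your proposal is correct and is essentially the paper's proof. The paper runs the two branches of \Cref{alg:SHT} on each $(P_i,f_i)$, invokes \Cref{lem:sht-test-emp-err} or \Cref{lem:sht-learn-to-test} per index with no union bound over $i$ (the per-coordinate guarantee makes it unnecessary), and multiplies the per-distribution cost of \Cref{thm:SHT-ub} by $k$.

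Your two caveats are minor. The lemmas' proofs have enough slack to cover the boundary case $\err(f_i;P_i)=1/4+\epsilon$. The paper also silently drops the $\log(1/\epsilon)$ coming from $T_\SL$, so reading the $O$ as hiding that factor is the right resolution. Your alternative, a log-free localized ERM bound, is not supplied by the appendix, which only treats finite classes.
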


Testing each distribution separately results in a $k$-fold increase in sample complexity relative to the single-distribution setting. We now show that this linear scaling is unavoidable. In particular, the following result establishes that in the worst case, no algorithm can solve~\eqref{eq:MSHT} without essentially performing independent tests.

\begin{theorem}[Multi-Distribution SHT lower bound]
\label{thm:MSHT-lb}
Let $\delta\leq0.01$ and $d\geq8\epsilon$. Any algorithm $\Ac$ that solves~\eqref{eq:MSHT} requires a sample size of
\begin{align*}
    T_\Ac\del{\epsilon,\delta} \geq \frac{0.015k}{4} \min\cbr{ \frac{1}{\epsilon^2}, \frac{d}{\epsilon} } 
\end{align*}
\end{theorem}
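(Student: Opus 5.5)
We reduce~\eqref{eq:SHT} to~\eqref{eq:MSHT} by a planting argument, and combine it with the hard instance from the proof of \Cref{thm:SHT-lb}. Fix an~\eqref{eq:MSHT} algorithm $\Ac$ using at most $T$ samples in total. Take the hard pair from \Cref{thm:SHT-lb}: the marginal $p_X$ on $\llbr{d^2}$ (or on a shattered set of size $\frac{1}{2\epsilon}$ when $d>\frac{1}{2\epsilon}$), the null $H_0$ with $f^*=f_0$, and the alternative $H_1$ where $f^*=\Ib_R$ for a uniformly random $R$ of size $d$.

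\textbf{The multi-distribution instance.} Let $Q_0$ and $Q_1^R$ be the single-distribution laws under $H_0$ and under $H_1$ with a given $R$. All $P_i$ must share one Bayes classifier, so we cannot put $H_0$ on some coordinates and $H_1$ on others with different $f^*$. Instead we use the mirror construction on a disjoint copy of the domain. Each $P_i$ lives on $\Xc\times\cbr{0,1}$, where $\Xc$ is a disjoint union of $k$ blocks $\Xc^{(j)}$ (or a common block), and distribution $P_i$ places all its mass on block $i$. The shared $f^*$ is $\Ib_{R_j}$ on block $j$, with $R_j=\emptyset$ for unplanted blocks. Because of the disjoint supports, the VC dimension needed is the largest size of a union $\bigcup_j R_j$ that must be realized. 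Only one block is planted, so a class of VC dimension $d$ (sets of size at most $d$ in the union) suffices. This is exactly the ``extension preserves VC dimension'' point mentioned in the overview.

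Set $f_i=f_0$ for every $i$. Under the global null, every $P_i$ is $Q_0$ on its block, so $\err(f_i;P_i)=1/4$ and correctness forces $\Dsf_i=\Ysf$ with probability $\geq 1-\delta$. Under the planted alternative with index $J$ uniform on $\sbr{k}$, $P_J=Q_1^{R}$ has $\err(f_J;P_J)=1/4+\epsilon$, so $\Dsf_J=\Nsf$ with probability $\geq1-\delta$. The remaining $P_i$ stay equal to $Q_0$.

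\textbf{Reduction and counting.} Let $N_i$ be the number of samples $\Ac$ draws from $P_i$ under the global null, so $\sum_i \Eb_0[N_i]\leq T$. By Markov's inequality, at least half of the indices satisfy $\Eb_0 N_i\leq 2T/k$, and then $\Pb_0(N_i> 40T/k)\le 0.05$. For such an index $i$ we build an~\eqref{eq:SHT} tester for $H_0$ versus $H_1$. It simulates $\Ac$, answers queries to $P_i$ with real samples (truncating at $40T/k$), and simulates the other $P_j$ as $Q_0$ exactly, which needs no knowledge of $R$. It then outputs $\Dsf_i$.

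Under $H_0$ the simulation is exact, so the error is at most $\delta+0.05$. Under $H_1$ the simulation is exactly the planted instance with $J=i$, and the truncation changes the outcome on at most the event $N_i>40T/k$. That event has nearly the same probability under both hypotheses, because the transcript before crossing the threshold is a stopping-time-measurable event and so is controlled by the same TV bound. Hence the tester uses at most $40T/k$ samples and has error at most about $0.06+\delta\le 1/4$.

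Ingster's chi-square bound from \Cref{thm:SHT-lb} shows that $\TV(Q_0^{\otimes n},\Eb_R (Q_1^R)^{\otimes n})$ is small unless $n\gtrsim\min\cbr{1/\epsilon^2,d/\epsilon}$. This forces $40T/k\gtrsim \min\cbr{1/\epsilon^2,d/\epsilon}$, which is the claimed bound after tracking constants. The condition $d\ge 8\epsilon$ only ensures the construction is nondegenerate.

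\textbf{Main obstacle.} The delicate step is handling adaptivity and truncation while keeping the constant-error guarantee. The tester must not exceed the sample budget even under $H_1$, where $N_i$ may be larger than under $H_0$. Bounding the discrepancy requires arguing via a stopping time at sample count $40T/k$, applying the chi-square bound to the first $40T/k$ samples, and paying an additive TV term. A second subtle point is confirming that the union of planted sets across blocks stays within VC dimension $d$. If a common block is used instead of disjoint blocks, the unplanted distributions must use marginals that avoid $\sbr{d^2}$ (for example, all mass at $0$) so that they remain consistent with the planted $f^*$.
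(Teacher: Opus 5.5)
Your proposal is correct and follows essentially the paper's route: plant the SHT hard instance in one block of a disjoint-block class of VC dimension $d$, simulate the other distributions under the null, truncate the number of queries to the planted distribution, and contradict the single-distribution Ingster bound. The differences are only bookkeeping. The paper argues for every index $i$ (showing $\Pb_0(T_i > 0.05d/(2\epsilon)) > 0.3$ and summing expectations) rather than picking a good index by averaging, and it outputs $\Ysf$ with probability $0.24$ on truncation, which makes the simulator a valid black-box solver for the base problem without controlling the truncation event under $H_1$; simply outputting $\Nsf$ on truncation would likewise make your stopping-time TV step unnecessary.
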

\begin{proof}[Proof sketch of \Cref{thm:MSHT-lb}]
Fix $\epsilon\in\del{0,1}$. Again, we focus on the setting where $d\leq\frac{1}{2\epsilon}$ and aim to show the lower bound $\Omega\del{kd/\epsilon}$. This immediately implies the $\Omega\del{k/\epsilon^2}$ bound when $d>\frac{1}{2\epsilon}$.

In essence, we will construct distributions whose $X$-marginals will be supported on consecutive disjoint sets of size $d^2+1$, each with the same weights as in the~\eqref{eq:SHT} lower bound. We then consider the following testing problem:
\begin{equation*}
\begin{aligned}
    &\text{$H_0$: $f^*=f_0$.} \\[6pt]
    &\text{$H_1$: $f^*$ equals $1$ precisely on a size-$d$ subset on the support of one of the distributions.}
\end{aligned}
\end{equation*}
which still only requires shattering a set of size $d$. We then show that the learner must sample $\Omega\del{d/\epsilon}$ times from each distribution by constructing an~\eqref{eq:SHT} algorithm that simulates the~\eqref{eq:MSHT} one.
\end{proof}

Combining the upper and lower bounds, we conclude that the sample complexity of the~\eqref{eq:MSHT} problem is
\begin{align*}
    \tilde\Theta\del{ k \cdot \min\cbr{\frac{1}{\epsilon^2},\frac{d}{\epsilon}} }
\end{align*} \fi
\section{MDL Lower Bound}
\label{sec:mdl-lb}
As in \Cref{sec:SHT}, we establish the lower bound under RCN noise rate $\eta_i=\eta_i^*=1/4$ for all $i\in\sbr{k}$. Note that in this setting, all variants of the MDL problem coincide.

Our argument relies on the observation that the testing problem~\eqref{eq:MSHT} is \textit{necessary} for MDL. That is, any algorithm that solves MDL can be readily converted into a solver for~\eqref{eq:MSHT}. Specifically, we can treat the MDL algorithm as a black box to learn proxies for the optimal classifier, and then use them to solve the testing problem with only $O\del{\log\del{1/\delta}/\epsilon}$ additional samples per distribution. We formally describe this reduction in \Cref{alg:MDL-MSHT} and derive its sample complexity in the following theorem.

\ifcolt

\begin{algorithm}
\DontPrintSemicolon
\setcounter{AlgoLine}{0}
\caption{MDL to MSHT}
\label{alg:MDL-MSHT}
\Input{Function class $\Fc$, distributions $P_1,\dots,P_k$, functions $f_1,\dots,f_k:\Xc\to\cbr{0,1}$, parameters $\del{\epsilon,\delta}\in\del{0,1}^2$, MDL algorithm $\Ac$ with sample complexity $T_\Ac$.}
\Output{Decision vector $\del{\Dsf_1,\dots,\Dsf_k}$.}
$\del{\hat f_1,\dots,\hat f_k} \gets \Ac$ where we run $\Ac$ for $T_\Ac\del{\epsilon/48,\delta/3}$ rounds.

\For{$i=1,\dots,k$}{
    Sample $S_i\iid P_i$ of size $\abs{S_i} = T_\Lsf\del{1/4,\epsilon, \delta}$

    \leIf{$\abs{\emperr\del{f_i;S_i}-\emperr\del{\hat f_i;S_i}}\leq\epsilon/3$}{$\Dsf_i \gets \Ysf$}{$\Dsf_i \gets \Nsf$}
}
\end{algorithm}

\else

\begin{algorithm}
\caption{MDL to MSHT}
\label{alg:MDL-MSHT}
\begin{algorithmic}[1]
    \Require Function class $\Fc$, distributions $P_1,\dots,P_k$, functions $f_1,\dots,f_k:\Xc\to\cbr{0,1}$, parameters $\del{\epsilon,\delta}\in\del{0,1}^2$, MDL algorithm $\Ac$ with sample complexity $T_\Ac$.
    \State $\del{\hat f_1,\dots,\hat f_k} \gets \Ac$ where we run $\Ac$ for $T_\Ac\del{\epsilon/48,\delta/3}$ rounds.
    \For{$i=1,\dots,k$}
        \State Sample $S_i\iid P_i$ of size $\abs{S_i} = T_\Lsf\del{1/4,\epsilon, \delta}$.
        \If{$\abs{\emperr\del{f_i;S_i}-\emperr\del{\hat f_i;S_i}}\leq\epsilon/3$}
            \State $\Dsf_i \gets \Ysf$.
        \Else
            \State $\Dsf_i \gets \Nsf$.
        \EndIf
    \EndFor
    \Ensure Decision vector $\del{\Dsf_1,\dots,\Dsf_k}$.
\end{algorithmic}
\end{algorithm}

\fi

\begin{lemma}[MDL to MSHT upper bound]
\label{lem:MDL-MSHT-ub}
\Cref{alg:MDL-MSHT}, under an MDL algorithm $\Ac$ with sample complexity $T_\Ac$, solves the \eqref{eq:MSHT} problem with sample complexity
\begin{align*}
    T_{\Ac\to\MSHT}\del{\epsilon,\delta} = T_\Ac\del{\frac{\epsilon}{48},\frac{\delta}{3}} + \frac{192k\log\del{6/\delta}}{\epsilon}
\end{align*}
\end{lemma}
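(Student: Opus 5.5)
The plan is to verify separately that \Cref{alg:MDL-MSHT} is correct for each coordinate $i$, and then add up the samples it draws.

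\textbf{Sample count.} The count is immediate. The MDL algorithm $\Ac$ uses at most $T_\Ac\del{\epsilon/48,\delta/3}$ samples. The testing phase draws $\abs{S_i}=T_\Lsf\del{1/4,\epsilon,\delta}=\frac{96}{\epsilon\del{1-1/2}}\log\del{6/\delta}=\frac{192}{\epsilon}\log\del{6/\delta}$ samples for each of the $k$ distributions. Summing gives the stated $T_{\Ac\to\MSHT}$.

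\textbf{Correctness: supplying the reference hypothesis.} Fix $i\in\sbr{k}$. In the \eqref{eq:MSHT} setting every $P_i$ satisfies RCN with noise exactly $1/4$ relative to the shared $f^*\in\Fc$. Hence $\eta_i=\eta_i^*=1/4$ for all $i$, and all three MDL benchmarks coincide with $\OPT_i=1/4=\eta_i^*$. Since $\Ac$ solves MDL with accuracy $\epsilon/48$ and confidence $\delta/3$, objective \eqref{eq:gen-MDL-obj} gives
\[
\Pb\del{\err\del{\hat f_i;P_i}\leq \eta_i^*+\epsilon/48}\geq 1-\delta/3 .
\]
This holds simultaneously for all $i$, so in particular for our fixed $i$. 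This is exactly the reference-hypothesis hypothesis of \Cref{lem:sht-learn-to-test}, applied with $P=P_i$, $\eta=\eta^*=1/4$ and $f=f_i$. The sample $S_i$ is drawn fresh after $\Ac$ terminates, so it is independent of $\hat f_i$, as that lemma requires. Its size is $T_\Lsf\del{1/4,\epsilon,\delta}$.

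\textbf{Correctness: reading off the test.} With probability at least $1-\delta$, both implications of \Cref{lem:sht-learn-to-test} hold. Suppose $\err\del{f_i;P_i}\leq 1/4+\epsilon/12$. The contrapositive of the second bullet shows $\abs{\emperr\del{f_i;S_i}-\emperr\del{\hat f_i;S_i}}\leq\epsilon/3$, so the algorithm outputs $\Dsf_i=\Ysf$. Suppose instead $\err\del{f_i;P_i}\geq 1/4+\epsilon$. The contrapositive of the first bullet shows the difference exceeds $\epsilon/3$, so the algorithm outputs $\Nsf$. This is precisely the per-coordinate guarantee in \eqref{eq:MSHT}.

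\textbf{Potential obstacle.} The only subtle point is that $\hat f_i$ is random and depends on $\Ac$'s adaptive sampling. I expect this to be harmless. I would handle it by conditioning on the entire run of $\Ac$: given that run, $S_i$ is still i.i.d.\ from $P_i$, and the reference-hypothesis condition is stated as a probability over the randomness of $\hat f_i$, which is exactly what \Cref{lem:sht-learn-to-test} accommodates.
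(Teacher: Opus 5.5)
Your proposal is correct and follows the paper's proof. MDL correctness at accuracy $\epsilon/48$ and confidence $\delta/3$ supplies the reference hypotheses $\hat f_i$. \Cref{lem:sht-learn-to-test} with $\eta=\eta^*=1/4$, applied on the fresh sample $S_i$, then gives per-coordinate correctness, and the sample count is $T_\Ac\del{\epsilon/48,\delta/3}+k\,T_\Lsf\del{1/4,\epsilon,\delta}$. One small point: at the boundary $\err\del{f_i;P_i}=1/4+\epsilon$ the contrapositive of the first bullet does not literally apply, but the lemma's proof actually yields $\err\del{f;P}-\eta^*\leq 41\epsilon/48<\epsilon$, so that case is covered.
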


Recall that our proposed strategies for~\eqref{eq:MDL-RCN} and~\eqref{eq:MDL-MM} achieve a sample complexity of $\tilde O\del{\min\cbr{dk/\epsilon, d/\epsilon+k/\epsilon^2}}$. We now show that this rate is optimal, up to logarithmic factors.

\begin{theorem}[MDL lower bound]
\label{thm:MDL-lb}
Let $\delta\leq0.01/3$, $d\geq384\epsilon$ and $\min\cbr{d,1/\epsilon}\geq 4\cdot10^7$. Any MDL algorithm $\Ac$ requires a sample size of
\begin{align*}
    T_\Ac\del{\epsilon,\delta} = \Omega\del{ \frac{d}{\epsilon} + k\cdot\min\cbr{\frac{1}{\epsilon^2}, \frac{d}{\epsilon}} }
\end{align*}
\end{theorem}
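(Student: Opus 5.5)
The lower bound has two terms, and I will establish each separately. Since $\max\cbr{a,b}\geq (a+b)/2$, proving each term alone suffices.

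\textbf{The $d/\epsilon$ term.} This is the standard single-distribution PAC lower bound under RCN with constant noise $\eta=1/4$. With $k\geq1$ distributions, an MDL algorithm must in particular output an $\epsilon$-optimal $\hat f_1$ for $P_1$, so it solves single-distribution learning. I will take all $P_i$ equal, or simply ignore $P_2,\dots,P_k$ by letting them be uninformative copies that a single-task learner can simulate. The classical bound $\Omega\del{d/\epsilon}$ then applies; it holds for learning under bounded noise with $\eta$ bounded away from $1/2$. If an explicit argument is needed, I will use the construction from the \Cref{thm:SHT-lb} sketch: a shattered set of size $d$ with small mass $\Theta\del{\epsilon/d}$ per point, combined with Fano/Assouad over random labelings. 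Each point needs $\Omega\del{1}$ observations to determine its label under constant noise, and each observation costs $\Theta\del{d/\epsilon}$ samples. The condition $d\geq384\epsilon$ and the largeness assumptions ensure the masses are valid.

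\textbf{The testing term.} Here I use the reduction of \Cref{lem:MDL-MSHT-ub} together with \Cref{thm:MSHT-lb}. In the regime $\eta_i=\eta_i^*=1/4$ all MDL variants coincide, so the instances constructed in \Cref{thm:MSHT-lb} are valid MDL instances with a shared Bayes classifier. By \Cref{lem:MDL-MSHT-ub}, an MDL algorithm $\Ac$ yields an MSHT algorithm using
\begin{align*}
T_\Ac\del{\epsilon/48,\delta/3}+\frac{192k\log\del{6/\delta}}{\epsilon}
\end{align*}
samples. Its failure probability is $\delta$, and since $\delta\leq 0.01/3$ we have $3\delta\leq0.01$, matching the hypothesis of \Cref{thm:MSHT-lb} after rescaling. \Cref{thm:MSHT-lb} then gives
\begin{align*}
T_\Ac\del{\epsilon/48,\delta/3}\geq \frac{0.015k}{4}\min\cbr{\frac{1}{\epsilon^2},\frac{d}{\epsilon}}-\frac{192k\log\del{6/\delta}}{\epsilon}.
\end{align*}
Substituting $\epsilon\mapsto48\epsilon$ expresses this in terms of $T_\Ac\del{\epsilon,\cdot}$, and the hypothesis $d\geq 384\epsilon$ becomes the $d\geq8\epsilon'$ needed by \Cref{thm:MSHT-lb}.

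\textbf{Main obstacle.} The delicate step is ensuring the subtracted term $O\del{k\log\del{1/\delta}/\epsilon}$ does not swamp $k\min\cbr{1/\epsilon^2,d/\epsilon}$. Constant factors matter here because $\log\del{6/\delta}$ is not bounded. My plan is to first reduce to a fixed constant $\delta$: a larger $\delta$ only weakens the requirement, so I can take $\delta=0.01/3$ exactly. Then $192\log\del{6/\delta}\approx 192\cdot 7.5$, a fixed constant. The assumption $\min\cbr{d,1/\epsilon}\geq4\cdot10^7$ makes $\min\cbr{1/\epsilon,d}$ larger than this constant divided by $0.015/4$ (after the $48$ rescaling). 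This leaves half of the main term, giving $\Omega\del{k\min\cbr{1/\epsilon^2,d/\epsilon}}$. I will check that rescaling by $48$ is accounted for in both $1/\epsilon^2$ and $d/\epsilon$, losing only constants like $48^2$. Finally, I will combine the two terms.
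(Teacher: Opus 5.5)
Your proposal is correct and follows the paper's proof. The $d/\epsilon$ term comes from the standard single-distribution lower bound. The testing term comes from \Cref{lem:MDL-MSHT-ub} combined with \Cref{thm:MSHT-lb} at accuracy $48\epsilon$ and a fixed constant confidence, with $\min\cbr{d,1/\epsilon}\geq 4\cdot10^7$ absorbing the $O\del{k/\epsilon}$ reduction overhead.

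Your explicit monotonicity argument for fixing $\delta=0.01/3$ makes precise a step the paper leaves implicit. One small slip: the resulting MSHT confidence is $0.01$, so the overhead constant is $4\log 600$ rather than your $4\log 1800$. This changes nothing, since your constant is the more conservative one and $4\cdot 10^7$ still suffices.
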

\section{\texorpdfstring{\eqref{eq:MDL-Mass}}{(MDL-Mass)} Lower Bound}
\label{sec:mdl-mass}

\ifcolt \input{colt_sections/colt_mdl_mass_lb_intro} \else We now turn our attention to the most challenging MDL variant, \eqref{eq:MDL-Mass}. Recall that in \Cref{sec:mdl-lb}, we established a lower bound of $\Omega\del{d/\epsilon + k\cdot\min\cbr{1/\epsilon^2, d/\epsilon}}$. When $d \gtrsim 1/\epsilon$, the second term saturates at $k/\epsilon^2$, losing the multiplicative dependence on $d$.

In this section, we show that the stricter requirements of~\eqref{eq:MDL-Mass} imply a stronger lower bound that retains a multiplicative factor of $k\sqrt{d}$, even in the large-$d$ regime. To prove this, we follow the same recipe as in \Cref{sec:mdl-lb}: we introduce an auxiliary testing problem, denoted by~\eqref{eq:SHT-mass}, and demonstrate that any algorithm solving~\eqref{eq:MDL-Mass} must implicitly solve an instance of this testing problem for each distribution. \fi
\ifcolt \input{colt_sections/colt_mass_test} \else \subsection{Massart Testing}
We begin by defining an auxiliary testing problem. Consider a pair of random variables $\del{X,Y}\in\llbr{d}\times\cbr{0,1}$, where the marginal distribution of $X$ is given by
\begin{align*}
    p_X\del{x} = \begin{dcases}
        1-\epsilon, & x=0 \\
        \epsilon/d, & x\in\sbr{d}
    \end{dcases}
\end{align*}
and the conditional distributions are $Y|X=x\sim\Ber\del{q_x}$. The vector of biases $\qbf = \del{q_x}_{x=0}^d$ is \textit{unknown}, but constrained such that $q_0\in\sbr{0,0.49}$ and $q_x\in\sbr{0,0.49}\cup\sbr{0.51,1}$ for all $x\in\sbr{d}$. Let
\begin{align*}
    \Delta_\qbf \coloneqq \frac{\epsilon}{d} \sum_{x\in\llbr{d}: q_x\geq1/2} \del{2q_x-1}
\end{align*}
The goal is to draw i.i.d. $\del{X,Y}$ pairs and determine which of the following two hypotheses holds:
\begin{equation}
\tag{SHT-Mass}\label{eq:SHT-mass}
\begin{aligned}
    &\text{$H_0$: If $q_x=0.465$ for all $x\in\llbr{d}$, output $\Ysf$ with probability at least $1-\delta$.} \\[6pt]
    &\text{$H_1$: If $\Delta_\qbf\geq0.3\epsilon$, output $\Nsf$ with probability at least $1-\delta$.}
\end{aligned}
\end{equation}

In the analysis below, we demonstrate that solving~\eqref{eq:SHT-mass} requires a polynomial dependence on $d$, even when $d \gg 1/\epsilon$, standing in sharp contrast to the behavior of~\eqref{eq:SHT}.

\begin{theorem}[SHT-Mass lower bound]
\label{thm:SHT-mass-lb}
Let $\delta\leq1/4$ and $d\geq 1750$. Any algorithm $\Ac$ that solves~\eqref{eq:SHT-mass} requires a sample size of
\begin{align*}
    T_\Ac\del{\epsilon,\delta} \geq \frac{\sqrt{d}}{2\sqrt{2}\epsilon}
\end{align*}
\end{theorem}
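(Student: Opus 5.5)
The plan is a Le Cam / Ingster-type argument. I would keep $H_0$ as the single point $q_x=0.465$ for all $x\in\llbr{d}$ and put a random prior on $\qbf$ that lies in $H_1$ with high probability. I would then show that the $T$-sample law under $H_0$ and the $T$-sample law under the prior mixture are close in total variation whenever $T<\frac{\sqrt d}{2\sqrt2\,\epsilon}$.

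\textbf{Prior.} Set $q_0=0.465$, which is admissible since $q_0\le 0.49$. For each $x\in\sbr{d}$, draw independently a Rademacher sign $s_x\in\cbr{\pm1}$ and set $q_x = 0.465(1+s_x)$, so that $q_x\in\cbr{0,0.93}\subset\sbr{0,0.49}\cup\sbr{0.51,1}$. On average, $Y\mid X=x$ then has the same bias $0.465$ as under $H_0$. This means a single observation carries no information; only repeated visits to the same $x$ do, which is the birthday-paradox mechanism behind the $\sqrt d$ scaling.

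\textbf{Step 1: the prior lies in $H_1$ with high probability.} Under this prior, $\Delta_\qbf = \frac{\epsilon}{d}\cdot 0.86\cdot N$, where $N\sim\Bin(d,1/2)$ counts the points with $q_x=0.93$. The event $\Delta_\qbf\ge 0.3\epsilon$ is equivalent to $N\ge 0.349d$. By Hoeffding, its complement has probability at most $e^{-2(0.151)^2 d}$, which is tiny for $d\ge1750$. Let $\pi$ denote the prior conditioned on this event; it is supported inside $H_1$.

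\textbf{Step 2: $\chi^2$ bound for the unconditioned mixture.} Let $\bar P^{T}=\Eb_\qbf P_\qbf^{\otimes T}$ be the $T$-sample law under the unconditioned prior. By Ingster's identity,
\begin{align*}
1+\chi^2\del{\bar P^{T}\,\middle\|\,P_0^{\otimes T}} = \Eb_{\qbf,\qbf'}\sbr{\del{1+\frac{\epsilon}{d}\sum_{x\in\sbr{d}} c\, s_x s'_x}^{T}}, \qquad c=\frac{0.465^2}{0.465\cdot 0.535}=\frac{0.465}{0.535},
\end{align*}
where $\qbf,\qbf'$ are independent draws from the prior. The per-sample inner product is computed as $\sum_y p_{q_x}(y)p_{q'_x}(y)/p_0(y) = 1+ c\,s_xs'_x$, and $x=0$ contributes nothing because $q_0$ is fixed. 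Using $1+u\le e^u$ and independence across $x$, the right-hand side is at most
\begin{align*}
\prod_x\cosh\del{\frac{T\epsilon c}{d}} \le \exp\del{\frac{T^2\epsilon^2c^2}{2d}}.
\end{align*}
For $T<\sqrt d/(2\sqrt2\epsilon)$ this is at most $e^{c^2/16}$, so $\chi^2\le e^{c^2/16}-1\approx 0.05$ and $\TV\del{\bar P^T,P_0^{\otimes T}}\le\frac12\sqrt{\chi^2}\lesssim0.12$. The same bound covers adaptive stopping rules, since the algorithm uses at most $T$ samples and we can pad its transcript to exactly $T$ samples.

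\textbf{Step 3: conditioning and conclusion.} Conditioning the prior on the good event moves the mixture by at most the event's failure probability in total variation, so $\TV\del{\Eb_{\qbf\sim\pi}P_\qbf^{\otimes T},P_0^{\otimes T}}$ stays well below $1/2$. A correct algorithm must output $\Ysf$ with probability at least $1-\delta\ge3/4$ under $H_0$ and $\Nsf$ with probability at least $3/4$ under every $\qbf$ in the support of $\pi$, hence also under the $\pi$-mixture. That forces $\TV\ge 1-2\delta\ge 1/2$, a contradiction. Therefore $T\ge\frac{\sqrt d}{2\sqrt2\epsilon}$.

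\textbf{Main obstacle.} The delicate step is choosing the prior so that three requirements hold at once: the biases satisfy the Massart constraint, the first moment of $Y\mid X$ matches $H_0$ exactly, and the margin $2q_x-1$ is large enough that $\Delta_\qbf\ge0.3\epsilon$ despite only about half of the points being flipped. The symmetric two-point choice $\cbr{0,0.93}$ around $0.465$ achieves all three. The remaining work is checking the numerical constants, namely the constant $c^2/16$ in the $\chi^2$ bound and the Hoeffding tail at $d\ge1750$, so that the final total variation stays below $1-2\delta$.
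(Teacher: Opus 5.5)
Your proof is correct, but it takes a different route from the paper's.

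\textbf{The paper's route.} It uses the asymmetric prior $q_x\sim 0.75\cdot\Ber(0.62)$, which has the same first moment $0.465$. It then Poissonizes the sample size, so that the per-coordinate counts $(N_x,M_x)$ become independent. First-moment matching makes the count laws identical on the cells $(0,0),(1,0),(0,1)$. Subadditivity of total variation then gives $\TV\le d\,\Pb(\Pois(\lambda)\ge 2)\le T^2\epsilon^2/(2d)$, hence $T\ge\sqrt d/(\sqrt2\,\epsilon)$ in the Poissonized model. A Poisson tail bound transfers this to fixed sample size at the cost of a factor $2$. This step, together with the Hoeffding step for the prior, is where $d\ge 1750$ enters.

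\textbf{Your route.} You apply Ingster's second-moment identity directly to the fixed-$T$ product law. Your symmetric two-point prior $\{0,0.93\}$ turns the pairwise overlap into a Rademacher sum, which yields the sub-Gaussian bound $\exp(T^2\epsilon^2c^2/(2d))$. I checked the per-cell overlap $1+c\,s_xs'_x$ with $c=0.465/0.535$, the margin $2(0.93)-1=0.86$, nonnegativity of the overlap (needed for $G^T\le e^{T(G-1)}$), and the conditioning step; all are fine.

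\textbf{What each buys.} Your argument avoids Poissonization and de-Poissonization entirely. It needs only a mild lower bound on $d$, for the Hoeffding step. It also gives a better constant: the contradiction persists up to $T\approx 1.35\sqrt d/\epsilon$. The paper's collision-based argument instead makes the birthday-paradox mechanism explicit and uses only first-moment matching, not symmetry of the prior. It also works equally well when both hypotheses are mixtures. That form extends naturally to higher-order moment matching, the direction the paper suggests for improving the exponent on $d$.
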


\begin{proof}[Proof sketch of \Cref{thm:SHT-mass-lb}]
Our ultimate goal is to construct bias vector $\qbf_h$, for each $h\in\cbr{0,1}$, that falls under hypothesis $H_h$ and such that $\qbf_0$ and $\qbf_1$ are difficult to distinguish. To start, we set the $0$th coordinate of each to $0.465$, so that the only informative samples are those in which $X\in\sbr{d}$. We then construct priors $\mu_h$ and sample the remaining biases i.i.d. from $\mu_h$ in a way that the resulting $\qbf_h$ is in $H_h$ with high probability. More importantly, we also make sure that the first moments of $\mu_0$ and $\mu_1$ match, which ensures that $Y$ has the same mean under both hypotheses. Since the marginal $p_X$ is independent of the hypothesis, it is thus necessary to exploit the relationship between $X$ and $Y$. This can only be accomplished once we observe repeated values of $X$ in $\sbr{d}$, which takes $\Omega\del{\sqrt{d}}$ samples. The $\Omega\del{\sqrt{d}/\epsilon}$ bound then follows from the fact that it takes $O\del{1/\epsilon}$ samples to obtain a single point in $\sbr{d}$.
\end{proof}

We emphasize that the moment matching argument requires varying the noise levels, preventing us from fixing the noise as in the~\eqref{eq:SHT} setup. Consequently, this construction is specific to~\eqref{eq:MDL-Mass} and does not extend to the other MDL variants.

\paragraph{Limitations of the~\eqref{eq:SHT-mass} construction.}
There is nothing inherently special about the $\sqrt{d}$ dependence. One could employ higher-order moment matching and polynomial methods to improve the exponent on $d$; see, e.g., \cite{wu-poly-test,CanonneTopicsDT2022}. In fact, with a more delicate analysis, our construction may plausibly yield a lower bound whose $d$-exponent exhibits a finer-grained dependence on the noise parameters. However, extending this all the way to a linear rate is impossible with the current~\eqref{eq:SHT-mass} construction.

To see why, observe that to obtain a constant $\Delta_\qbf$ gap, a constant fraction of the coordinates must be heavy (i.e., $q_x\geq0.51$) under $H_1$. This creates a vulnerability: an algorithm need only detect \textit{one} such heavy coordinate to reject $H_0$. Let $C=O\del{1}$ be the number of samples required to reliably distinguish between $\Ber\del{0.49}$ and $\Ber\del{0.51}$. Consider a learner that samples until it observes $C$-collisions on a small, constant number of unique coordinates. Under $H_1$, at least one of these coordinates will be heavy with high probability, allowing the learner to identify it. Since finding $C$-collisions requires only $O\del{d^{1-1/C}}$ samples, the hardness of this instance is strictly sublinear in $d$.

We note that while this does not rule out the existence of other hard instances achieving a linear dependence, constructing one that yields an MDL lower bound via a similar reduction is nontrivial. This is because our choice of $H_0$ (containing only light biases) is critical for ensuring that the resulting hypothesis class maintains a fixed VC dimension when extended to multiple distributions. \fi
\ifcolt \input{colt_sections/colt_mdl_mass_lb_sub} \else \subsection{Lower Bound}
Next, we construct an MDL instance that inherits the~\eqref{eq:SHT-mass} difficulty. Let the covariate space be $\Xc\coloneqq\llbr{kd+k-1}$, which we partition into $k$ disjoint blocks of size $d+1$. We denote the $i$th block by $\Xc_i \coloneqq \cbr{\del{i-1}\del{d+1}+x: x\in\llbr{d}}$ for each $i\in\sbr{k}$. 

We define the hypothesis class $\Fc$ to be the set of binary functions consisting of:
\begin{itemize}
    \item The zero function $f_0$.
    
    \item All functions that equal $1$ precisely on a subset of $\Xc_i\backslash\cbr{\del{i-1}\del{d+1}}$ (i.e., we exclude the first coordinate) for some $i\in\sbr{k}$.
\end{itemize}
Note that $\VC\del{\Fc}=d$. 

Next, we define distributions $P_1,\dots,P_k$ over $\Xc\times\cbr{0,1}$ with marginal PMFs
\begin{align*}
    P_i\del{X=x} = \begin{dcases}
        1-\epsilon, & x = \del{i-1}\del{d+1} \\
        \epsilon/d, & x \in \cbr{ \del{i-1}\del{d+1}+1,\dots,i\del{d+1}-1 }
    \end{dcases}
\end{align*}
That is, the probability mass is distributed as $\del{1-\epsilon,\epsilon/d,\dots,\epsilon/d}$ on the corresponding $\Xc_i$. Furthermore, we require the label noise to satisfy the Massart constraint with respect to some target $f^*\in\Fc$:
\begin{align*}
    \eta_i^*\del{x} \coloneqq P_i\del{f^*\del{X}\neq Y\vert X=x} \leq 0.49 \quad\forall x\in\Xc, i\in\sbr{k}
\end{align*}
Recall that while the optimal risk values $\eta_i^* = \err\del{f^*;P_i}$ are unknown, we know the noise upper bounds $\eta_1=\dots=\eta_k=0.49$. The goal of~\eqref{eq:MDL-Mass} is to sample from the distributions and output hypotheses $\hat f_1,\dots,\hat f_k$ satisfying
\begin{align*}
    \Pb\del{ \max_{i\in\sbr{k}}\cbr{ \err\del{\hat f_i;P_i} - \eta_i^* } \leq \epsilon } \geq 1-\delta
\end{align*}
The following theorem establishes that any algorithm solving this problem must incur the multiplicative cost discussed.

\begin{theorem}[\eqref{eq:MDL-Mass} lower bound]
\label{thm:MDL-Mass-lb}
Let $\delta\leq0.1/3$ and $d\geq7\cdot10^{10}$. Any algorithm $\Ac$ that solves~\eqref{eq:MDL-Mass} in this setup requires a sample size of
\begin{align*}
    T_\Ac\del{\epsilon,\delta} = \Omega\del{\frac{k\sqrt{d}}{\epsilon}}
\end{align*}
\end{theorem}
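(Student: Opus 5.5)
We follow the recipe of \Cref{sec:mdl-lb}: first show that any \eqref{eq:MDL-Mass} algorithm yields a solver for \eqref{eq:SHT-mass} on each block, then embed a hard \eqref{eq:SHT-mass} instance into an unknown block $i$ and argue the learner must spend $\Omega\del{\sqrt d/\epsilon}$ samples on every block.

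\textbf{Step 1 (each $P_i$ is an \eqref{eq:SHT-mass} instance).} Identify $\Xc_i$ with $\llbr{d}$ via $x\mapsto x-(i-1)(d+1)$. The marginal of $P_i$ is then exactly $p_X$ from \eqref{eq:SHT-mass}. For $H_0$ in block $i$, set $q_x=0.465$ on $\Xc_i$ and take $f^*=f_0$ on that block. For $H_1$, let $f^*$ equal $1$ precisely on $\cbr{x\in\Xc_i\setminus\cbr{(i-1)(d+1)}: q_x\geq 1/2}$. This is a member of $\Fc$, and the noise constraint $\le 0.49$ holds because $q_x\in\sbr{0,0.49}\cup\sbr{0.51,1}$ and $q_0\le 0.49$.

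To make all $P_j$ share one $f^*$, put every other block $j\neq i$ at the null $q\equiv 0.465$. Then $f^*$ is supported in block $i$ only, so $f^*\in\Fc$ and $\VC\del{\Fc}=d$ is preserved. This is exactly why $H_0$ contains only light biases.

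\textbf{Step 2 (reduction to testing).} Run the MDL algorithm with accuracy $c\epsilon$ for a small constant $c$, obtaining $\hat f_i$. Under $H_0$ we have $\eta_i^*=0.465$ and $\err\del{f_0;P_i}=\eta_i^*$. Under $H_1$ we have $\err\del{f_0;P_i}=\eta_i^*+\Delta_\qbf\geq\eta_i^*+0.3\epsilon$. In both cases $\err\del{\hat f_i;P_i}\le \eta_i^*+c\epsilon$ with probability $1-\delta$.

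The tester compares $\hat f_i$ against $f_0$. For every $f\in\Fc$ supported in block $i$, $\err\del{f;P_i}-\err\del{f_0;P_i}=\frac{\epsilon}{d}\sum_{x:f(x)=1}\del{1-2q_x}$.
\begin{itemize}
\item Under $H_0$ this difference is $0.07\epsilon\cdot\abs{\cbr{x:\hat f_i(x)=1}}/d\ge 0$.
\item Under $H_1$, $\hat f_i$ must beat $f_0$ by at least $(0.3-c)\epsilon$.
\end{itemize}
So we declare $\Nsf$ iff $\emperr\del{\hat f_i;S}-\emperr\del{f_0;S}\le -0.15\epsilon$ on a fresh sample $S$.

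The comparison should be done offset-wise, as in \Cref{lem:sht-learn-to-test}. The two classifiers disagree only on $X\in\sbr{d}$, which has mass $\epsilon$, so the variance of the difference is $O\del{\epsilon}$. Bernstein's inequality then gives that $O\del{\log\del{1/\delta}/\epsilon}$ additional samples per block suffice, i.e. $O\del{k/\epsilon}$ in total. Alternatively, and more simply, one can use the fact that $\hat f_i$ is a nonzero function only when this is informative. We may also argue directly about $\hat f_i$: under $H_0$, an $\hat f_i$ nonzero on $m$ points has excess $0.07\epsilon m/d$. I would pick the sample-based test for robustness. These extra samples are lower order, since $k/\epsilon\ll k\sqrt d/\epsilon$ for large $d$.

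\textbf{Step 3 (forcing samples on every block).} Consider the family of instances in which block $i$ carries the hard prior pair $\del{\mu_0,\mu_1}$ from the proof of \Cref{thm:SHT-mass-lb}, and all other blocks are null. Following the simulation argument behind \Cref{thm:MSHT-lb}, a single-block \eqref{eq:SHT-mass} tester is built as follows.
\begin{itemize}
\item Choose $i$ uniformly at random.
\item Route the real samples to block $i$ and simulate the null blocks internally, which requires no knowledge of $q$.
\item Run the MDL algorithm followed by the test of Step 2.
\end{itemize}
Under $H_0$, all blocks are identical nulls, so the expected number of samples drawn from block $i$ is $T/k$. By Markov's inequality, truncating at $O\del{T/k}$ changes the error probability by at most a constant. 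The resulting single-block tester has error at most $\delta+$const$\le 1/4$, so \Cref{thm:SHT-mass-lb} gives $T/k\gtrsim\sqrt d/\epsilon$.

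\textbf{Main obstacle.} The delicate point is this truncation and symmetrization step. The sample count is controlled under $H_0$, but the tester must also succeed under $H_1$ with the truncated budget. The remedy is to note that if truncation is triggered with probability less than a constant under $H_0$, it changes the output distribution by at most that constant. We therefore use a total-variation argument: the TV bound in Ingster's method from \Cref{thm:SHT-mass-lb} is applied to the stopped, truncated process. This adaptive-sampling subtlety, together with tracking the constants to stay below $1/4$ (hence $\delta\le 0.1/3$ and large $d$), is where care is needed.
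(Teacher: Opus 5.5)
Your outline follows the paper's proof closely. Step 1 is the paper's embedding of an \eqref{eq:SHT-mass} instance into block $i$, with every other block at the null. Step 2 is the paper's learn-then-compare-against-$f_0$ reduction: the paper runs $\Ac$ at accuracy $\epsilon/192$ and invokes \Cref{lem:sht-learn-to-test} with parameters $(0.49,\epsilon/4,0.1)$, which is your test with a two-sided threshold. Step 3 is the paper's simulate--truncate--contradict argument against \Cref{thm:SHT-mass-lb}. Drawing $i$ uniformly and using $\Eb_0[T_i]\le T/k$ plus Markov is an equivalent symmetrization of the paper's route. The paper instead fixes each $i$, shows $\Pb_0(T_i>0.05\sqrt d/\epsilon)>0.15$ by contradiction, and sums the resulting expectations.

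The step you flag is where the proposal is not yet justified. Your claim that truncation ``changes the output distribution by at most that constant'' holds only under $H_0$, where Markov controls $T_i$; under $H_1$ the learner is free to sample block $i$ heavily. A TV patch can be made rigorous. The stopped process is a randomized function of $m$ samples from $P$, so $|\Pb_0(F)-\Pb_1(F)|$ is at most the TV between the $m$-sample laws. By a collision argument using the matched first moments, that TV is at most $\binom m2\epsilon^2/d$. However, this is a fixed-sample-size TV bound that the paper never states. \Cref{thm:SHT-mass-lb} is proved via Poissonization and moment matching (Ingster's method appears only in the \eqref{eq:SHT} bound), and it is stated as a tester lower bound.

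The paper avoids all of this with an asymmetric default: when $T_i$ exceeds the budget, output $\Nsf$.
\begin{itemize}
\item Under $H_0$ this costs at most $\Pb_0(B_i^c)\le 0.15$, giving $\Pb_0(\Ysf)\ge\Pb_0(E_i)-\Pb_0(B_i^c)\ge 0.75$.
\item Under $H_1$, truncation can only produce the correct answer, so $\Pb_1(\Nsf)\ge\Pb_1(E_i^c)\ge 0.9$.
\end{itemize}
\Cref{thm:SHT-mass-lb} then applies as a black box.

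A minor point in Step 2: $\hat f_i$ need not vanish on the heavy coordinate $(i-1)(d+1)$ in general. It does on the MDL success event, since disagreeing there costs excess at least $0.02(1-\epsilon)$. Under $H_0$, the variance of the offset loss is at most $1/0.07$ times its mean for any $\hat f_i$. So your $O(1/\epsilon)$ test size still goes through.
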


By combining this result with the general lower bound from \Cref{thm:MDL-lb}, we can conclude that, under a constant noise upper bound of $\eta_i\leq0.49$ for every $i\in\sbr{k}$, a successful algorithm for~\eqref{eq:MDL-Mass} requires a sample size of at least
\begin{align*}
    \Omega\del{\frac{d}{\epsilon} + k\cdot\min\cbr{\frac{1}{\epsilon^2},\frac{d}{\epsilon}} + \frac{k\sqrt{d}}{\epsilon}}
\end{align*} \fi
\section{Discussion}
In this work, we investigated the limits of multi-distribution learning under bounded label noise, asking whether the fast rates achievable in the single-distribution setting extend to the multi-distribution regime. We formulated three distinct variants of the problem based on the benchmarks they target: the known noise upper bounds, the minimax error, and the true unknown optimal errors.

For the first two variants, we devised strategies that achieve sample complexities $\tilde O\del{ \frac{d}{\epsilon\del{1-2\eta}}+\sum_{i=1}^k \frac{\epsilon+\eta_i}{\epsilon^2} }$ and $\tilde O\del{ \frac{d}{\epsilon\del{1-2\eta}}+\frac{k\del{\epsilon+\eta^*}}{\epsilon^2} }$. Through a reduction to multi-distribution hypothesis testing, we proved that these are optimal even under constant noise, ruling out a fast rate of $\tilde O\del{\frac{d+k}{\epsilon}}$. This highlights a fundamental departure from the realizable and agnostic MDL complexities, where the overhead for scaling to $k$ distributions is merely additive.

Finally, we showed that the third variant is strictly harder by constructing a lower bound of $\Omega\del{k\sqrt{d}/\epsilon}$ via moment matching. This establishes a statistical separation between RCN and Massart noise in the MDL setting and reveals that for this objective, the penalty for learning multiple distributions is multiplicative rather than additive.

\paragraph{Future Directions.}
Our work leaves several compelling avenues for future research.
\begin{enumerate}
\item\textbf{Centralized MDL:} We focused on personalized learning, where the algorithm outputs a distinct hypothesis $\hat f_i$ for each distribution. It remains unclear whether similar rates apply to the centralized setting~\cite{nika-on-demand}, where the learner is constrained to produce a single common hypothesis $\hat f$ with small error across all distributions simultaneously.

\item\textbf{Noise-dependent Lower Bounds:} While our lower bounds establish optimality in the constant-noise regime, our analysis does not fully characterize the complexity as a function of the noise rate.

\item\textbf{Closing the Gap for~\eqref{eq:MDL-Mass}:} A tight sample complexity bound for the hardest variant, \eqref{eq:MDL-Mass}, remains unknown. While we established a lower bound of 
\begin{align*}
    \Omega\del{\frac{d}{\epsilon} + k\cdot\min\cbr{\frac{1}{\epsilon^2},\frac{d}{\epsilon}} + \frac{k\sqrt{d}}{\epsilon}}
\end{align*}
it is unclear whether the true complexity scales linearly as $\Omega\del{kd/\epsilon}$. Determining the precise interplay between $d$ and $k$ in this setting likely requires a fundamentally different construction, as discussed in \Cref{sec:mdl-mass}.
\end{enumerate}

\section*{Acknowledgments}
We are grateful to Eric Zhao, Nika Haghtalab, Moïse Blanchard and Alexander Rakhlin for insightful discussions.

\newpage
\bibliographystyle{alpha}
\bibliography{references}
\newpage

\appendix
\section{Concentration Inequalities}
We record a few standard concentration inequalities that will be used throughout. For background, we refer to~\cite{Boucheron2016-ju}.

\begin{lemma}[Concentration inequalities]
Let $Z_1,\dots,Z_T$ be i.i.d. random variables and define the empirical mean $\bar Z\coloneqq \frac{1}{T}\sum_{t=1}^T Z_t$.
\begin{itemize}
    \item (Hoeffding) If $Z_1\in\sbr{a,b}$, then
    \begin{align*}
        \Pb\del{ \abs{ \bar Z - \Eb\sbr{Z_1} } \leq \del{b-a} \sqrt{\frac{1}{2T}\log\del{\frac{2}{\delta}}} } \geq 1-\delta
    \end{align*}

    \item (Bernstein) If $\abs{Z_1-\Eb\sbr{Z_1}}\leq b$, then
    \begin{align*}
        \Pb\del{ \abs{ \bar Z - \Eb\sbr{Z_1} } \leq \frac{2b}{3T}\log\del{\frac{2}{\delta}} + \sqrt{\frac{2\Var\del{Z_1}}{T}\log\del{\frac{2}{\delta}}} } \geq 1-\delta
    \end{align*}
\end{itemize}
\end{lemma}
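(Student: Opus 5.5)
Both bounds follow from the Chernoff (exponential moment) method applied to the centered variables $X_t \coloneqq Z_t - \Eb\sbr{Z_1}$. I treat the upper tail $\Pb\del{\bar Z - \Eb\sbr{Z_1} \geq s}$ and get the lower tail by applying the same argument to $-X_t$. A union bound over the two tails, each set equal to $\delta/2$, produces the $\log\del{2/\delta}$ factor. In both cases I first establish a bound on $\log\Eb\sbr{e^{\lambda X_1}}$. Independence then gives $\Pb\del{\bar Z - \Eb\sbr{Z_1}\geq s} \leq \exp\del{-\lambda T s + T\log\Eb\sbr{e^{\lambda X_1}}}$, which I optimize over $\lambda>0$ and finally invert in $s$.

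For Hoeffding, I use Hoeffding's lemma: a centered variable supported in an interval of length $b-a$ satisfies $\log\Eb\sbr{e^{\lambda X_1}}\leq \lambda^2\del{b-a}^2/8$. I prove it by convexity of $\lambda\mapsto e^{\lambda x}$ on $\sbr{a,b}$, which reduces to a two-point distribution. There one checks that the log-MGF $\psi$ satisfies $\psi(0)=\psi'(0)=0$ and $\psi''\leq \del{b-a}^2/4$, since $\psi''$ is the variance of a tilted distribution on an interval of length $b-a$. Optimizing at $\lambda = 4s/\del{b-a}^2$ gives the tail $\exp\del{-2Ts^2/\del{b-a}^2}$. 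Setting this equal to $\delta/2$ yields $s=\del{b-a}\sqrt{\log\del{2/\delta}/\del{2T}}$.

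For Bernstein, let $\sigma^2=\Var\del{Z_1}$ and use $\abs{X_1}\leq b$, so that $\Eb\abs{X_1}^j\leq \sigma^2 b^{j-2}$ for $j\geq2$. Expanding the exponential and using $j!\geq 2\cdot 3^{j-2}$ gives, for $0<\lambda<3/b$,
\begin{align*}
    \log\Eb\sbr{e^{\lambda X_1}} \leq \Eb\sbr{e^{\lambda X_1}}-1 \leq \frac{\lambda^2\sigma^2}{2}\sum_{j\geq0}\del{\frac{\lambda b}{3}}^j = \frac{\lambda^2\sigma^2}{2\del{1-\lambda b/3}}.
\end{align*}
Choosing $\lambda = s/\del{\sigma^2+bs/3}$, which lies below $3/b$, gives the tail $\exp\del{-Ts^2/\del{2\del{\sigma^2+bs/3}}}$. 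For the inversion, write $L=\log\del{2/\delta}$ and solve the quadratic $Ts^2 = 2L\del{\sigma^2+bs/3}$. Its positive root is
\begin{align*}
    s=\frac{bL}{3T}+\sqrt{\del{\frac{bL}{3T}}^2+\frac{2\sigma^2L}{T}} \leq \frac{2bL}{3T}+\sqrt{\frac{2\sigma^2 L}{T}},
\end{align*}
using $\sqrt{u+v}\leq\sqrt u+\sqrt v$, which is exactly the claimed deviation.

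The only step needing care is the Bernstein moment-generating-function bound: the factorial comparison $j!\geq 2\cdot3^{j-2}$ and the resulting geometric series, which converges only when $\lambda<3/b$. I will also check that the chosen $\lambda$ lies in that range. Everything else is a routine Chernoff optimization and a quadratic inversion.
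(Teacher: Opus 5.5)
The paper gives no proof of this lemma; it records both inequalities as standard and defers to Boucheron, Lugosi and Massart. Your Chernoff/MGF derivation is the textbook argument behind them, it is correct, and your inversion via $\sqrt{u+v}\le\sqrt{u}+\sqrt{v}$ gives exactly the paper's threshold $\frac{2b}{3T}\log(2/\delta)+\sqrt{2\Var(Z_1)\log(2/\delta)/T}$.

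Two small fixes are needed:
\begin{enumerate}
\item The convexity used for Hoeffding's lemma is that of $x\mapsto e^{\lambda x}$ on the interval, not of $\lambda\mapsto e^{\lambda x}$. The tilted-variance bound also applies to $X_1$ directly, so the two-point reduction is optional.
\item Your choice $\lambda=s/(\sigma^2+bs/3)$ lies strictly below $3/b$ only when $\sigma^2>0$. Handle the trivial case $\Var(Z_1)=0$ separately; there $\bar Z=\Eb[Z_1]$ almost surely.
\end{enumerate}
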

\section{Learning under Massart Noise}
\label{app:noisy-class}
Let $P$ be a distribution over $\del{X,Y}\in\Xc\times\cbr{0,1}$ such that
\begin{align*}
    \eta^*\del{x} \coloneqq P\del{f^*\del{X}\neq Y\vert X=x} \leq \eta < \frac{1}{2} \quad\forall x\in\Xc
\end{align*}
for some $f^*:\Xc\to\cbr{0,1}$. This is the Massart noise condition with rate $\eta$. We denote the Bayes error by $\eta^* \coloneqq P\del{f^*\del{X}\neq Y}$. The following result characterizes the pointwise error of a classifier in terms of $f^*$.

\begin{lemma}
\label{lem:noisy-class-exc-err}
For any $f:\Xc\to\cbr{0,1}$,
\begin{align*}
    P\del{f\del{X}\neq Y\vert X=x} - \eta^*\del{x} = \del{1-2\eta^*\del{x}} \Ib\cbr{f\del{x}\neq f^*\del{x}}
\end{align*}
\end{lemma}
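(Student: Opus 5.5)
We prove the identity pointwise, by conditioning on $X=x$ and splitting into two cases according to whether $f(x)=f^*(x)$. Throughout, fix $x\in\Xc$. Under $P$, write the conditional label law as $Y\mid X=x$. By definition, $P\del{f^*\del{X}\neq Y\vert X=x}=\eta^*\del{x}$. Since $f(x)$ and $f^*(x)$ are deterministic values in $\cbr{0,1}$ once $x$ is fixed, the event $\cbr{f(x)\neq Y}$ is determined by whether $f(x)$ agrees with $f^*(x)$.

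\textbf{Case $f(x)=f^*(x)$.} Here the events $\cbr{f(x)\neq Y}$ and $\cbr{f^*(x)\neq Y}$ coincide. Hence
\begin{align*}
    P\del{f\del{X}\neq Y\vert X=x}=\eta^*\del{x},
\end{align*}
and the left-hand side of the claim is $0$. The indicator $\Ib\cbr{f\del{x}\neq f^*\del{x}}$ is also $0$, so the identity holds.

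\textbf{Case $f(x)\neq f^*(x)$.} Because labels are binary, $f(x)=1-f^*(x)$. Then $f(x)\neq Y$ holds exactly when $f^*(x)=Y$, which gives
\begin{align*}
    P\del{f\del{X}\neq Y\vert X=x}=1-\eta^*\del{x}.
\end{align*}
Subtracting $\eta^*(x)$ leaves $1-2\eta^*\del{x}$. This equals $\del{1-2\eta^*\del{x}}\cdot 1$, matching the right-hand side.

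Since the two cases are exhaustive, the identity holds for every $x\in\Xc$. The proof uses neither the Massart bound $\eta^*(x)\le\eta$ nor membership $f^*\in\Fc$; the bound only ensures that the factor $1-2\eta^*(x)$ is at least $1-2\eta>0$, which later results will use. There is no real obstacle. The one step needing care is measurability: the conditional probability is defined $p_X$-almost surely for general $\Xc$, or pointwise when $\Xc$ is countable, as in all of the paper's constructions. In either case the argument is identical.
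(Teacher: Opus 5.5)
Your proof is correct and is essentially the paper's argument. The paper writes $P(f(X)\neq Y\mid X=x)=\eta^*(x)\,\mathbb{I}\{f(x)=f^*(x)\}+(1-\eta^*(x))\,\mathbb{I}\{f(x)\neq f^*(x)\}$ and simplifies algebraically, which is exactly your two-case split written with indicators.
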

\begin{proof}[Proof of \Cref{lem:noisy-class-exc-err}]
This follows from
\begin{align*}
    P\del{f\del{X}\neq Y\vert X=x} &= \eta^*\del{x}\Ib\cbr{f\del{x}=f^*\del{x}} + \del{1-\eta^*\del{x}} \Ib\cbr{f\del{x}\neq f^*\del{x}} \\
    &= \eta^*\del{x}\del{1 - \Ib\cbr{f\del{x}\neq f^*\del{x}}} + \del{1-\eta^*\del{x}} \Ib\cbr{f\del{x}\neq f^*\del{x}} \\
    &= \eta^*\del{x} + \del{1-2\eta^*\del{x}} \Ib\cbr{f\del{x}\neq f^*\del{x}}
\end{align*}
\end{proof}

The bounded noise assumption implies that $1-2\eta^*\del{x}\geq1-2\eta$ for any $x\in\Xc$. Then, if we integrate the equality of \Cref{lem:noisy-class-exc-err} with respect to marginal $P_X$ and rearrange, we obtain
\begin{align*}
    P\del{f\del{X}\neq f^*\del{X}} \leq \frac{P\del{f\del{X}\neq Y} - \eta^*}{1-2\eta}
\end{align*}
The key idea behind the Massart fast rate is that we can bound the following variance:
\begin{align*}
    \Var_P\del{\Ib\cbr{f\del{X}\neq Y} - \Ib\cbr{f^*\del{X}\neq Y}} &\leq \Eb_P\sbr{\abs{\Ib\cbr{f\del{X}\neq Y} - \Ib\cbr{f^*\del{X}\neq Y}}} \\
    &= P\del{f\del{X}\neq f^*\del{X}} \\
    &\leq \frac{P\del{f\del{X}\neq Y} - \eta^*}{1-2\eta}
\end{align*}
Suppose that we are trying to learn a finite function class $\Fc$ such that $f^*\in\Fc$. Let $\del{X_t,Y_t}_{t=1}^T \iid P$ be a sample and consider the empirical minimizer
\begin{align*}
    \hat f \in \argmin_{f\in\Fc} \cbr{ \frac{1}{T} \sum_{t=1}^T \Ib\cbr{f\del{X_t}\neq Y_t} }
\end{align*}
Applying Bernstein's inequality on random variables $\Ib\cbr{f\del{X_t}\neq Y_t} - \Ib\cbr{f^*\del{X_t}\neq Y_t}$ and taking a union bound over $\Fc$, we can conclude that
\begin{align*}
    P\del{\hat f\del{X}\neq Y} - \eta^* &\lesssim \frac{\log\del{\abs{\Fc}/\delta}}{n\del{1-2\eta}}
\end{align*}
In other words, $\hat f$ is $\epsilon$-optimal provided that $n\gtrsim \frac{\log\del{\abs{\Fc}/\delta}}{\epsilon\del{1-2\eta}}$. To extend this idea to general VC classes, we refer to~\cite{Boucheron2005-il}.
\section{Proofs of \Cref{sec:ub}}

\subsection{Proof of \Cref{lem:sl-opt}}
Note that the mixture also satisfies the bounded noise assumption:
\begin{align*}
    \bar P_{\Uc}\del{ f^*\del{X}\neq Y\middle\vert X=x } &= \sum_{i\in\Uc} \alpha_i\del{x} P_i\del{ f^*\del{X}\neq Y\middle\vert X=x } \leq \eta
\end{align*}
where $\alpha_i \coloneqq \frac{p_i^X}{\sum_j p_j^X}$ and $p_i^X$ is the density of the $X$-marginal of $P_i$ with respect to some dominating measure. Then, the ERM solution $\hat f$ on $T_\SL\del{\eta,\epsilon,\delta}$ samples from $\bar P_{\Uc}$ ensures that 
\begin{align*}
    \bar P_{\Uc}\del{ \hat f\del{X}\neq Y } \leq \epsilon + \bar P_{\Uc}\del{ f^*\del{X}\neq Y } = \epsilon + \frac{1}{\abs{\Uc}}\sum_{i\in\Uc} \eta_i^*
\end{align*}
with probability $\geq1-\delta$.
\subsection{Proof of \Cref{lem:test-comp}}
For convenience, let us rewrite the sample size lower bound as $\abs{S} \geq 2\log\del{\frac{2}{\delta}} \frac{\epsilon/8+\nu}{\del{\epsilon/8}^2}$. By Bernstein's inequality, we know that
\begin{align*}
    \abs{\err\del{f;P}-\emperr\del{f;S}} \leq \sqrt{\frac{2 \err\del{f;P} \log\del{2/\delta}}{\abs{S}}} + \frac{2\log\del{2/\delta}}{\abs{S}} \leq \frac{\epsilon}{8} \sqrt{\frac{\err\del{f;P}}{\epsilon/8+\nu}} + \frac{\epsilon}{8}
\end{align*}
with probability $1-\delta$. Suppose that $\err\del{f;P}\leq\epsilon/8+\nu$. Then,
\begin{align*}
    \emperr\del{f;S} &\leq \err\del{f;P} + \frac{\epsilon}{8} \sqrt{\frac{\err\del{f;P}}{\epsilon/8+\nu}} + \frac{\epsilon}{8} \leq \frac{\epsilon}{2} + \nu
\end{align*}
Now, suppose that $\err\del{f;P}>\epsilon+\nu$ and define $g\del{p}\coloneqq p-a\sqrt{\frac{p}{a+b}}$. Then $g'\del{p} = 1 - \frac{a}{2\sqrt{p\del{a+b}}}$. In other words, $g$ is increasing for $a\leq2\sqrt{p\del{a+b}} \iff p\geq\frac{a^2}{4\del{a+b}}$. Specializing to $p=\epsilon+\nu$, $a=\epsilon/8$ and $b=\nu$, we see that this condition is clearly ensured:
\begin{align*}
    \epsilon+\nu \geq \frac{\epsilon}{8} > \frac{\del{\epsilon/8}^2}{4\del{\epsilon/8+\nu}}
\end{align*}
Since $\err\del{f;P}>\epsilon+\nu$ by assumption, we can again apply Bernstein to obtain
\begin{align*}
    \emperr\del{f;S} &\geq \err\del{f;P} - \frac{\epsilon}{8} \sqrt{\frac{\err\del{f;P}}{\epsilon/8+\nu}} - \frac{\epsilon}{8} \\
    &> \epsilon + \nu - \frac{\epsilon}{8} \sqrt{\frac{\epsilon+\nu}{\epsilon/8+\nu}} - \frac{\epsilon}{8} \\
    &> \frac{\epsilon}{2} + \nu
\end{align*}
where we used the fact that $\sqrt{\frac{\epsilon+\nu}{\epsilon/8+\nu}}\leq\sqrt{8}$ and $7/8-1/\sqrt{8}>1/2$.
\subsection{Proof of \Cref{thm:MDL-RCN-ub}}
By the reasoning above, we know that we succeed with probability $1-\delta$ under the specified parameters. Since we learn at least half of the distributions on every round, we only require $T=\ceil*{\log_2 k}$ rounds. On every round, ERM requires $T_\SL\del{\eta,\frac{\epsilon}{16}, \frac{\delta}{2T}}$ samples. For testing, each $P_i$ is sampled at most $T_\Tsf\del{\eta_i,\epsilon,\frac{\delta}{2kT}}$ times on every round. Hence, the total sample size required is
\begin{align*}
    T_\RCN\del{\eta_{1:k}, \epsilon, \delta} &\leq T\del{ T_\SL\del{\eta, \frac{\epsilon}{16}, \frac{\delta}{2T}} + \sum_{i=1}^k T_\Tsf\del{\eta_i,\epsilon,\frac{\delta}{2kT}} } \\
    &\lesssim \del{\log k} \del{ \frac{d\log\del{1/\epsilon} + \log\del{\frac{\log k}{\delta}}}{\epsilon\del{1-2\eta}} + \log\del{\frac{k}{\delta}} \sum_{i=1}^k \frac{\epsilon+\eta_i}{\epsilon^2} } \\
    &\leq \log^2\del{\frac{k}{\delta}} \del{ \frac{d\log\del{1/\epsilon}}{\epsilon\del{1-2\eta}} + \sum_{i=1}^k \frac{\epsilon+\eta_i}{\epsilon^2} }
\end{align*}
\ifcolt \input{colt_proofs/colt_mdl_mm_ub_pf} \else \subsection{Proof of \Cref{thm:MDL-MM-ub}}
From the description of the procedure, and by tuning $\delta'$ and $\delta''$ appropriately, we know that we can guarantee success with probability at least $1-\delta$. Again, we learn half of the distributions on every round, so that $T=\ceil*{\log_2 k}$ rounds suffices. On each round, ERM uses
\begin{align*}
    T_\SL\del{\eta,\frac{\epsilon}{32}, \frac{\delta}{2T}} = O\del{ \frac{d + \log\del{\frac{\log k}{\delta}}}{\epsilon\del{1-2\eta}} }
\end{align*}
samples, and testing requires at most 
\begin{align*}
    T_\Tsf\del{ \eta^*+\frac{\epsilon}{2}, \frac{\epsilon}{2}, \frac{\delta}{4\del{\eta/\epsilon+1}kT} } = 32\log\del{\frac{8\del{\eta/\epsilon+1}kT}{\delta}} \frac{9\epsilon+16\eta^*}{\epsilon^2}
\end{align*}
samples per distribution. Putting these together, we obtain a total sample complexity upper bound of
\begin{align*}
    T_\MM\del{\eta_{1:k}^*, \eta_{1:k}, \epsilon, \delta} &\lesssim \del{\log k}\del{ \frac{d + \log\del{\frac{\log k}{\delta}}}{\epsilon\del{1-2\eta}} + \log\del{\frac{\del{\eta/\epsilon+1}k}{\delta}} \frac{ k\del{\epsilon+\eta^*} }{\epsilon^2} } \\
    &\lesssim \log^2\del{\frac{\del{\eta/\epsilon+1}k}{\delta}} \del{ \frac{d}{\epsilon\del{1-2\eta}} + \frac{k\del{\epsilon+\eta^*}}{\epsilon^2} }
\end{align*}
 \fi
\section{Proofs of \Cref{sec:SHT}}

\subsection{Proof of \Cref{lem:sht-test-emp-err}}
Hoeffding's inequality tells us that, with probability at least $1-\delta$,
\begin{align*}
    \abs{\err\del{f;P}-\emperr\del{f;S}} \leq \sqrt{\frac{\log\del{2/\delta}}{2\abs{S}}} \leq \frac{\epsilon}{12}
\end{align*}
Under this high-probability event, we immediately obtain our claim:
\begin{align*}
    \emperr\del{f;S} \leq \frac{1}{4}+\frac{\epsilon}{6} &\Longrightarrow \err\del{f;P} \leq \frac{1}{4}+\frac{\epsilon}{6}+\frac{\epsilon}{12} \leq \frac{1}{4}+\epsilon \\
    \emperr\del{f;S} > \frac{1}{4}+\frac{\epsilon}{6} &\Longrightarrow \err\del{f;P} > \frac{1}{4}+\frac{\epsilon}{6}-\frac{\epsilon}{12} = \frac{1}{4}+\frac{\epsilon}{12} 
\end{align*}
\subsection{Proof of \Cref{lem:sht-learn-to-test}}
From \Cref{app:noisy-class}, we know that
\begin{align*}
    \Var_P\del{ \Ib\cbr{g\del{X}\neq Y} - \Ib\cbr{f^*\del{X}\neq Y} } \leq \frac{\err\del{g;P}-\eta^*}{1-2\eta}
\end{align*}
for any function $g:\Xc\to\cbr{0,1}$. Bernstein's inequality then tells us that, with probability $\geq1-\delta/3$,
\begin{align*}
    \abs{ \err\del{g;P}-\eta^* - \del{\emperr\del{g;S}-\emperr\del{f^*;S}} } &\leq \frac{4}{3\abs{S}} \log\del{\frac{6}{\delta}} + \sqrt{ \frac{2\del{\err\del{g;P}-\eta^*}}{\del{1-2\eta}\abs{S}} \log\del{\frac{6}{\delta}} } \\
    &\leq \frac{\epsilon}{48} + \sqrt{ \frac{\epsilon}{48} \del{ \err\del{g;P}-\eta^* } } \\
    &\leq \frac{3\epsilon}{96} + \frac{\err\del{g;P}-\eta^*}{2}
\end{align*}
where we used AM-GM in the last inequality. Since $\hat f$ and $S$ are independent, we can then conclude that $\hat f$ and $f^*$ are close in empirical error: with probability $\geq1-2\delta/3$,
\begin{align*}
    \abs{ \emperr\del{\hat f;S}-\emperr\del{f^*;S} } \leq \frac{3\epsilon}{96} + \frac{3}{2} \del{\err\del{\hat f;P} - \eta^*} \leq \frac{3\epsilon}{48}
\end{align*}
Let us take a union bound of this event with the Bernstein bound above on our function of interest $f$, so that they simultaneously occur with probability $\geq1-\delta$. Under this event, we then have that
\begin{align*}
    &\abs{\emperr\del{f;S}-\emperr\del{\hat f;S}}\leq\frac{\epsilon}{3} \\
    &\hspace{1cm} \Longrightarrow \abs{\emperr\del{f;S}-\emperr\del{f^*;S}} \leq \abs{\emperr\del{f;S}-\emperr\del{\hat f;S}} + \abs{\emperr\del{\hat f;S}-\emperr\del{f^*;S}} \leq \frac{19\epsilon}{48} \\
    &\hspace{1cm} \Longrightarrow \err\del{f;P} - \eta^* \leq \frac{3\epsilon}{48} + 2\abs{\emperr\del{f;S}-\emperr\del{f^*;S}} \leq \frac{41\epsilon}{48}
\end{align*}
and
\begin{align*}
    &\err\del{f;P}-\eta^*\leq\frac{\epsilon}{12} \\
    &\hspace{1cm} \Longrightarrow \abs{\emperr\del{f;S}-\emperr\del{f^*;S}} \leq \frac{3\epsilon}{96} + \frac{3}{2}\del{\err\del{f;P}-\eta^*} \leq \frac{5\epsilon}{32} \\
    &\hspace{1cm} \Longrightarrow \abs{\emperr\del{f;S}-\emperr\del{\hat f;S}} \leq \abs{\emperr\del{f;S}-\emperr\del{f^*;S}} + \abs{\emperr\del{f^*;S}-\emperr\del{\hat f;S}} \leq \frac{21\epsilon}{96}
\end{align*}
\subsection{Proof of \Cref{thm:SHT-ub}}
When $d\geq1/\epsilon$, \Cref{lem:sht-test-emp-err} immediately implies correctness of the decision, and the sample size is
\begin{align*}
    T_\Csf\del{\epsilon,\delta} = O\del{\frac{\log\del{1/\delta}}{\epsilon^2}}
\end{align*}

Now, suppose that $d<1/\epsilon$. With $T_\SL\del{1/4,\epsilon/48,\delta/3}$ samples, we know that ERM guarantees $\err\del{\hat f;P}\leq1/4+\epsilon/48$ with probability at least $1-\delta/3$. Then, correctness of our decision follows from \Cref{lem:sht-learn-to-test}. In this case, the sample size obtained is
\begin{align*}
    T_\SL\del{ \frac{1}{4}, \frac{\epsilon}{48}, \frac{\delta}{3} } + T_\Lsf\del{\frac{1}{4},\epsilon,\delta} = O\del{ \frac{d}{\epsilon} \log\del{\frac{1}{\delta}} }
\end{align*}
Combining both conditions yields the desired sample complexity.
\subsection{Proof of \Cref{thm:SHT-lb}}
To prove the hardness of~\eqref{eq:SHT}, we first construct a preliminary base testing problem and establish a lower bound for it.

\begin{theorem}
\label{thm:SHT-lb-prelim}
Let $d\in\Nb$ and $\epsilon\in\del{0,1}$ be such that $d\epsilon\leq1$. Define a random variable $X\in\llbr{d^2}$ with PMF
\begin{align*}
    p_X\del{x} = \begin{dcases}
        1-d\epsilon, & x=0 \\
        \epsilon/d, & x\in\sbr{d^2}
    \end{dcases}
\end{align*}
In addition, let $f^*:\llbr{d^2}\to\cbr{0,1}$ be an underlying function, and $Y\in\cbr{0,1}$ be such that
\begin{align*}
    \Pb\del{f^*\del{X}\neq Y\vert X=x} = 1/4 \quad\forall x\in\llbr{d^2}
\end{align*}
We sample i.i.d. pairs $\del{X,Y}$ and must ensure that:
\begin{equation}
\tag{SHT-Base}\label{eq:SHT-Base}
\begin{aligned}
    &\text{$H_0$: If $f^*=f_0$, output $\Ysf$ with probability at least $3/4$.} \\[6pt]
    &\text{$H_1$: If $f^*$ equals $1$ precisely on a size-$d$ subset, output $\Nsf$ with probability at least $3/4$.}
\end{aligned}
\end{equation}
This problem requires a sample size of
\begin{align*}
    T \geq \frac{3d}{4\epsilon}\log\del{1+\log 2}
\end{align*}
\end{theorem}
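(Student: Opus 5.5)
We use Ingster's second-moment method, i.e.\ a chi-square bound against a uniform prior over the support of $f^*$. Let $P_0^{\otimes T}$ be the law of $T$ samples under $H_0$. Let $\bar P_1^{\otimes T} = \Eb_R\sbr{P_R^{\otimes T}}$ be the mixture under $H_1$, where $R$ is a uniformly random size-$d$ subset of $\sbr{d^2}$ and $P_R$ is the distribution with $f^*=\Ib_R$. Any test that succeeds with probability $3/4$ under both hypotheses must satisfy $\TV\del{P_0^{\otimes T},\bar P_1^{\otimes T}}\geq 1/2$. We will bound this total variation distance through $\chi^2\del{\bar P_1^{\otimes T}\,\Vert\, P_0^{\otimes T}}$ (or through $\mathrm{KL}\leq\log\del{1+\chi^2}$ together with a Pinsker/Bretagnolle--Huber type inequality). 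The constant $\log\del{1+\log 2}$ in the target suggests that one shows $\chi^2 < \log 2$ or $1+\chi^2<1+\log 2$, and then converts this to $\TV<1/2$.

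\textbf{Step 1 (likelihood ratio).} The marginal of $X$ is the same under both hypotheses, and conditionally on $X=x$ the label is $\Ber\del{1/4}$ under $P_0$. Under $P_R$ the label is $\Ber\del{3/4}$ if $x\in R$ and $\Ber\del{1/4}$ otherwise. The single-sample likelihood ratio is therefore $L_R\del{x,y} = 1+\Ib\cbr{x\in R}\,g\del{y}$, where $g\del{1}=2$ and $g\del{0}=-2/3$. Note that $\Eb_{P_0}\sbr{g\del{Y}\mid X}=0$ and $\Eb_{P_0}\sbr{g\del{Y}^2}=\frac14\cdot4+\frac34\cdot\frac49=\frac43$.

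\textbf{Step 2 (Ingster's identity).} Let $R,R'$ be independent uniform subsets. Then
\begin{align*}
1+\chi^2 = \Eb_{R,R'}\sbr{\del{\Eb_{P_0}\sbr{L_RL_{R'}}}^T}.
\end{align*}
A direct computation gives $\Eb_{P_0}\sbr{L_RL_{R'}} = 1+\frac43\cdot\frac{\epsilon}{d}\abs{R\cap R'}$, because the cross terms vanish by the mean-zero property of $g$. Using $1+u\leq e^u$, we get
\begin{align*}
1+\chi^2\leq \Eb\sbr{\exp\del{\frac{4T\epsilon}{3d}\abs{R\cap R'}}}.
\end{align*}

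\textbf{Step 3 (overlap moment generating function).} The overlap $\abs{R\cap R'}$ is hypergeometric with $d$ draws from a population of $d^2$ containing $d$ marked items, so its mean is $1$. By Hoeffding's comparison, it is dominated in convex order by $\Bin\del{d,1/d}$. Hence
\begin{align*}
\Eb\sbr{e^{\lambda\abs{R\cap R'}}}\leq\del{1+\frac{e^\lambda-1}{d}}^d\leq\exp\del{e^\lambda-1}, \qquad \lambda=\frac{4T\epsilon}{3d}.
\end{align*}
If $T< \frac{3d}{4\epsilon}\log\del{1+\log 2}$, then $\lambda<\log\del{1+\log2}$, so $e^\lambda-1<\log 2$ and $1+\chi^2<2$, i.e.\ $\chi^2<1$. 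From $\TV\leq\frac12\sqrt{\chi^2}<\frac12$ we get the contradiction with the required $\TV\geq1/2$. This arithmetic reproduces exactly the stated constant, which confirms the plan.

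The main obstacle is Step 2. It requires verifying cleanly that the cross terms vanish in the computation of $\Eb_{P_0}\sbr{L_RL_{R'}}$, and that $X=0$ together with the points outside $R\cup R'$ contribute exactly $1$. A second subtlety is invoking Hoeffding's hypergeometric-versus-binomial convex ordering for the exponential function. Finally, note that the condition $d\epsilon\leq1$ is used only to make the point mass $p_X\del{0}$ nonnegative.
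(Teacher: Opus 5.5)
Your proposal is correct and matches the paper's proof step for step. The shared steps are: the reduction to $\TV\geq 1/2$; Ingster's identity with $\Eb_{P_0}[L_RL_{R'}]=1+\frac{4\epsilon}{3d}|R\cap R'|$; the bound $(1+u)^T\leq e^{uT}$; domination of the hypergeometric MGF by that of $\Bin(d,1/d)$, giving $\exp(e^\lambda-1)$; and $\TV\leq\frac12\sqrt{\chi^2}$. One small inconsistency: your opening guess that one shows $\chi^2<\log 2$ is not what you actually prove, since Step 3 correctly obtains $e^\lambda-1<\log 2$ and hence $\chi^2<1$.
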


\begin{proof}[Proof of \Cref{thm:SHT-lb-prelim}]
Let $P_0^T$ be the joint distribution over $\del{X_t,Y_t}_{t=1}^T$ under $H_0$ and, similarly, $P_R^T$ the joint under $H_1$ and subset $R\subset\sbr{d^2}$ where $f^*$ equals $1$. We also use lower-case $p$ to denote the corresponding PMFs. With abuse of notation, let $\Ysf$ and $\Nsf$ denote the event that a successful algorithm outputs each decision. Let $R$ be chosen uniformly at random from all subsets of size $d$. The learner must ensure that
\begin{align*}
    \frac{3}{2} \leq P_0^T\del{\Ysf} + \Eb_R\sbr{P_R^T\del{\Nsf}} \leq 1 + \TV\del{P_0^T,\Eb_R\sbr{P_R^T}}
\end{align*}
In other words, we need 
\begin{align*}
    \TV\del{P_0^T,\Eb_R\sbr{P_R^T}} \geq \frac{1}{2}
\end{align*}
To upper bound the TV distance, we will work with the $\chi^2$ divergence and apply Ingster's method \cite{Ingster2002-em,Polyanskiy2025-uh}:
\begin{align*}
    \chi^2\del{\Eb_R\sbr{P_R^T}\middle\Vert P_0^T} + 1 = \Eb_{R,R'}\sbr{ G^T\del{R,R'} }
\end{align*}
where $R$ and $R'$ are i.i.d. subsets and 
\begin{align*}
    G\del{R,R'} &\coloneqq \Eb_{\del{X,Y}\sim P_0}\sbr{ \frac{p_R\del{X,Y}}{p_0\del{X,Y}}\cdot \frac{p_{R'}\del{X,Y}}{p_0\del{X,Y}} } \\
    &= \sum_{x\in\llbr{d^2}} p_0\del{x} \sum_{y\in\cbr{0,1}} \frac{p_R\del{y|x} p_{R'}\del{y|x}}{p_0\del{y|x}} \\
    &= 1-d\epsilon + \frac{\epsilon}{d} \sum_{x\in\sbr{d^2}} \underbrace{ \sbr{ \frac{4}{3}p_R\del{0|x}p_{R'}\del{0|x} + 4p_R\del{1|x}p_{R'}\del{1|x} } }_{\eqqcolon\del{\star}}
\end{align*}
To evaluate this sum, let us consider two possibilities:
\begin{itemize}
    \item For $x\notin R\cap R'$, at least one of the conditional PMFs will be $\Ber\del{1/4}$. Let the other one be $\Ber\del{q}$. Then,
    \begin{align*}
        \del{\star} = \frac{4}{3}\cdot\frac{3}{4}\cdot \del{1-p} + 4\cdot\frac{1}{4}\cdot p = 1
    \end{align*}

    \item For $x\in R\cap R'$,
    \begin{align*}
        \del{\star} = \frac{4}{3}\cdot\del{\frac{1}{4}}^2 + 4\cdot\del{\frac{3}{4}}^2 = \frac{7}{3}
    \end{align*}
\end{itemize}
As a result, we obtain
\begin{align*}
    G\del{R,R'} = 1-d\epsilon + \frac{\epsilon}{d}\del{ d^2 - \abs{R\cap R'} + \frac{7}{3}\abs{R\cap R'} } = 1 + \frac{4\epsilon}{3d}\abs{R\cap R'}
\end{align*}
Plugging this back into the $\chi^2$ formula yields
\begin{align*}
    \chi^2\del{\Eb_R\sbr{P_R^T}\middle\Vert P_0^T} + 1 &= \Eb_{R,R'}\sbr{ \del{1 + \frac{4\epsilon}{3d}\abs{R\cap R'}}^T } \leq \Eb_{R,R'}\sbr{ \exp\del{\frac{4\epsilon T}{3d}\abs{R\cap R'}} }
\end{align*}
Next, we observe that the random variable $\abs{R\cap R'}$ follows a hypergeometric distribution: conditioned on $R$, we can see $\abs{R\cap R'}$ as the number of ``successes'' when we draw $\abs{R'}=d$ times without replacement from a population of size $d^2$ that contains exactly $\abs{R}=d$ successes. That is, $\abs{R\cap R'}\vert R \sim \HG\del{N=d^2, K=d, n=d}$, which is independent of $R$, so that $\abs{R\cap R'} \sim \HG\del{N=d^2, K=d, n=d}$. Our goal is then to bound the MGF of this hypergeometric, which is smaller than the MGF of its binomial counterpart: 
\begin{align*}
    \Eb_{R,R'}\sbr{\exp\del{\lambda\abs{R\cap R'}}} &\leq \Eb\sbr{\exp\del{ \lambda\Bin\del{n=d,p=1/d} }} \\
    &= \del{ 1-\frac{1}{d} + \frac{e^\lambda}{d} }^d \\
    &= \exp\del{d\log\del{ 1 + \frac{e^\lambda-1}{d} }} \\
    &\leq \exp\del{e^\lambda-1}
\end{align*}
Finally, we can combine everything to obtain
\begin{align*}
    \TV\del{P_0^T,\Eb_R\sbr{P_R^T}} &\leq \frac{1}{2}\sqrt{\chi^2\del{\Eb_R\sbr{P_R^T}\middle\Vert P_0^T}} \\
    &\leq \frac{1}{2}\sqrt{ \Eb_{R,R'}\sbr{ \exp\del{\frac{4\epsilon T}{3d}\abs{R\cap R'}} } - 1 } \\
    &\leq \frac{1}{2}\sqrt{ \exp\del{\exp\del{\frac{4\epsilon T}{3d}}-1} - 1 }
\end{align*}
Since the TV distance has to be larger than $1/2$, we can then conclude that
\begin{align*}
    T \geq \frac{3d}{4\epsilon}\log\del{1+\log 2}
\end{align*}
\end{proof}

\begin{remark}[External randomness]
The lower bound of \Cref{thm:SHT-lb-prelim} still applies when the learner has access to an external source of randomness; i.e., it applies to randomized learners.
\end{remark}

We can readily prove our original claim using \Cref{thm:SHT-lb-prelim}. Suppose that $d\leq\frac{1}{2\epsilon}$ and let $\Fc\subset\cbr{0,1}^{\llbr{d^2}}$ be the class of $f=f_0$ along with all functions $f$ that equal $1$ precisely on a subset $R\subset\sbr{d^2}$ of size $d$. Note that $\VC\del{\Fc}=d$.

Let $\Ac$ be an~\eqref{eq:SHT} algorithm with sample complexity $T_\Ac$. We will construct a learner for~\eqref{eq:SHT-Base} with parameters $\del{d,2\epsilon}$. To that end, let our data-generating distribution $P$ be according to one of the hypotheses, and suppose that $\Ac$ wishes to test if $f=f_0$ is $\epsilon$-optimal. Its error is given by $\err\del{f;P} = \Pb\del{Y=1}$, so that we can make the following observations:
\begin{itemize}
    \item Under $H_0$, we have that $f=f^*$ and, thus, $\err\del{f;P}=1/4$.

    \item Under $H_1$ and subset $R$, the functions $f$ and $f^*$ disagree precisely on $R$, so that
    \begin{align*}
        \err\del{f;P} = \del{1-2d\epsilon}\cdot\frac{1}{4} + \del{d^2-d}\cdot\frac{2\epsilon}{d}\cdot\frac{1}{4} + d\cdot\frac{2\epsilon}{d}\cdot\frac{3}{4} = \frac{1}{4}+\epsilon
    \end{align*}
\end{itemize}
Hence, if we run $\Ac$ on $T_\Ac\del{\epsilon,1/4}$ samples, we output $\Ysf$ precisely under $H_0$ with probability $3/4$. In other words, we are able to distinguish $H_0$ and $H_1$. The lower bound of \Cref{thm:SHT-lb-prelim} then implies that $\Ac$ requires a sample size of at least
\begin{align*}
    T_\Ac\del{\epsilon,1/4} \geq \frac{3d}{8\epsilon}\log\del{1+\log 2}
\end{align*}
\subsection{Proof of \Cref{thm:MSHT-ub}}

\ifcolt  \fi

This proof is the natural extension of \Cref{thm:SHT-ub} to multiple distributions. Note that we do not require any union bounds due to the objective. When $d\geq1/\epsilon$, we similarly apply \Cref{lem:sht-test-emp-err} to conclude correctness. The sample size is 
\begin{align*}
    k \cdot T_\Csf\del{\epsilon,\delta} = O\del{ \frac{k\log\del{1/\delta}}{\epsilon^2} }
\end{align*}
When $d<1/\epsilon$, \Cref{lem:sht-learn-to-test} along with a union bound over $i\in\sbr{k}$ again ensures correctness, with a sample size of
\begin{align*}
    k\sbr{ T_\SL\del{ \frac{1}{4}, \frac{\epsilon}{48}, \frac{\delta}{3} } + T_\Lsf\del{\frac{1}{4},\epsilon,\delta} } = O\del{ \frac{kd\log\del{1/\delta}}{\epsilon} }
\end{align*}
\subsection{Proof of \Cref{thm:MSHT-lb}}
In the next result, we construct an~\eqref{eq:MSHT} instance that suffers from the $\Omega\del{kd/\epsilon}$ lower bound when $d\leq\frac{1}{2\epsilon}$, thereby proving our desired hardness claim.

\begin{theorem}
\label{thm:MSHT-lb-prelim}
Let $d\in\Nb$ and $\epsilon\in\del{0,1}$ be such that $d\geq8\epsilon$ and $2d\epsilon\leq1$. Define covariate space $\Xc\coloneqq\llbr{kd^2+k-1}$, composed of $k$ disjoint blocks each of size $d^2+1$, and denote the $i$th block by $\Xc_i \coloneqq \cbr{\del{i-1}\del{d^2+1}+x: x\in\llbr{d^2}}$. Let $\Fc$ be the class of binary-valued functions on $\llbr{kd^2+k-1}$ consisting of
\begin{itemize}
    \item The $f=f_0$ function.

    \item All functions $f$ that equal $1$ precisely on a size-$d$ subset of $\Xc_i\backslash\cbr{\del{i-1}\del{d^2+1}}$ (we exclude the first coordinate) for some $i\in\sbr{k}$.
\end{itemize}
Note that this only requires shattering a set of size at most $d$. Now, define distributions $P_1,\dots,P_k$ over $\Xc\times\cbr{0,1}$ with marginal PMFs
\begin{align*}
    P_i\del{X=x} = \begin{dcases}
        1-2d\epsilon, & x=\del{i-1}\del{d^2+1} \\
        2\epsilon/d, & x\in\cbr{\del{i-1}\del{d^2+1}+1,\dots,i\del{d^2+1}-1}
    \end{dcases}
\end{align*} 
and label noise
\begin{align*}
    P_i\del{f^*\del{X}\neq Y\vert X=x} = \frac{1}{4} \quad\forall x\in\Xc, i\in\sbr{k}
\end{align*}
for some unknown $f^*\in\Fc$.

Any~\eqref{eq:MSHT} algorithm $\Ac$, with confidence parameter $\delta=0.01$, that tests the functions $f_1=\dots=f_k=f_0$ on this instance requires a sample size of 
\begin{align*}
    T_\Ac\del{\epsilon,0.01} \geq \frac{0.015kd}{2\epsilon}
\end{align*}
\end{theorem}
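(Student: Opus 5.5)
We reduce from \eqref{eq:SHT-Base} (\Cref{thm:SHT-lb-prelim}) with parameters $(d,2\epsilon)$, by having an \eqref{eq:SHT-Base} learner simulate $\Ac$. The point is that $\Ac$ cannot tell in advance which block carries the signal, so it has to spend $\Omega(d/\epsilon)$ samples on a constant fraction of the blocks.

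\textbf{Step 1: the null run and the light blocks.} Consider first the null instance $H_0$, where $f^*=f_0$ on every block. Let $N_i$ be the (random) number of samples $\Ac$ draws from $P_i$ in this run, so $\sum_i N_i\le T_\Ac$. By Markov's inequality, at least $k/2$ indices satisfy $\Eb_0[N_i]\le 2T_\Ac/k$; call these indices light.

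\textbf{Step 2: the embedded instance.} For a light index $i$ and a uniformly random size-$d$ set $R$ in block $i$, let $H_1^{(i)}$ be the instance with $f^*=1$ exactly on $R$. As in the proof of \Cref{thm:SHT-lb}, under $H_1^{(i)}$ we have $\err(f_0;P_i)=1/4+\epsilon$, so $\Ac$ must output $\Dsf_i=\Nsf$ with probability at least $0.99$. Every other $P_j$ still has error $1/4$, and in fact $P_j$ is identical under $H_0$ and $H_1^{(i)}$ for $j\neq i$. Under $H_0$, $\Ac$ must output $\Dsf_i=\Ysf$ with probability at least $0.99$. Hence $\Ac$, read only through $\Dsf_i$, distinguishes $H_0$ from the $R$-mixture of $H_1^{(i)}$ with constant advantage.

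\textbf{Step 3: the simulating \eqref{eq:SHT-Base} learner.} Given i.i.d.\ samples from an unknown \eqref{eq:SHT-Base} distribution, the learner maps them into block $i$ by the obvious relabelling, which matches $P_i$ in both hypotheses since $2\epsilon/d$ and $1-2d\epsilon$ are the base weights at parameter $2\epsilon$. It generates samples from each $P_j$, $j\neq i$, itself, since those are the known null distributions. It runs $\Ac$, truncates the run after $M=C\cdot 2T_\Ac/k$ samples from block $i$ for a constant $C$ (outputting $\Ysf$ on truncation), and otherwise returns $\Dsf_i$.

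Under $H_0$, truncation happens with probability at most $1/C$ by Markov applied to $N_i$. Under the alternative, truncation only makes the learner answer $\Ysf$, which could hurt. The fix is to compare the two runs on the event that at most $M$ samples are taken. The TV distance between the laws of the truncated transcripts is at most $\TV$ of $M$-sample product laws, because the truncated process is a stopped adaptive process and the data-processing inequality applies. Using \Cref{thm:SHT-lb-prelim}'s bound $\TV\le \tfrac12\sqrt{\exp(\exp(\tfrac{8\epsilon M}{3d})-1)-1}$ in its proof, if $M\lesssim d/\epsilon$ with a small enough constant then this TV is below, say, $0.1$. The acceptance probability of $\Dsf_i=\Ysf\wedge$ not truncated then differs by at most $0.1$ across hypotheses, while it is at least $0.99-1/C$ under $H_0$ and at most $0.01$ under $H_1$. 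Choosing $C$ around $2$ gives a contradiction. We conclude $2CT_\Ac/k\gtrsim d/\epsilon$, i.e.\ $T_\Ac\ge 0.015kd/(2\epsilon)$ after tracking constants.

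The main obstacle is Step 3: handling adaptivity and random stopping so that a budget bounded only in expectation under $H_0$ still yields a TV bound against the mixture alternative. Light indices are defined via $H_0$, and after truncation only the $H_0$ expectation is needed. The assumption $d\ge 8\epsilon$ (and $2d\epsilon\le1$) should enter only in making the numerical constants, such as the floor on $M$ and the $\log(1+\cdot)$ inversion, come out to $0.015$.
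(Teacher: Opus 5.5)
Your proposal is correct and is essentially the paper's argument: you embed an \eqref{eq:SHT-Base} instance at parameter $2\epsilon$ in block $i$, simulate the other blocks from the null, cap the block-$i$ sample count, and contradict \Cref{thm:SHT-lb-prelim}. The differences are only bookkeeping: you pick a light block by averaging and apply the TV bound from that theorem's proof to the event $\cbr{\Dsf_i=\Ysf,\ N_i\le M}$, whereas the paper shows $\Pb_0\del{T_i>0.05d/(2\epsilon)}>0.3$ for every $i$ and uses the theorem as a black box (which is why it outputs $\Ysf$ with probability $0.24$ on truncation); also, to reach $0.015kd/(2\epsilon)$ you should use the full gap $0.99-1/C-0.01=0.48$ (with $C=2$) instead of your illustrative TV threshold $0.1$, which gives roughly $T_\Ac\ge 0.047\,kd/\epsilon$.
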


\begin{proof}[Proof of \Cref{thm:MSHT-lb-prelim}]
Consider the following hypotheses:
\begin{equation*}
\begin{aligned}
    &\text{$H_0$: $f^*=f_0$.} \\[6pt]
    &\text{$H_1$: $f^*$ equals $1$ precisely on a size-$d$ subset of $\Xc_i\backslash\cbr{\del{i-1}\del{d^2+1}}$ for some $i\in\sbr{k}$.}
\end{aligned}
\end{equation*}
As in the proof of \Cref{thm:SHT-lb}, we note that
\begin{itemize}
    \item Under $H_0$, we have that $\err\del{f_i;P_i}=1/4$ for every $i\in\sbr{k}$.

    \item Under $H_1$ and distribution $i$, we have that $\err\del{f_i;P_i}=1/4+\epsilon$ and $\err\del{f_j;P_j}=1/4$ for every $j\neq i$.
\end{itemize}
Hence, $\Ac$ must be able to distinguish both hypotheses. Define $T_i$ to be the number of times that $P_i$ is sampled, which is a random quantity since the algorithm can be adaptive. Let $\Pb_0$ be the probability law under $H_0$.

Fix $i\in\sbr{k}$. We will show, by contradiction, that $T_i\gtrsim d/\epsilon$ with constant probability under $H_0$. Assume to the contrary that
\begin{align*}
    \Pb_0\del{T_i\leq \frac{0.05d}{2\epsilon}} \geq 0.7
\end{align*}
We will construct an~\eqref{eq:SHT-Base} algorithm $\Ac_s$ by simulating $\Ac$. Suppose that we are given sample access to a distribution $P$ according to one of the~\eqref{eq:SHT-Base} hypotheses. Let $p_X = \del{1-2d\epsilon,2\epsilon/d,\dots,2\epsilon/d}$ denote the PMF over $\llbr{d^2}$ from \Cref{thm:SHT-lb-prelim} (with $\epsilon$ scaled by $2$). Consider the following strategy:
\begin{enumerate}
    \item Run $\Ac$ on parameters $\del{\epsilon,\delta} = \del{\epsilon,0.1}$. When it samples distribution $j$,
    \begin{itemize}
        \item If $j=i$, sample $\del{X,Y}\sim P$.
        
        \item If $j\neq i$, sample $X\sim p_X$ and $Y\sim\Ber\del{1/4}$.
    \end{itemize}
    Return the data point $\del{X+\del{j-1}\del{d^2+1}, Y}$. The shift on $X$ ensures that it lives in $\Xc_j$. In other words, we set $f^*$ to $0$ outside of $\Xc_i$ and set $P_i$ to be the unknown~\eqref{eq:SHT-Base} instance. Hence, $H_0$ and $H_1$ in both coincide.

    \item We terminate whenever the first of the following occurs:
    \begin{itemize}
        \item If $\Ac$ has sampled distribution $i$ more than $\frac{0.05d}{2\epsilon}$ times, output $\Ysf$ with probability $0.24$.

        \item If $\Ac$ terminates, output $\Dsf_i$.
    \end{itemize}
\end{enumerate}
Note that this process requires at most $\frac{0.05d}{2\epsilon}+1\leq\frac{0.3d}{2\epsilon}$ (since $d\geq8\epsilon$ by assumption) samples from $P$, which beats the lower bound of \Cref{thm:SHT-lb-prelim}. Nevertheless, under the chosen parameters, $\Ac_s$ indeed succeeds. In what follows, we use $\Pb_0$ and $\Pb_1$ to denote the probability law under $H_0$ and $H_1$, respectively. When the probability does not depend on the instance, we drop the subscript. Next, we analyze the output of $\Ac_s$ under each hypothesis.

\paragraph{\underline{$\bm{H_0}$}:}
Suppose that the null hypothesis $H_0$ is true. Then,
\begin{align*}
    \Pb_0\del{\Ac_s=\Ysf} &= \Pb_0\del{\cbr{T_i>\frac{0.05d}{2\epsilon}}\cap\cbr{\Ac_s=\Ysf}} + \Pb_0\del{\cbr{T_i\leq\frac{0.05d}{2\epsilon}}\cap\cbr{\Dsf_i=\Ysf}} \\
    &= \Pb_0\del{T_i>\frac{0.05d}{2\epsilon}}\Pb\del{\Ac_s=\Ysf\middle\vert T_i>\frac{0.05d}{2\epsilon}} \\
    &\hspace{2cm} + \Pb_0\del{\Dsf_i=\Ysf} - \Pb_0\del{\cbr{T_i>\frac{0.05d}{2\epsilon}}\cap\cbr{\Dsf_i=\Ysf}} \\
    &\geq \Pb_0\del{T_i>\frac{0.05d}{2\epsilon}}\Pb\del{\Ac_s=\Ysf\middle\vert T_i>\frac{0.05d}{2\epsilon}} + \Pb_0\del{\Dsf_i=\Ysf} - \Pb_0\del{T_i>\frac{0.05d}{2\epsilon}} \\
    &= \Pb_0\del{\Dsf_i=\Ysf} - \Pb_0\del{T_i>\frac{0.05d}{2\epsilon}} \Pb\del{\Ac_s=\Nsf\middle\vert T_i>\frac{0.05d}{2\epsilon}} \\
    &\geq 0.99 - 0.3\cdot0.76 \\
    &= 0.762
\end{align*}

\paragraph{\underline{$\bm{H_1}$}:}
Suppose that the alternative hypothesis $H_1$ is true. Then,
\begin{align*}
    \Pb_{1}\del{\Ac_s=\Nsf} &= \Pb_{1}\del{\cbr{T_i>\frac{0.05d}{2\epsilon}}\cap\cbr{\Ac_s=\Nsf}} + \Pb_{1}\del{\cbr{T_i\leq\frac{0.05d}{2\epsilon}}\cap\cbr{\Dsf_i=\Nsf}} \\
    &= \Pb_{1}\del{T_i>\frac{0.05d}{2\epsilon}}\Pb\del{\Ac_s=\Nsf\middle\vert T_i>\frac{0.05d}{2\epsilon}} \\
    &\hspace{2cm} + \Pb_{1}\del{\Dsf_i=\Nsf} - \Pb_{1}\del{\cbr{T_i>\frac{0.05d}{2\epsilon}}\cap\cbr{\Dsf_i=\Nsf}} \\
    &\geq \Pb_{1}\del{T_i>\frac{0.05d}{2\epsilon}}\Pb\del{\Ac_s=\Nsf\middle\vert T_i>\frac{0.05d}{2\epsilon}} + \Pb_{1}\del{\Dsf_i=\Nsf} - \Pb_{1}\del{T_i>\frac{0.05d}{2\epsilon}} \\
    &= \Pb_{1}\del{\Dsf_i=\Nsf} - \Pb_{1}\del{T_i>\frac{0.05d}{2\epsilon}} \Pb\del{\Ac_s=\Ysf\middle\vert T_i>\frac{0.05d}{2\epsilon}} \\
    &\geq 0.99 - 0.24 \\
    &= 0.75
\end{align*}

Note that we only required $\Dsf_i$ succeeding with high probability; this allows us to avoid union bounds over $i\in\sbr{k}$. Since $\Ac_s$ always succeeds with a sample size smaller than the lower bound of \Cref{thm:SHT-lb-prelim}, we have thus shown that
\begin{align*}
    \Pb_0\del{T_i> \frac{0.05d}{2\epsilon}} > 0.3
\end{align*}
which we can readily convert into an in-expectation bound:
\begin{align*}
    \Eb_0\sbr{T_i} \geq \Eb_0\sbr{T_i\Ib\cbr{T_i>\frac{0.05d}{2\epsilon}}} \geq \frac{0.05d}{2\epsilon} \Pb_0\del{T_i>\frac{0.05d}{2\epsilon}} > \frac{0.015d}{2\epsilon}
\end{align*}
Since this must hold for every $i\in\sbr{k}$, we can conclude that
\begin{align*}
    T_\Ac\del{\epsilon,0.01} \geq \Eb_0\sbr{\sum_{i=1}^kT_i} \geq \frac{0.015kd}{2\epsilon}
\end{align*}
\end{proof}
\section{Proofs of \Cref{sec:mdl-lb}}

\subsection{Proof of \Cref{lem:MDL-MSHT-ub}}
Correctness of $\Ac$ implies that with probability at least $1-\delta/3$, for all $i\in\sbr{k}$, $\err\del{\hat f_i;P_i}\leq 1/4+\epsilon/48$. \Cref{lem:sht-learn-to-test} then shows that $\Dsf_i$ is correct with probability $1-\delta$, for each $i\in\sbr{k}$.
\subsection{Proof of \Cref{thm:MDL-lb}}
The $\Omega\del{d/\epsilon}$ term follows from the single-distribution learning lower bound (e.g., see Theorem 6.8 of \cite{Shalev-Shwartz2014-zs}). For the second term, we can apply \Cref{lem:MDL-MSHT-ub} to an MDL algorithm $\Ac$ and the~\eqref{eq:MSHT} lower bound of \Cref{thm:MSHT-lb} to conclude that
\begin{align*}
    T_\Ac\del{\epsilon,\frac{0.01}{3}} + \frac{4k\log\del{600}}{\epsilon} = T_{\Ac\to\MSHT}\del{48\epsilon,0.01} \geq \frac{0.015k}{9216} \min\cbr{ \frac{1}{\epsilon^2}, \frac{d}{\epsilon} }
\end{align*}
Rearranging under the lower bound on $\min\cbr{d,1/\epsilon}$ then yields the claim.
\section{Proofs of \Cref{sec:mdl-mass}}
In this section, we prove the~\eqref{eq:MDL-Mass} lower bound of $\Omega\del{k\sqrt{d}/\epsilon}$. We begin by establishing a preliminary lower bound for the testing problem~\eqref{eq:SHT-mass}, and then show how it implies the~\eqref{eq:MDL-Mass} lower bound.

\subsection{Proof of \Cref{thm:SHT-mass-lb}}
We will apply the technique of \textit{Poissonization}: instead of sampling a deterministic number of times $T$, we will sample $N\sim \Pois\del{T}$ pairs $\del{X_t,Y_t}$ and must ensure the same guarantee as~\eqref{eq:SHT-mass}. We will first show a lower bound for the Poissonized variant and subsequently relate it back to the original setup.

Let us define the count of $\del{x,1}$ and $\del{x,0}$ observations:
\begin{align*}
    N_x \coloneqq \sum_{t=1}^N \Ib\cbr{X_t=x, Y_t=1} \quad\text{and}\quad M_x \coloneqq \sum_{t=1}^N \Ib\cbr{X_t=x, Y_t=0}
\end{align*}
We will work with random $\qbf$, independent of $N$, in which case we can ensure that
\begin{align*}
    N_x\vert \qbf \sim \Pois\del{Tp_X\del{x}q_x} \quad\text{and}\quad M_x\vert \qbf \sim \Pois\del{Tp_X\del{x}\del{1-q_x}}
\end{align*}
and are independent conditional on $\qbf$. Also, note that $N_x+M_x = \sum_{t=1}^N \Ib\cbr{X_t=x} \sim \Pois\del{Tp_X\del{x}}$ and is independent of $\qbf$.

The next step is to construct appropriate biases for each hypothesis. For $H_0$, we will simply define the constant vector $\qbf_0 \coloneqq \del{0.465,\dots,0.465}$. For $H_1$, we define the prior $\mu_1 \coloneqq 0.75 \cdot \Ber\del{0.62}$ and set $\qbf_1 \sim \delta_{0.465}\times \mu_1^d$. In other words, the $0$th coordinate is always $0.465$, and the rest are sampled i.i.d. from $\mu_1$. One important consequence of this construction is that the first moments match:
\begin{align*}
    \Eb_{q\sim\mu_1}\sbr{q} = 0.75\cdot0.62 = 0.465 = \Eb_{q\sim\mu_0}\sbr{q}
\end{align*}
where $\mu_0 \coloneqq \delta_{0.465}$.

While $\qbf_0$ falls under $H_0$ with probability $1$, we can only ensure a high-probability guarantee for $\qbf_1$. Since
\begin{align*}
    \Eb\sbr{\Delta_{\qbf_1}} = \frac{\epsilon}{d}\cdot d\cdot0.62\cdot0.5 = 0.31\epsilon
\end{align*}
we can apply Hoeffding's inequality (note that $\Delta_{\qbf_1}\in\sbr{0,\epsilon/2}$) to conclude that
\begin{align*}
    \Pb\del{ \Delta_{\qbf_1} \geq 0.3\epsilon } &= \Pb\del{ \Delta_{\qbf_1}-\Eb\sbr{\Delta_{\qbf_1}} \geq -0.01\epsilon } \geq 1-\exp\del{-0.0008d} \geq \frac{3}{4}
\end{align*}
Under hypothesis $h\in\cbr{0,1}$, our decision is based on observations $O_h \coloneqq \del{N,\del{X_t,Y_t}_{t=1}^N}\sim P_h^O$. Our objective requires that $P_0^O\del{\Ysf} \geq 3/4$, since the bias vector is deterministic under our constructed $H_0$. Furthermore, defining the event $A\coloneqq\cbr{\Delta_{\qbf_1}\geq0.3\epsilon}$, we must also satisfy
\begin{align*}
    P_1^O\del{\Ysf} = \underbrace{ P_1^O\del{\Ysf\vert A} }_{\leq1/4} P_1^O\del{A} + P_1^O\del{\Ysf\middle\vert A^c} \underbrace{ P_1^O\del{A^c} }_{\leq1/4} \leq \frac{1}{2}
\end{align*}
As a consequence, we get the lower bound
\begin{align*}
    \TV\del{P_0^O, P_1^O} \geq P_0^O\del{\Ysf}-P_1^O\del{\Ysf} \geq \frac{1}{4}
\end{align*}
To further bound the TV distance, we instead switch our attention to the counts $C_h \coloneqq \del{M_x,N_x}_{x=0}^d \sim P_h^C$. To see why they suffice, note that $C_h$ is a deterministic function of $O_h$. Moreover, we simply pass $C_h$ through a Markov kernel that is \textit{independent} of $\qbf_h$ to obtain $O_h$ (choose an ordering of the $\del{X_t,Y_t}$ uniformly at random). This implies that
\begin{align*}
    \TV\del{P_0^O, P_1^O} = \TV\del{P_0^C, P_1^C}
\end{align*}
Next, we upper bound the right-hand side.

\begin{lemma}
\label{lem:TV-count-ub}
We have that
\begin{align*}
    \TV\del{P_0^C, P_1^C} \leq \frac{T^2\epsilon^2}{2d}
\end{align*}
\end{lemma}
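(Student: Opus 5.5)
Under Poissonization the counts factorize across coordinates, so we reduce to a per-coordinate comparison and then apply subadditivity of TV over product measures. Conditional on $\qbf$, the pairs $(N_x,M_x)$ are independent across $x\in\llbr{d}$. Under both hypotheses the biases $q_x$ are independent across $x$: they are deterministic under $H_0$, and i.i.d. from $\mu_1$ for $x\in\sbr{d}$ under $H_1$. Hence $P_h^C$ is a product over $x$ of the laws of $(N_x,M_x)$. The $x=0$ factor is identical under both hypotheses, since $q_0=0.465$ in both. Subadditivity then gives
\begin{align*}
    \TV\del{P_0^C,P_1^C} \leq \sum_{x=1}^d \TV\del{Q_0,Q_1} = d\cdot\TV\del{Q_0,Q_1},
\end{align*}
where $Q_h$ is the law of a single pair $(N_x,M_x)$ with $x\in\sbr{d}$ and $\lambda\coloneqq T\epsilon/d$. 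It therefore suffices to show $\TV\del{Q_0,Q_1}\leq \lambda^2/2$.

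To bound the single-coordinate distance, write $L=N_x+M_x\sim\Pois\del{\lambda}$. This total count has the same law under both hypotheses and is independent of $q_x$. Given $L=\ell$ and $q_x=q$, we have $N_x\sim\Bin\del{\ell,q}$. So $Q_h$ is the mixture over $\ell$ of the laws of $N_x$ given $\ell$, each averaged over $q\sim\mu_h$.
\begin{itemize}
    \item For $\ell=0$ the conditional laws coincide trivially.
    \item For $\ell=1$, $N_x\sim\Ber\del{\Eb_{\mu_h}[q]}$. The first moments match, since both equal $0.465$, so these laws also coincide.
    \item For $\ell\geq2$ we bound the conditional TV crudely by $1$.
\end{itemize}
Since the $L$-marginals agree, TV of the joint laws equals the $L$-average of the conditional TVs. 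Therefore
\begin{align*}
    \TV\del{Q_0,Q_1} \leq \Pb\del{L\geq2} = 1-e^{-\lambda}-\lambda e^{-\lambda} \leq \frac{\lambda^2}{2}.
\end{align*}
The last inequality is the standard bound $1-e^{-\lambda}(1+\lambda)\leq\lambda^2/2$, which follows because the derivative of the left side, $\lambda e^{-\lambda}$, is at most $\lambda$.

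Combining the two steps gives $\TV\del{P_0^C,P_1^C}\leq d\cdot\frac{1}{2}\del{\frac{T\epsilon}{d}}^2=\frac{T^2\epsilon^2}{2d}$, as claimed.

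The only delicate point is justifying the product structure under $H_1$. This needs the $q_x$ to be i.i.d. and independent of $N$, and Poissonization to make the counts conditionally independent given $\qbf$. Both hold by construction. Once the product structure is in place, first-moment matching removes every contribution except collisions, i.e. coordinates with $L\geq 2$, which is exactly the intended $\sqrt d$ mechanism.
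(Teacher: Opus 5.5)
Your proof is correct and follows essentially the same route as the paper. Both arguments use the product structure and subadditivity to reduce to $d\cdot\TV$ of a single coordinate. Both then use first-moment matching to make the laws agree on every outcome with total count at most one, and finish with $\Pb(\Pois(\lambda)\geq 2)\leq\lambda^2/2$. The only difference is cosmetic: you condition on the total count $L$ and use the binomial representation, whereas the paper evaluates $P_q(n,m)$ directly at $(0,0),(1,0),(0,1)$ and bounds the remaining mass, which amounts to the same computation.
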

\begin{proof}[Proof of \Cref{lem:TV-count-ub}]
We begin with the observation that the coordinates of $C_h$ are independent, and the $0$th coordinate of both $C_0$ and $C_1$ is the same by construction. Let $\tilde C_h\sim P_h^{\tilde C}$ denote coordinates $\sbr{d}$ of $C_h$, so that $\TV\del{P_0^C, P_1^C} = \TV\del{P_0^{\tilde C},P_1^{\tilde C}}$. To bound this, we note that $\tilde C_h \iid P_{\mu_h}$, where
\begin{align*}
    P_q \coloneqq \Pois\del{\frac{T\epsilon q}{d}} \times \Pois\del{\frac{T\epsilon\del{1-q}}{d}} \quad\text{and}\quad P_{q\sim\mu} \coloneqq \Eb_\mu\sbr{P_q}
\end{align*}
Using subadditivity of TV, we then have that $\TV\del{P_0^{\tilde C},P_1^{\tilde C}} \leq d \TV\del{P_{\mu_0},P_{\mu_1}}$. To upper bound the right-hand side, let $\lambda=T\epsilon/d$ and note that
\begin{align*}
    P_q\del{n,m} = \del{\frac{\del{\lambda q}^ne^{-\lambda q}}{n!}} \del{\frac{\del{\lambda\del{1-q}}^me^{-\lambda \del{1-q}}}{m!}} = \frac{\lambda^{n+m} q^n\del{1-q}^m e^{-\lambda}}{n!m!}
\end{align*}
In particular,
\begin{align*}
    P_q\del{0,0} = e^{-\lambda}, \quad P_q\del{1,0} = \lambda e^{-\lambda}q, \quad P_q\del{0,1} = \lambda e^{-\lambda}\del{1-q}
\end{align*}
Since $\Eb_{\mu_0}\sbr{q}=\Eb_{\mu_1}\sbr{q}$, we then know that
\begin{align*}
    P_{\mu_0}\del{n,m} = P_{\mu_1}\del{n,m} \quad\forall \del{n,m}\in\cbr{\del{0,0},\del{1,0},\del{0,1}}
\end{align*}
As a result, we get that
\begin{align*}
    \TV\del{ P_{\mu_0}, P_{\mu_1} } &= \frac{1}{2} \sum_{n+m\geq 2} \abs{ P_{\mu_0}\del{n,m} - P_{\mu_1}\del{n,m} } \\
    &\leq \frac{1}{2} \sbr{ P_{\mu_0}\del{N+M\geq 2} + P_{\mu_1}\del{N+M\geq 2} } \\
    &= \Pb\del{\Pois\del{\lambda}\geq 2}
\end{align*}
where in the last line we used the fact that $N+M\sim\Pois\del{\lambda}$ is independent of $q$. Lastly, we must bound the Poisson tail above. To start, note that
\begin{align*}
    \Pb\del{\Pois\del{\lambda}\geq 2} = 1 - \Pb\del{\Pois\del{\lambda}=0} - \Pb\del{\Pois\del{\lambda}=1} = 1 - e^{-\lambda}\del{1+\lambda}
\end{align*}
Define $g\del{\lambda}\coloneqq e^{-\lambda}\del{1+\lambda}$ and note that $g'\del{\lambda}=-\lambda e^{-\lambda}$. Then,
\begin{align*}
    1 - e^{-\lambda}\del{1+\lambda} &= g\del{0} - g\del{\lambda} = \int_{0}^\lambda -g'\del{t} dt = \int_0^\lambda te^{-t} dt \leq \int_0^\lambda t dt = \frac{\lambda^2}{2}
\end{align*}
Putting everything together, we conclude that
\begin{align*}
    \TV\del{P_0^C, P_1^C} &\leq \frac{d\lambda^2}{2} = \frac{T^2\epsilon^2}{2d}
\end{align*}
\end{proof}

From \Cref{lem:TV-count-ub}, we can establish a lower bound on the Poissonized setting:
\begin{align*}
    \frac{T^2\epsilon^2}{2d} \geq \frac{1}{4} \Longrightarrow T \geq \frac{\sqrt{d}}{\sqrt{2}\epsilon}
\end{align*}
To obtain a similar bound for our original problem, we rely on the following Poisson tail bound~\cite{Mitzenmacher2005Probability}: for $\lambda\geq 12\log\del{2/\delta}$,
\begin{align*}
    \Pb\del{ \Pois\del{\lambda} \in\sbr{\frac{\lambda}{2}, \frac{3\lambda}{2}}} \geq 1-\delta
\end{align*}
Suppose that a tester $\Ac$ solves~\eqref{eq:SHT-mass} with at most $T$ samples. Consider the following strategy $\Ac_P$ for the Poissonized variant:
\begin{itemize}
    \item Sample $N\sim\Pois\del{2T'}$, where $T' = \max\cbr{T,14}$.
    \item If $N\geq T$, run $\Ac$ on the first $T$ samples and output its answer. Otherwise, decide based on a fair coin flip.
\end{itemize}
Since $2T' \geq 28 \geq 12\log\del{2/0.2}$, the Poisson tail bound implies that
\begin{align*}
    \Pb\del{N\geq T} \geq \Pb\del{ N \in \sbr{T',3T'} } \geq 0.8
\end{align*}
Let $\Pb_h$ denote the probability measure under any instance of hypothesis $h\in\cbr{0,1}$, and define the event $B\coloneqq\cbr{N\geq T}$. Then,
\begin{align*}
    \Pb_0\del{\Ac_P=\Ysf} &= \Pb_0\del{\Ac=\Ysf\vert B}\Pb\del{B} + \Pb_0\del{\Ac_P=\Ysf\middle\vert B^c}\Pb\del{B^c} \\
    &= \Pb_0\del{\Ac_P=\Ysf\middle\vert B^c} + \Pb\del{B}\del{\Pb_0\del{\Ac=\Ysf\vert B} - \Pb_0\del{\Ac_P=\Ysf\middle\vert B^c}} \\
    &\geq 0.5 + 0.8\del{0.9-0.5} \\
    &\geq 0.75
\end{align*}
By a symmetric argument, $\Pb_0\del{\Ac_P=\Nsf}\geq0.75$. In other words, we solve the Poissonized problem and its lower bound implies that
\begin{align*}
    2T' \geq \frac{\sqrt{d}}{\sqrt{2}\epsilon} \Longrightarrow T \geq \frac{\sqrt{d}}{2\sqrt{2}\epsilon}
\end{align*}
where we used the assumption $d\geq1750$, so that $\frac{\sqrt{d}}{\sqrt{2}\epsilon}>28$ and, thus, $T'=T$.

\subsection{Proof of \Cref{thm:MDL-Mass-lb}}
To start, we note that our construction requires that $f^*$ always equals $0$ on the first coordinate of each $\Xc_i$. Then, if we apply \Cref{lem:noisy-class-exc-err} on $f_0$, we can see that
\begin{align*}
    \Delta_i \coloneqq \err\del{f_0;P_i}-\eta_i^* = \sum_{x\in\Xc:f^*\del{x}=1} \del{1-2\eta_i^*\del{x}} P_i\del{X=x} = \frac{\epsilon}{d} \sum_{x\in\Xc_i:q_x^i\geq 1/2} \del{2q_x^i-1}
\end{align*}
where $q_x^i \coloneqq P_i\del{Y=1\vert X=x}$.

Consider the setting where $f^*=f_0$ and $\eta_i^*\del{x} = q_x^i = 0.465$ for all $x\in\Xc$ and $i\in\sbr{k}$. Let $\Pb_0$ denote the probability law under this environment. We will show that any successful~\eqref{eq:MDL-Mass} algorithm must sample $\Omega\del{\sqrt{d}/\epsilon}$ times from each distribution under $\Pb_0$ with high probability.

To do this, we will solve an \eqref{eq:SHT-mass} hard instance $P$ by simulating a~\eqref{eq:MDL-Mass} algorithm $\Ac$. Here, $P$ is a distribution over $\del{X,Y}$ where $p_X=\del{1-\epsilon, \epsilon/d,\dots,\epsilon/d}$ on $\llbr{d}$ and $Y\vert X=x\sim\Ber\del{q_x}$. Recall our testing objective:
\begin{itemize}
    \item $H_0$: $q_x=0.465$ for all $x\in\llbr{d}$.
    \item $H_1$: $\Delta_\qbf = \frac{\epsilon}{d} \sum_{x\in\llbr{d}:q_x\geq1/2} \del{2q_x-1} \geq0.3\epsilon$.
\end{itemize}
Fix some $i\in\sbr{k}$ and let $T_i$ be number of times that $\Ac$ samples $P_i$. We will show, by contradiction, that $T_i\gtrsim \sqrt{d}/\epsilon$ with high probability under $\Pb_0$. Let $B_i\coloneqq \cbr{T_i\leq 0.05\sqrt{d}/\epsilon}$ and assume to the contrary that 
\begin{align*}
    \Pb_0\del{B_i} \geq 0.85
\end{align*}
We will construct an~\eqref{eq:SHT-mass} $\Ac_s$ strategy as follows:
\begin{enumerate}
    \item Run $\Ac$ on parameters $\del{\eta,\epsilon,\delta} = \del{0.49,\epsilon/192,0.1/3}$. When it samples distribution $j\in\sbr{k}$,
    \begin{itemize}
        \item If $j=i$, sample $\del{X,Y}\sim P$.

        \item If $j\neq i$, sample $X\sim p_X$ and $Y\sim\Ber\del{0.465}$.
    \end{itemize}
    To ensure that $X\in\Xc_j$, we shift $X$ by $\del{j-1}\del{d+1}$ before returning it to $\Ac$. In other words, we set $P_i$ to $P$, with an appropriate shift, and set $f^*$ to $0$ outside of $\Xc_i$. By construction, this aligns with our MDL setup.

    \item Terminate when the first of the following occurs:
    \begin{itemize}
        \item If $T_i$ exceeds $0.05\sqrt{d}/\epsilon$, output $\Nsf$.

        \item If $\Ac$ terminates and outputs $\hat f_i$, sample $S\iid P$ of size 
        \begin{align*}
            \abs{S} = T_\Lsf\del{0.49,\epsilon/4,0.1} = \frac{384}{0.02\epsilon}\log\del{60} \leq \frac{80000}{\epsilon}
        \end{align*}
        shift the $X$'s appropriately, and output $\Ysf$ if and only if event
        \begin{align*}
            E_i \coloneqq \cbr{ \abs{ \emperr\del{f_0;S}-\emperr\del{\hat f_i;S} } \leq \frac{\epsilon}{12} }
        \end{align*}
        occurs.
    \end{itemize}
\end{enumerate}
Note that the suboptimality gaps coincide: $\Delta_i = \Delta_\qbf$. In addition, correctness of $\Ac$ implies, via \Cref{lem:sht-learn-to-test}, that with probability at least $0.9$,
\begin{itemize}
    \item Under $H_0$, we have that $\Delta_i=0$, so that $E_i$ occurs.

    \item Under $H_1$, we have that $\Delta_i\geq0.3\epsilon$, so that $E_i^c$ occurs.
\end{itemize}
Importantly, this statement is \textit{unconditional}; that is, under no assumption of $B_i$ occurring. Let $\Pb_i$ denote the probability law under $H_i$, where we note that $\Pb_0$ coincides with our earlier definition.

\paragraph{\underline{$\bm{H_0}$}:}
\begin{align*}
    \Pb_0\del{\Ac_s=\Ysf} &= \Pb_0\del{B_i \cap E_i} = \Pb_0\del{E_i} - \Pb_0\del{B_i^c \cap E_i} \geq \Pb_0\del{E_i} - \Pb_0\del{B_i^c} \geq 0.75
\end{align*}

\paragraph{\underline{$\bm{H_1}$}:}
\begin{align*}
    \Pb_1\del{\Ac_s=\Nsf} &= \Pb_1\del{B_i^c} + \Pb_1\del{B_i \cap E_i^c} = \Pb_1\del{B_i^c} + \Pb_1\del{E_i^c} - \Pb_1\del{B_i^c \cap E_i^c} \geq \Pb_1\del{E_i^c} \geq 0.9
\end{align*}
That is, we solve~\eqref{eq:SHT-mass} with a sample size of at most
\begin{align*}
    \frac{0.05\sqrt{d}}{\epsilon} + 1 + \frac{80000}{\epsilon} \leq \frac{\sqrt{d}}{2\sqrt{2}\epsilon}
\end{align*}
beating the lower bound of \Cref{thm:SHT-mass-lb}. This contradiction ensures that
\begin{align*}
    \Pb_0\del{T_i>\frac{0.05\sqrt{d}}{\epsilon}} > 0.15 \Longrightarrow \Eb_0\sbr{T_i} \geq \Eb_0\sbr{T_i \Ib\cbr{{T_i>\frac{0.05\sqrt{d}}{\epsilon}}}} \geq \frac{0.0075\sqrt{d}}{\epsilon}
\end{align*}
Since this holds for each $i\in\sbr{k}$, we finally get that
\begin{align*}
    T_\Ac\del{\frac{\epsilon}{192},\frac{0.1}{3}} \geq \Eb_0\sbr{\sum_{i=1}^k T_i} \geq \frac{0.0075k\sqrt{d}}{\epsilon}
\end{align*}

\end{document}